\documentclass[accepted]{uai2026} %
\usepackage[american]{babel}
\usepackage{natbib} 
\renewcommand{\bibsection}{\subsubsection*{References}}
\usepackage{mathtools} 
\usepackage{booktabs} 
\usepackage{tikz} 

\title{Capacity and Redundancy Trade-offs in Multi-Task Learning}
\usepackage{enumitem}
\usepackage{amsmath,amssymb,amsthm,mathtools}
\usepackage{hyperref}
\usepackage[capitalize,noabbrev]{cleveref}
\usepackage{algorithm}
\usepackage{algpseudocode} 
\usepackage{wrapfig}
\usepackage{subcaption}
\usepackage{graphicx,booktabs}
\usepackage{tikz}
\usetikzlibrary{shadows}
\usetikzlibrary{arrows.meta, positioning, calc, shapes.geometric, decorations.pathmorphing, backgrounds, fit}
\usepackage[style=base]{caption}
\DeclareMathOperator{\TC}{TC}

\newcommand{\Dsh}{\Delta_{\mathrm{sh}}}
\newcommand{\Dcl}[1]{\Delta_{#1}}
\newcommand{\TCbetween}{\TC_{\mathrm{between}}}

\newcommand{\tasks}{[T]}

\usepackage{pgfplots}
\pgfplotsset{compat=1.17}
\definecolor{grailBlue}{RGB}{50, 100, 160}
\definecolor{grailRed}{RGB}{200, 60, 60}
\definecolor{grailTeal}{RGB}{0, 130, 130}
\definecolor{grailGray}{RGB}{80, 80, 90}
\definecolor{rebuttalBlue}{RGB}{0, 76, 153}
\newif\ifshowrebuttalchanges
\showrebuttalchangestrue

\hypersetup{colorlinks, citecolor=blue, linkcolor=red, urlcolor=blue}
\theoremstyle{plain}
\newtheorem{theorem}{Theorem}[section]
\newtheorem{proposition}[theorem]{Proposition}
\newtheorem{lemma}[theorem]{Lemma}
\newtheorem{corollary}[theorem]{Corollary}
\theoremstyle{definition}
\newtheorem{definition}[theorem]{Definition}

\theoremstyle{remark}

\author[1]{\href{mailto:<asif.khan@hms.harvard.edu>?Subject=Your UAI 2026 paper}{Asif Khan}{}}
\affil[1]{%
Harvard Medical School, Boston, USA
}
\begin{document}
\maketitle
\begin{abstract}
In multi-task learning (MTL) negative transfer is often considered as an optimization artifact, but it can also be viewed as a consequence of limited shared capacity and weak task redundancy. We investigate this effect through a Capacity--Redundancy (CR) identity that decomposes the sum of per-task predictive informations into joint predictive information that includes label redundancy defined via total correlation (TC), and a residual coupling term that quantifies interference left unresolved by the shared representation. Additionally, we show two key results: (i) a clustering-gap decomposition that gives a necessary and sufficient condition for clustered sharing to outperform global sharing, and (ii) a gradient--TC bridge in a Gaussian multi-task model that formally justifies gradient cosine similarity as a proxy for redundancy ordering. Empirically, we estimate the residual coupling $\Delta$ from validation residual correlations, showing that clustered LoRA substantially reduces $\widehat{\Delta}$, outperforms size-matched random partitions, and results in statistically significant gains with multi-seed confidence intervals.
\end{abstract}

\section{Introduction}
\label{sec:intro}
A single machine learning model trained to optimize multiple prediction tasks simultaneously is useful for improving efficiency and generalization performance~\citep{caruana1997multitask,ruder2017overview}. In a typical setting, MTL is implemented through a shared encoder with task-specific output heads, which allows information to be transferred across tasks while retaining task specialization. Such a training approach has been successfully applied in several areas including computer vision (CV)~\citep{zhang2014facial,misra2016cross}, natural language processing (NLP)~\citep{collobert2008unified}, and speech recognition \citep{deng2013new}. However, despite these successes, there is a key challenge of negative transfer (when training on additional tasks degrades rather than improving performance)~\citep{wang2019characterizing}.

Recent developments in pre-trained large language models (LLMs) have renewed interest in MTL in the context of parameter-efficient fine-tuning (PEFT). Methods such as adapters \citep{houlsby2019parameter}, prefix-tuning \citep{li2021prefix,ding2023parameter}, and low-rank adaptation (LoRA) \citep{hu2022lora} make it possible to adapt massive models to new tasks with only a small number of additional parameters. A design choice here is to decide when multiple tasks should share the same lightweight module encouraging cross-task transfer, or should each task be assigned its own module to avoid interference? While previous work have explored both ends as well as various hybrids~\citep{he2021towards,karimi2021compacter,mahabadi2021parameter}, our focus is on a theoretical framework that can be used to decide when to share and when to specialize model parameters.

Negative transfer occurs when unrelated tasks share a low-rank subspace, whose  gradient signals may conflict and cause degradation of performance across all tasks~\citep{yu2020gradient,chen2018gradnorm}. Giving each task its own private features can avoid this interference but can reduce efficiency. Hybrid approaches such as task clustering, mixture-of-adapters, or routing via task embeddings~\citep{gururangan2021demix,mudrakarta2018k}, attempt to balance these trade-offs but lack a unifying theoretical justification.  

In this paper, we treat a shared encoder in MTL as a finite-capacity channel, given the budget $I(X;Z_s)\le C_s$ on the shared latent $Z_s$. We then prove a CR inequality that shows the total predictive information a single shared latent can provide across tasks is bounded by its capacity plus the label redundancy. Thus, correlations let the same bits be reused across tasks, while weakly related tasks force competition for capacity and make negative transfer unavoidable. We extend the bound to shared--private representations with per-task budgets and derive conditional variants together with a Bayes-error lower bound that formalize when performance trade-offs cannot be avoided. Finally, we model LoRA as a capacity-constrained channel and show that the CR perspective explains how similarity-based sharing helps when task redundancy is high, whereas low-redundancy task sets require growing effective adapter capacity (rank) or private routes to prevent interference.

In summary, we develop operational limits of sharing features and use it to prescribe rules for explicit capacity budgets. Our key contributions are: (i) We isolate the slack term $\Delta=\TC(Y^{(1:T)}\mid Z_s)$ and interpret it as interference left after conditioning on the shared representation. (ii) We derive an exact decomposition of the gain of clustered sharing over global sharing into interference reduction minus redundancy loss (\cref{thm:clustering_gap}), which provides a necessary and sufficient condition for when clustering should help. (iii) In a Gaussian multi-task model, we prove that gradient cosine similarity preserves the sign and (under matched noise) the ordering of label correlations, and therefore induces the same clusters as total-correlation-based redundancy (\cref{thm:grad_tc_bridge}). (iv) We show the CR bound is achieved with equality when capacity spans the task subspace and quantify the exact gap under rank constraints via $\TC(Y^{(1:T)}\mid Z_s)$. These results make the intuitive capacity-versus-redundancy tradeoff quantitative: \cref{thm:clustering_gap} gives an if-and-only-if sharing rule, the Gaussian specialization identifies when the CR bound is tight, and the LoRA rank bound connects adapter capacity to the effective task subspace dimension.

\section{Related Work}
\label{sec:related}
\textbf{Multi-task learning.}
Early work on MTL utilize a shared-backbone as well as task-specific heads to show consistent gain in performance when tasks are related~\citep{caruana1997multitask}. \citet{collobert2008unified} propose a unified architecture jointly training several NLP sequence tasks. In CV, multi-task feature sharing has been used to exploit structural relatedness between tasks. For example, \citet{zhang2014facial} showed improvement in facial landmark detection when jointly trained with auxiliary tasks such as head pose estimation, gender, and smile classification, since these signals share low-level visual features. Similarly, \citet{misra2016cross} introduced cross-stitch networks, which explicitly learns linear combination of feature maps across tasks, thus allowing the model to adaptively decide how much representation should be shared versus kept task-specific. Beyond deep nets, a large body of regularization approaches formalizes parameter sharing via matrix norms or task relationships \citep{evgeniou2004regularized,ando2005framework,argyriou2008convex}. We refer readers to \citet{ruder2017overview} for a comprehensive survey of MTL.

\textbf{Parameter-efficient fine-tuning and adapters.} Adapter layers are used as a light-weight alternative to full fine-tuning by inserting small bottleneck modules into frozen backbones \citep{houlsby2019parameter}. Subsequent PEFT methods include prefix/prompt tuning \citep{li2021prefix}. \citet{ding2023parameter}, and \citet{he2021towards} provide a much more detailed overviews of PEFT for LLMs. Multi-task PEFT variants allocate a shared adapter or per-task adapters, or combine both through routing, hypernetworks, or mixture strategies \citep{mahabadi2021parameter,karimi2021compacter,mudrakarta2018k,gururangan2021demix}. While empirical heuristics guide which modules to share, principled selection criteria remains limited.

\textbf{Empirical analyses of task relatedness and negative transfer.}
\citet{standley2020tasks} investigate which tasks benefit from being trained together and which cause conflicts, while \citet{zamir2018taskonomy} chart the transferability between tasks through a large-scale task graph. Gradient-based analyses of task conflict and mitigation are widely used in practice~\citep{yu2020gradient,chen2018gradnorm,chai2023getmtl}. GradNorm \citep{chen2018gradnorm}, PCGrad \citep{yu2020gradient} treat interference as a gradient conflict to be resolved step-by-step. This works well in practice but it offers no guarantee that interference can be resolved for a given capacity. Broader studies characterize and attempt to avoid negative transfer \citep{wang2019characterizing}. These methods typically operate at the level of optimization dynamics rather than at the level of distributional limits.

\textbf{Theoretical generalization and representation sharing.}
\citet{baxter2000model} formalized task families and bias learning; \citet{evgeniou2004regularized} analyzed kernelized regularization for MTL; and \citet{maurer2016benefit} show the benefits of learning a shared representation with task-averaged Rademacher complexity bounds. Several other works explicitly partition representations or allocate small task-specific heads to mitigate conflicts \citep{newell2019featurepartition,wang2020smalltowers}. Some other related approaches analyze multi-task trade-offs through multi-objective optimization and controllable Pareto frontiers to show explicit preference control during inference or training \citep{lin2020controllable,momma2022pareto}. \citet{maurer2016benefit} propose a bound on generalization error and show that sharing helps but their bound loosens as tasks become dissimilar without explaining where the capacity goes. In contrast, our focus is on a distribution-level converse on the achievable predictive information through a shared latent which is independent of any estimator. Our work is complementary to adaptive-transfer and selective-sharing mechanism using clustered task structure \citep{kang2011learning,jacob2008clustered}. \citet{denevi2022conditional}, \citet{chua2021fine}, and \citet{tian2025similar} analyze representation or meta-learning gains through estimator-dependent rates and robustness to similar but nonidentical representations, and \citet{hanneke2022no} show that multitask benefit cannot be guaranteed without structural assumptions. CR differs by giving a distribution-level converse on predictive information through a fixed-capacity latent, with the structural condition expressed as capacity versus redundancy.

\textbf{Multi-task information bottleneck.} Multi-task variational information bottleneck (IB)  $I(Z;Y)-\beta I(Z;X)$  formulations optimize multi-task IB objectives ($\sum_t I(Z;Y^{(t)})$) directly \citep{qian2020mtvib} which offers algorithmic objectives but not converses \citep{tishby2000information,alemi2016deep}. Classical results used in our analysis include the independence bound on entropy~\citep{cover1999elements} and its conditional form and the data processing inequality~\citep{cover1999elements}, total correlation \citep{watanabe1960information} as a redundancy measure, and subset entropy inequalities \citep{sun1975linear,madiman2010information}. Our results characterize when sharing is fundamentally limited and when structured sharing (clustering/private capacity) is provably beneficial, independent of a particular training algorithm.

\begin{figure*}[t]
        \centering
        \includegraphics[width=\linewidth]{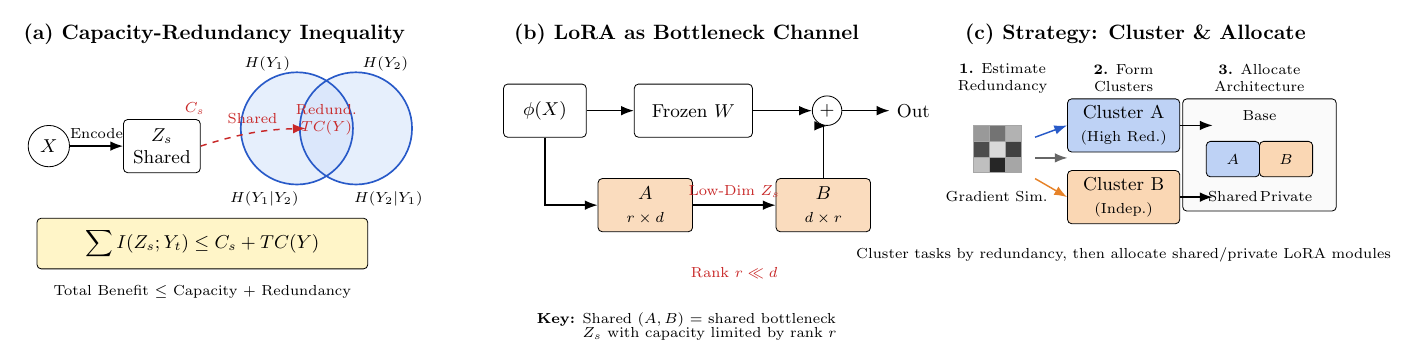}
        \caption{\textbf{Capacity-Redundancy Tradeoff in Multi-Task Low-Rank Adaptation.}
\textbf{(a)} shows task entropies $H(Y_1)$, $H(Y_2)$ with overlap $TC(Y) = I(Y_1; Y_2)$ representing redundancy. The shared bottleneck $Z_s$ (gray) has capacity $C_s$. Blue arrows indicate task-specific information $I(Z_s; Y_t)$.
\textbf{(b)} rank-$r$ matrices $(A,B)$ add to frozen weights $W$, where rank $r \ll d$ limits capacity $C_s$. Sharing $(A,B)$ across tasks creates the shared bottleneck.
\textbf{(c)} estimate redundancy via gradient similarity (step 1), cluster tasks by redundancy (step 2), then allocate shared LoRA modules to redundant clusters and private modules to independent tasks (step 3), optimizing the CR bound.}
\label{fig:placeholder}
\end{figure*}
\section{Capacity--Redundancy Inequality}
\label{sec:cr}
Let $T$ be the number of tasks with labels $Y^{(1)},\dots,Y^{(T)}$ and input features $X$. A shared representation $Z_s=f(X)$ is used across all tasks. Throughout this section, we assume the standard representation setting in which $Y^{(1:T)} \;-\; X \;-\; Z_s$ forms a Markov chain (equivalently, $I(Z_s;Y^{(1:T)}\mid X)=0$), and we impose the shared capacity budget  $I(Z_s;X)\;\le\; C_s.$ We use total correlation as a measure of label redundancy, $TC(Y^{(1:T)}) \;=\; \sum_{t=1}^T H\!\left(Y^{(t)}\right) \;-\; H\!\left(Y^{(1:T)}\right) \;\ge\; 0 .$ $TC(Y^{(1:T)})=0$ iff the labels are mutually independent. The sum of per-task information carried by a shared bottleneck is always bracketed by the joint information plus redundancy. The main idea requires only the chain rule and data processing, so we focus on implication of bound and for proofs/preliminaries we refer readers to Appendix~\ref{sec:supp_notation}. 

\begin{lemma}[Chain-rule sandwich]\label{lem:sandwich}
For random variables $(Z,Y^{(1)},\dots,Y^{(T)})$,
\begin{align}
I\!\left(Z;Y^{(1:T)}\right)
\;\le\;
\sum_{t=1}^T I\!\left(Z;Y^{(t)}\right)
\;\le\; \nonumber\\
I\!\left(Z;Y^{(1:T)}\right) + TC\!\left(Y^{(1:T)}\right).
\end{align}
\label{eq:sandwich}
\end{lemma}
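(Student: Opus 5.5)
The plan is to prove an exact identity first and read both inequalities off it. By definition, $I(Z;Y^{(t)})=H(Y^{(t)})-H(Y^{(t)}\mid Z)$ for each $t$, and $I(Z;Y^{(1:T)})=H(Y^{(1:T)})-H(Y^{(1:T)}\mid Z)$. Summing the first over $t$ and subtracting the second gives
\begin{align*}
\sum_{t=1}^T I\!\left(Z;Y^{(t)}\right) - I\!\left(Z;Y^{(1:T)}\right)
&= \Big[\sum_{t} H\!\left(Y^{(t)}\right) - H\!\left(Y^{(1:T)}\right)\Big] \\
&\quad - \Big[\sum_{t} H\!\left(Y^{(t)}\mid Z\right) - H\!\left(Y^{(1:T)}\mid Z\right)\Big].
\end{align*}
This says that the sum of per-task informations equals $I(Z;Y^{(1:T)}) + TC(Y^{(1:T)}) - TC(Y^{(1:T)}\mid Z)$. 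Here the conditional total correlation $TC(Y^{(1:T)}\mid Z)$ is defined by the same formula as $TC$, with every entropy conditioned on $Z$. This identity is the CR identity, with slack $\Delta = TC(Y^{(1:T)}\mid Z)$. For continuous variables I would run the same computation with differential entropies, or directly with mutual-information chain rules, so that only differences of entropies appear.

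For the right inequality, it suffices to show $TC(Y^{(1:T)}\mid Z)\ge 0$. This is the conditional independence bound $H(Y^{(1:T)}\mid Z)\le \sum_t H(Y^{(t)}\mid Z)$. It follows from the chain rule $H(Y^{(1:T)}\mid Z)=\sum_t H(Y^{(t)}\mid Y^{(1:t-1)},Z)$ together with the fact that conditioning reduces entropy.

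The left inequality is the main obstacle. By the identity, it is equivalent to $TC(Y^{(1:T)}\mid Z)\le TC(Y^{(1:T)})$, and this does \emph{not} hold for arbitrary random variables because of synergy. Take independent fair bits $Y^{(1)},Y^{(2)}$ and $Z=Y^{(1)}\oplus Y^{(2)}$. Then each $I(Z;Y^{(t)})=0$, while $I(Z;Y^{(1:2)})=1$ bit. So as stated for all $(Z,Y^{(1)},\dots,Y^{(T)})$, the lower bound is false.

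In writing the proof I would therefore state the identity and the upper bound unconditionally. I would then make the lower bound conditional on the hypothesis $TC(Y^{(1:T)}\mid Z)\le TC(Y^{(1:T)})$, i.e.\ conditioning on $Z$ does not increase label dependence. I would list sufficient conditions under which this hypothesis holds:
\begin{itemize}
\item $T=1$, where both total correlations vanish;
\item the labels are conditionally independent given $Z$, so that $\Delta=0$;
\item Gaussian settings where $Z$ explains the common factor.
\end{itemize}
I would also flag that downstream results should rely only on the upper bound or on the exact identity.
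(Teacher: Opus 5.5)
Your identity and your proof of the right-hand inequality follow the paper's route exactly. The appendix records the identity \eqref{eq:cr_identity}, and the proof of the CR bound gets the upper half of the sandwich from it via nonnegativity of the conditional total correlation, just as you do. For the left-hand inequality the paper gives no argument. Your XOR example is correct and shows that no argument can exist: $\sum_t I(Z;Y^{(t)})=0<1=I(Z;Y^{(1:2)})$, equivalently $TC(Y^{(1:2)}\mid Z)=1>0=TC(Y^{(1:2)})$. Your repair is the right one: state the identity and upper bound unconditionally, and the lower bound only under $TC(Y^{(1:T)}\mid Z)\le TC(Y^{(1:T)})$.

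Two refinements are worth adding.

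\emph{First}, the counterexample also satisfies the hypotheses of \cref{thm:cr}. Take $X=(Y^{(1)},Y^{(2)})$ and $Z_s=Y^{(1)}\oplus Y^{(2)}$. Since $Z_s$ is a function of $X$, the Markov chain holds, and $I(Z_s;X)=1$ bit. So the left halves of \eqref{eq:cr_sandwich} and \eqref{eq:subset_sandwich} are false too, while \eqref{eq:cr} survives. The same example also refutes the paper's remark in the shared--private section that conditional $TC$ is monotone in the conditioning set. That remark is the only place the paper asserts anything of the form $TC(Y\mid Z)\le TC(Y)$.

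\emph{Second}, your third sufficient condition is wrong as phrased, because Gaussianity alone does not suffice. Use the paper's own linear--Gaussian model with $\Sigma_X=I$, orthogonal tasks $Y^{(t)}=X_t+\varepsilon_t$ for $t=1,2$, and the rank-one encoder $Z_s=X_1+X_2+\eta$. The labels are independent, but given $Z_s$ their conditional covariance is $-1/(2+\sigma_z^2)$. Hence $TC(Y^{(1:2)}\mid Z_s)>0=TC(Y^{(1:2)})$, and the lower bound fails. This happens precisely in the low-redundancy, rank-limited regime the paper is concerned with. What does work is span capture, $Y^{(1:T)}=MZ+\varepsilon$ with independent noise coordinates, and that is a special case of your second condition (conditional independence given $Z$).
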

Next, we combine the sandwich with the Markov chain and the capacity budget to define CR inequality.

\begin{theorem}[CR inequality]\label{thm:cr}
Under the Markov chain and a fixed capacity budget the following bound holds,
\begin{equation}
\sum_{t=1}^T I\!\left(Z_s;Y^{(t)}\right)
\;\le\;
C_s + TC\!\left(Y^{(1:T)}\right).
\label{eq:cr}
\end{equation}
and the canonical sandwich Lemma \ref{eq:sandwich} holds when $Z=Z_s$,
\begin{align}
I\!\left(Z_s;Y^{(1:T)}\right)
\;\le\;
\sum_{t=1}^T I\!\left(Z_s;Y^{(t)}\right)
\;\le\;\nonumber\\
I\!\left(Z_s;Y^{(1:T)}\right)+TC\!\left(Y^{(1:T)}\right).
\label{eq:cr_sandwich}
\end{align}
\end{theorem}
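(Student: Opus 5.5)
The plan is to obtain \eqref{eq:cr_sandwich} by specializing Lemma~\ref{lem:sandwich}, and then derive \eqref{eq:cr} by bounding its right-hand side with the data processing inequality and the capacity budget. Setting $Z=Z_s$ in Lemma~\ref{lem:sandwich} gives \eqref{eq:cr_sandwich} immediately, because the lemma holds for an arbitrary joint law of $(Z,Y^{(1)},\dots,Y^{(T)})$. For completeness, I would recall the identity underlying the sandwich:
\begin{equation*}
\sum_{t=1}^T I\!\left(Z;Y^{(t)}\right) - I\!\left(Z;Y^{(1:T)}\right) = TC\!\left(Y^{(1:T)}\right) - TC\!\left(Y^{(1:T)}\mid Z\right).
\end{equation*}
It follows by expanding each mutual information as $H(\cdot)-H(\cdot\mid Z)$ and regrouping the entropies. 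Both inequalities in the sandwich then follow from $0\le TC(Y^{(1:T)}\mid Z)\le TC(Y^{(1:T)})$.

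The upper inequality above needs $TC(Y^{(1:T)}\mid Z)\le TC(Y^{(1:T)})$, which can fail for general $Z$. This is the step I expect to be the main obstacle. If Lemma~\ref{lem:sandwich} is taken as given, the step is inherited and nothing more is needed. If the lemma's proof is not self-contained, I would first try to obtain the upper bound without the $TC$ comparison: bound $\sum_t I(Z;Y^{(t)})$ directly by $\sum_t H(Y^{(t)}) - H(Y^{(1:T)}\mid Z)$, which reduces to $H(Y^{(1:T)}\mid Z)\le \sum_t H(Y^{(t)}\mid Z)$. That is the conditional independence bound on entropy. I would check carefully that this chain of inequalities yields the stated form.

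For \eqref{eq:cr}, I would chain the upper half of \eqref{eq:cr_sandwich} with a bound on $I(Z_s;Y^{(1:T)})$. The Markov chain $Y^{(1:T)} - X - Z_s$ gives, by data processing, $I(Z_s;Y^{(1:T)})\le I(Z_s;X)$. Concretely, the chain rule gives
\begin{equation*}
I\!\left(Z_s;X,Y^{(1:T)}\right)=I(Z_s;X)+I\!\left(Z_s;Y^{(1:T)}\mid X\right)=I(Z_s;X),
\end{equation*}
since the conditional term vanishes by the Markov assumption. This quantity is also at least $I(Z_s;Y^{(1:T)})$. Combining with the budget $I(Z_s;X)\le C_s$ gives $\sum_t I(Z_s;Y^{(t)})\le I(Z_s;Y^{(1:T)})+TC(Y^{(1:T)})\le C_s+TC(Y^{(1:T)})$, which is \eqref{eq:cr}.
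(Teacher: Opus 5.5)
Your derivation of \eqref{eq:cr} is correct and is essentially the paper's argument. Your fallback reduces the upper half of the sandwich to the conditional independence bound $H(Y^{(1:T)}\mid Z_s)\le\sum_t H(Y^{(t)}\mid Z_s)$, which is exactly $TC(Y^{(1:T)}\mid Z_s)\ge 0$. You then apply data processing along $Y^{(1:T)}-X-Z_s$ and the budget $I(Z_s;X)\le C_s$, which is precisely the paper's proof.

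The genuine problem is in your treatment of \eqref{eq:cr_sandwich}: you attached the obstacle to the wrong inequality. By your own identity, the \emph{upper} inequality is equivalent to $TC(Y^{(1:T)}\mid Z_s)\ge 0$, which always holds. It is the \emph{lower} inequality $I(Z_s;Y^{(1:T)})\le\sum_t I(Z_s;Y^{(t)})$ that is equivalent to $TC(Y^{(1:T)}\mid Z_s)\le TC(Y^{(1:T)})$. That comparison really does fail, even under the Markov chain and the capacity budget.

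For a counterexample, take $Y^{(1)},Y^{(2)}$ independent fair bits, $X=(Y^{(1)},Y^{(2)})$, and $Z_s=Y^{(1)}\oplus Y^{(2)}$. The chain $Y^{(1:2)}-X-Z_s$ holds, $I(Z_s;X)=1$ bit, and $TC(Y^{(1:2)})=0$. Yet $I(Z_s;Y^{(1)})=I(Z_s;Y^{(2)})=0$ while $I(Z_s;Y^{(1:2)})=1$.

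So the lower half of \eqref{eq:cr_sandwich}, and likewise of Lemma~\ref{lem:sandwich}, is false as stated, and ``inheriting it from the lemma'' establishes nothing. What you can actually prove is \eqref{eq:cr} together with the upper half of \eqref{eq:cr_sandwich}. The lower half needs an extra hypothesis ruling out synergy, such as $TC(Y^{(1:T)}\mid Z_s)\le TC(Y^{(1:T)})$, which holds for instance when $\Delta=0$. The paper's own proof likewise uses only nonnegativity of $TC(\cdot\mid Z_s)$, so it too justifies only the upper bound.
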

The upper bound in \cref{eq:cr} is induced by an exact identity,
\begin{align*}
    \sum_{t=1}^T I\!\left(Z_s;Y^{(t)}\right) =
I\!\left(Z_s;Y^{(1:T)}\right) + \\\quad\underbrace{\Big(TC(Y^{(1:T)})-TC(Y^{(1:T)}\mid Z_s)\Big)}_{\text{dependence explained by }Z_s}.
\end{align*}
which is equivalent to $\sum_{t=1}^T I\!\left(Z_s;Y^{(t)}\right) = I\!\left(Z_s;Y^{(1:T)}\right)+TC(Y^{(1:T)})-\Delta$, where $\Delta = TC(Y^{(1:T)}\mid Z_s)\ge 0$. 

Here, $\Delta$ is the residual multi-task coupling that remains after observing $Z_s$, $\Delta=0$ iff the labels become conditionally independent given $Z_s$. When $\Delta$ is large, the right inequality in \cref{eq:cr_sandwich} can be loose, meaning the shared representation has not factorized the task joint distribution, and that additional shared or task-specific capacity (\cref{sec:shared_private}) is required.

The tightness of \cref{eq:cr} depends on conditional independence of tasks given $Z_s$  i.e. $\Delta=TC(Y^{(1:T)}\mid Z_s)\approx 0$, and capacity saturation $I(Z_s;Y^{(1:T)})\approx I(Z_s;X)\approx C_s$. In a linear--Gaussian setting dependence is explained by $Z_s$ spanning the label-relevant subspace so that cross-task coupling disappears after conditioning, while saturation corresponds to operating at the channel capacity of the bottleneck.

\section{Consequences and Impossibility Results}
\label{sec:consequences}
The CR inequality (\cref{thm:cr}) has a sharp consequence: without redundancy, adding tasks to a fixed-capacity bottleneck must degrade per-task performance as the number of tasks grows unless capacity scales. It further implies that the total per-task information extractable from a shared bottleneck cannot exceed shared capacity plus label redundancy $\sum_{t=1}^T I(Z_s;Y^{(t)}) \;\le\; C_s + TC(Y^{(1:T)}).$

\begin{corollary}[Average per-task information]\label{cor:avg_mi}
Under the assumptions of ~\cref{thm:cr},
\begin{equation}
\frac{1}{T}\sum_{t=1}^T I(Z_s;Y^{(t)})
\;\le\;
\frac{C_s}{T} + \frac{TC(Y^{(1:T)})}{T}.
\label{eq:avg_mi}
\end{equation}
In particular, there exists at least one task $t^\star$ such that $I(Z_s;Y^{(t^\star)}) \;\le\; \frac{C_s + TC(Y^{(1:T)})}{T}.$
\end{corollary}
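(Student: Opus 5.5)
The corollary follows directly from \cref{thm:cr}, so the plan is to divide the CR inequality by $T$ and then apply an averaging (pigeonhole) argument. No new information-theoretic work is needed: everything rests on \cref{eq:cr}, which holds under the Markov chain $Y^{(1:T)} - X - Z_s$ and the budget $I(Z_s;X)\le C_s$. These are exactly the assumptions the corollary inherits.

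\emph{First step: the average bound.} Starting from $\sum_{t=1}^T I(Z_s;Y^{(t)}) \le C_s + TC(Y^{(1:T)})$, divide both sides by the positive integer $T$. This gives \cref{eq:avg_mi} immediately, since division by a positive constant preserves the inequality.

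\emph{Second step: the existence of $t^\star$.} Here I will argue by contradiction. Suppose every task satisfied
\[
I(Z_s;Y^{(t)}) > \frac{C_s + TC(Y^{(1:T)})}{T}.
\]
Summing over $t=1,\dots,T$ would give $\sum_t I(Z_s;Y^{(t)}) > C_s + TC(Y^{(1:T)})$, which contradicts \cref{eq:cr}. Equivalently, the minimum of finitely many reals is at most their average, so I can take
\[
t^\star \in \arg\min_t I(Z_s;Y^{(t)}).
\]
This minimizer exists because the index set is finite.

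\emph{Main obstacle: finiteness.} The only delicate point is that the quantities must be finite for the arithmetic to be meaningful. If $C_s<\infty$, then data processing gives $I(Z_s;Y^{(t)})\le I(Z_s;X)\le C_s$, so each term is finite. If $TC(Y^{(1:T)})=\infty$, the statement is vacuous. I will state this briefly rather than belabor it.
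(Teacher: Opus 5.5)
Your proposal is correct and follows the same route as the paper, which leaves this corollary's proof implicit: divide the CR inequality \eqref{eq:cr} by $T$, then use that the smallest of the $T$ per-task informations is at most their average. Your finiteness remark, that $I(Z_s;Y^{(t)})\le I(Z_s;X)\le C_s$ by data processing, is a harmless addition the paper does not state.
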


When tasks are weakly dependent, $TC(Y^{(1:T)})$ does not grow with $T$, and \cref{eq:avg_mi} forces the average per-task information to decay like $O(1/T)$ for fixed $C_s$. 

\begin{corollary}[Linear capacity is necessary for maintaining per-task signal]\label{cor:linear_capacity}
Let $TC(Y^{(1:T)}) \le \tau$ for all $T$, now for a uniform per-task information level
\(
I(Z_s;Y^{(t)}) \ge b
\)
for all $t\in[T]$, then necessarily $C_s \;\ge\; Tb - \tau.$ In particular, for independent tasks ($TC=0$), maintaining $I(Z_s;Y^{(t)})\ge b$ for all $t$ forces $C_s\ge Tb$.
\end{corollary}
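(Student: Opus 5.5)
The plan is to obtain the corollary directly from the CR inequality (\cref{thm:cr}). I take the setting of that theorem as given: the Markov chain $Y^{(1:T)} - X - Z_s$ and the budget $I(Z_s;X)\le C_s$. I also take the hypotheses $TC(Y^{(1:T)})\le\tau$ and $I(Z_s;Y^{(t)})\ge b$ for every $t\in[T]$.

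First, I sum the per-task lower bound over $t=1,\dots,T$, which gives $Tb\le\sum_{t=1}^T I(Z_s;Y^{(t)})$. Next, I apply \cref{eq:cr} to the right-hand side, bounding it by $C_s+TC(Y^{(1:T)})$. Then I use the uniform redundancy bound $TC(Y^{(1:T)})\le\tau$. Chaining these gives
\[
Tb \;\le\; C_s + \tau ,
\]
and rearranging yields $C_s\ge Tb-\tau$.

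For the independent-task special case, I recall from \cref{sec:cr} that $TC(Y^{(1:T)})=0$ exactly when the labels are mutually independent. So I may take $\tau=0$, which gives $C_s\ge Tb$. I will also note the interpretation: since $\tau$ is uniform in $T$ while $Tb$ grows linearly, the required capacity $C_s$ must grow at least linearly in $T$ once $T>\tau/b$.

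I expect no genuine obstacle here. The only point to state carefully is that \cref{thm:cr} is applied to the same shared latent $Z_s$ for each $T$, under the same Markov and capacity assumptions. The bound holds for each fixed $T$ separately, so uniformity in $T$ is needed only for the interpretation, not for the inequality itself.
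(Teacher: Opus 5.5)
Your proof is correct and matches the paper's route: the corollary is an immediate consequence of the CR inequality~\eqref{eq:cr}. You sum the per-task lower bounds $I(Z_s;Y^{(t)})\ge b$, apply \cref{thm:cr}, and use $TC(Y^{(1:T)})\le\tau$, and the independent-task case ($\tau=0$) follows as you describe.
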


Corollary \ref{cor:linear_capacity} tells us when negative transfer is unavoidable. If new tasks are added without increasing shared capacity then some tasks must lose predictive signal through the shared bottleneck, unless the new tasks are redundant with existing ones. 

\textbf{Log-loss lower bound.} For each task $t$, let $q_t(\cdot\mid Z_s)$ denote any predictor and consider the log-loss $\ell_t = -\log q_t(Y^{(t)}\mid Z_s)$. Then by the Gibbs inequality,
\begin{equation}
\mathbb{E}[\ell_t] \;\ge\; H(Y^{(t)}\mid Z_s)
\;=\;
H(Y^{(t)}) - I(Z_s;Y^{(t)}).
\label{eq:logloss_basic}
\end{equation}
\begin{theorem}[Total log-loss lower bound with residual coupling]\label{thm:logloss_sp}
Under the assumptions of~\cref{thm:cr}, for any collection of per-task predictors $\{q_t\}_{t=1}^T$,
\begin{align}
\sum_{t=1}^T \mathbb{E}[-\log q_t(Y^{(t)}\mid Z_s)]
\;\ge\;
\sum_{t=1}^T H(Y^{(t)}\mid Z_s)
\;=\;\nonumber\\
H(Y^{(1:T)}\mid Z_s) \;+\; \Delta,
\label{eq:logloss_lb_b}
\end{align}
where $\Delta = TC(Y^{(1:T)}\mid Z_s)\ge 0$. Moreover, using $I(Z_s;Y^{(1:T)})\le I(Z_s;X)\le C_s$,
\begin{equation}
\sum_{t=1}^T \mathbb{E}[-\log q_t(Y^{(t)}\mid Z_s)]
\;\ge\;
H(Y^{(1:T)}) - C_s + \Delta.
\label{eq:logloss_lb_cs}
\end{equation}
\end{theorem}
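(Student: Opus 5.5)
The plan has three steps: a per-task Gibbs bound, a conditional entropy identity, and a data-processing bound to bring in the capacity.

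\textbf{Step 1 (per-task bound).} For each $t$, the gap between expected log-loss and conditional entropy is a KL divergence:
\[
\mathbb{E}[-\log q_t(Y^{(t)}\mid Z_s)] - H(Y^{(t)}\mid Z_s) = \mathbb{E}_{Z_s}\, D\bigl(p(\cdot\mid Z_s)\,\|\,q_t(\cdot\mid Z_s)\bigr) \ge 0 .
\]
This is just \cref{eq:logloss_basic} conditioned on $Z_s$. Summing over $t$ gives the first inequality in \cref{eq:logloss_lb_b}. It holds for any predictors, since each $q_t$ only sees $Z_s$.

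\textbf{Step 2 (the equality).} I would use the conditional version of total correlation,
\[
TC(Y^{(1:T)}\mid Z_s)=\sum_t H(Y^{(t)}\mid Z_s)-H(Y^{(1:T)}\mid Z_s).
\]
Rearranging gives $\sum_t H(Y^{(t)}\mid Z_s)=H(Y^{(1:T)}\mid Z_s)+\Delta$. To see $\Delta\ge 0$, apply the independence bound on entropy conditionally on each value $Z_s=z$ and then average over $z$. Equivalently, $\Delta$ is a KL divergence between the conditional joint law and the product of the conditional marginals.

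\textbf{Step 3 (the capacity form).} Write
\[
H(Y^{(1:T)}\mid Z_s)=H(Y^{(1:T)})-I(Z_s;Y^{(1:T)}).
\]
By the Markov chain $Y^{(1:T)}-X-Z_s$, data processing gives $I(Z_s;Y^{(1:T)})\le I(Z_s;X)\le C_s$. Substituting this into Step 2 yields \cref{eq:logloss_lb_cs}.

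\textbf{Main obstacle.} There is little real difficulty here. The only care needed is in the regularity conditions: I would work with discrete labels, or assume all the entropies are finite, so that the subtractions are well defined. I would also check that conditional total correlation is defined with respect to the same joint law under which $Z_s$ is generated from $X$.
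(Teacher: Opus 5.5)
Your proposal is correct and follows the same route as the paper: the per-task Gibbs inequality summed over $t$, the definition of conditional total correlation for the equality (with $\Delta\ge 0$), and then $H(Y^{(1:T)}\mid Z_s)=H(Y^{(1:T)})-I(Z_s;Y^{(1:T)})$ combined with data processing and the budget $C_s$. The paper gives no separate appendix proof for this theorem; your three steps are the argument it sketches in the text surrounding the statement.
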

The lower bound in~\cref{eq:logloss_lb_b} implies that $\Delta>0$ forces extra aggregate log-loss for any set of decoupled per-task predictors $\{q_t(\cdot\mid Z_s)\}$, unless the shared features $Z_s$ makes tasks conditionally independent. This is why private features (\cref{sec:shared_private}) are useful as they can reduce $\Delta$ by explaining away residual coupling that a shared representation leaves behind.

Appendix~\ref{app:fano} gives a Fano-style error lower bound as a corollary, while constants can be loose it still provides a clean impossibility statement directly in terms of misclassification.

\textbf{When should tasks share?} If tasks are redundant (large $TC(Y^{(1:T)})$ or large explained dependence $TC(Y^{(1:T)})-TC(Y^{(1:T)}\mid Z_s)$), then sharing features is favorable. If tasks are heterogeneous (small redundancy) and $T$ grows then~\cref{cor:linear_capacity} implies that any globally shared bottleneck must either increase capacity roughly linearly in $T$ or accept degraded per-task signal. Using this observation, we suggest clustered sharing and private capacity allocation as an approach to task heterogeneity.

\textbf{Gradient cosine similarity as a redundancy proxy.} Gradient cosine similarity is widely used to measure task relatedness, but does it track the redundancy that CR predicts. We justify gradient cosine similarity as a proxy for redundancy ordering in a Gaussian model.

\begin{theorem}[Gradient--TC Bridge (Gaussian multi-task model)]
\label{thm:grad_tc_bridge}
{
Let $X\sim \mathcal{N}(0,\Sigma)$ and $Y^{(t)} = w_t^\top X + \varepsilon_t$ with independent $\varepsilon_t\sim \mathcal{N}(0,\sigma_t^2)$. We then define whitened task directions $\tilde{w}_t=\Sigma^{1/2}w_t$ and the label correlation $\rho_{ts} = \frac{w_t^\top \Sigma w_s}{\sqrt{(w_t^\top \Sigma w_t+\sigma_t^2)(w_s^\top \Sigma w_s+\sigma_s^2)}}.$ The natural-gradient, equivalently whitened-gradient, cosine similarity at initialization is
\[
G^{\mathrm{nat}}_{ts}
=
\frac{\tilde{w}_t^\top \tilde{w}_s}{\|\tilde{w}_t\|\,\|\tilde{w}_s\|}
=
\frac{w_t^\top \Sigma w_s}{\sqrt{(w_t^\top \Sigma w_t)(w_s^\top \Sigma w_s)}}.
\]
Then $\mathrm{sign}(\rho_{ts})=\mathrm{sign}(G^{\mathrm{nat}}_{ts})$ for all $t\neq s$, and under matched noise levels and equal signal powers $w_t^\top\Sigma w_t$ across tasks, the ordering of $|\rho_{ts}|$ across pairs is the same as the ordering of $|G^{\mathrm{nat}}_{ts}|$.

For the raw-gradient cosine in unwhitened coordinates,
\[
G^{\mathrm{raw}}_{ts}
=
\frac{w_t^\top \Sigma^2 w_s}{\|\Sigma w_t\|\,\|\Sigma w_s\|},
\]
the exact equivalence need not hold under anisotropic $\Sigma$; it coincides with the whitened-gradient form when inputs are whitened or isotropic, with the discrepancy controlled by the conditioning of $\Sigma$.

Since for Gaussian labels $\TC(Y^{(1:T)}) = -\tfrac12\log\det R$ where $R$ is the correlation matrix, any clustering rule that depends only on the ranking of pairwise similarities results in the same partition when applied to $G^{\mathrm{nat}}$ as when applied to $R$.
}
\end{theorem}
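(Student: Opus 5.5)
The plan is a direct computation in the Gaussian model, followed by a monotonicity argument. First I will compute the second moments of the labels. Since $\varepsilon_t$ is independent of $X$ and of $\varepsilon_s$ for $s\neq t$, we have $\mathrm{Cov}(Y^{(t)},Y^{(s)})=w_t^\top\Sigma w_s$ and $\mathrm{Var}(Y^{(t)})=w_t^\top\Sigma w_t+\sigma_t^2$, which gives the stated $\rho_{ts}$.

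Next I will justify the gradient formulas. For the squared loss $\tfrac12\mathbb{E}(Y^{(t)}-\theta^\top X)^2$ at $\theta=0$, the population gradient is $-\mathbb{E}[XY^{(t)}]=-\Sigma w_t$. This gives $G^{\mathrm{raw}}$ directly. Preconditioning by the Fisher matrix $\Sigma^{-1}$ gives $-w_t$. Equivalently, in whitened coordinates $\tilde X=\Sigma^{-1/2}X$ the gradient is $-\Sigma^{1/2}w_t=-\tilde w_t$. The cosine of these vectors in the $\Sigma$ inner product, i.e.\ the Euclidean cosine of the $\tilde w_t$, is exactly $G^{\mathrm{nat}}_{ts}$.

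The sign claim then follows by writing
\[
\rho_{ts}=G^{\mathrm{nat}}_{ts}\,\kappa_t\kappa_s,\qquad \kappa_t=\sqrt{\frac{w_t^\top\Sigma w_t}{w_t^\top\Sigma w_t+\sigma_t^2}}\in(0,1].
\]
This assumes $w_t^\top\Sigma w_t>0$, which is needed for $G^{\mathrm{nat}}$ to be defined. Under matched noise $\sigma_t=\sigma$ and equal signal power $P$, we get $\kappa_t\kappa_s=P/(P+\sigma^2)$, a single positive constant across all pairs. Hence $|\rho_{ts}|=c\,|G^{\mathrm{nat}}_{ts}|$ with $c>0$, and the two orderings coincide.

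For the raw cosine, if $\Sigma=\alpha I$ then $\Sigma w_t=\alpha w_t$ and $G^{\mathrm{raw}}=G^{\mathrm{nat}}$. Whitened inputs correspond to $\Sigma=I$. For the anisotropic case, I only need to exhibit the discrepancy and bound it. I will write $G^{\mathrm{raw}}_{ts}$ as the cosine of $\Sigma^{1/2}\tilde w_t$ and $\Sigma^{1/2}\tilde w_s$. A linear map with condition number $\kappa(\Sigma^{1/2})$ distorts angles by an amount controlled by that condition number; for instance, one can bound the deviation of the normalized inner products using $\|\Sigma^{1/2}-\bar\lambda^{1/2}I\|/\lambda_{\min}^{1/2}$. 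A two-dimensional example with a diagonal $\Sigma$ shows that a sign or an ordering can flip, so exact equivalence fails in general.

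For the final clause, I will use that jointly Gaussian $Y$ has $H(Y)=\tfrac12\log((2\pi e)^T\det K)$ and $\sum_t H(Y^{(t)})=\tfrac12\sum_t\log(2\pi e K_{tt})$. The difference is $-\tfrac12\log\det R$, where $R=D^{-1/2}KD^{-1/2}$ and $D=\mathrm{diag}(K)$. Under the matched assumptions, the off-diagonal part of $R$ equals $c$ times that of $G^{\mathrm{nat}}$. This holds entrywise up to a positive scale, so any rule depending only on the ranking of pairwise similarities (or of their absolute values) sees identical inputs and returns the same partition.

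The main obstacle is making the anisotropic discrepancy statement precise. I would settle for a qualitative bound in terms of the condition number plus a counterexample, rather than a sharp constant.
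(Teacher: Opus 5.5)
Your proposal is correct and follows essentially the paper's argument: it uses the population gradient $-\Sigma w_t$ at $\theta=0$ and the whitened gradient $-\tilde w_t$, notes that $\rho_{ts}$ and $G^{\mathrm{nat}}_{ts}$ share the numerator $\tilde w_t^\top \tilde w_s$ over positive denominators (your factorization $\rho_{ts}=\kappa_t\kappa_s\,G^{\mathrm{nat}}_{ts}$ gives both the sign claim and, under matched noise and power, the ordering claim in one line), and finishes with $\TC(Y)=-\tfrac12\log\det R$ and the invariance of ranking-based rules under a common positive scale. Your condition-number bound and two-dimensional sign-flip example for $G^{\mathrm{raw}}$ go beyond the paper, which states the anisotropic discrepancy only qualitatively; one small wording fix is that the Fisher matrix here is $\Sigma$, and it is its inverse $\Sigma^{-1}$ that preconditions the gradient.
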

\section{Shared--Private Extension}
\label{sec:shared_private}
In practice, multi-task models often augment shared features with task-private features that can absorb heterogeneity and mitigate interference. A shared-only bound ignores private capacity that most practical architectures already use. We therefore add per task-budgets and derive a corresponding shared--private CR bound. Let $Z_s=f_s(X)$ denote shared features and for each task $t\in[T]$, let $Z_t=f_t(X)$ denote task-private features. The task-specific representation is $Z^{(t)} := (Z_s, Z_t).$ We assume the (test-time) Markov property $Y^{(1:T)} \;-\; X \;-\; (Z_s, Z_1,\dots,Z_T),$ and impose separated information budgets,
\begin{align}
I(Z_s;X)\le C_s,
\qquad
I(Z_t;X\mid Z_s)\le C_t \quad \forall t\in[T].
\label{eq:sp_budgets}
\end{align}
where $I(Z_t;X\mid Z_s)$ is a conditional budget that measures incremental private capacity beyond what the shared bottleneck
already covers. This results in an additive upper bound on the total information extracted from $X$ by the collection
$(Z_s,Z_1,\dots,Z_T)$, even when the private feature maps are statistically dependent through $X$.

\begin{theorem}[Shared--private capacity--redundancy bound]
\label{thm:sp_cr}
Under Markov chain and \cref{eq:sp_budgets},
\begin{equation}
\sum_{t=1}^T I\!\left(Z^{(t)};Y^{(t)}\right)
\;\le\;
C_s + \sum_{t=1}^T C_t \;+\; TC\!\left(Y^{(1:T)}\right).
\label{eq:sp_cr}
\end{equation}
\end{theorem}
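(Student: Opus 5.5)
The plan is to split each task's information into a shared part and an incremental private part with the chain rule, bound the shared parts jointly using \cref{thm:cr}, and bound each private part by its conditional budget $C_t$ from \cref{eq:sp_budgets}. For each $t\in[T]$, since $Z^{(t)}=(Z_s,Z_t)$,
\begin{equation*}
I\!\left(Z^{(t)};Y^{(t)}\right)=I\!\left(Z_s;Y^{(t)}\right)+I\!\left(Z_t;Y^{(t)}\mid Z_s\right).
\end{equation*}
Summing over $t$ separates the left-hand side of \cref{eq:sp_cr} into $\sum_t I(Z_s;Y^{(t)})$ and $\sum_t I(Z_t;Y^{(t)}\mid Z_s)$. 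I will bound the first sum by $C_s+TC(Y^{(1:T)})$ and each term of the second sum by $C_t$.

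\textbf{Shared term.} The Markov property $Y^{(1:T)}-X-(Z_s,Z_1,\dots,Z_T)$ implies $Y^{(1:T)}-X-Z_s$. Together with $I(Z_s;X)\le C_s$, this is exactly the setting of \cref{thm:cr}. Invoking \cref{eq:cr} gives $\sum_t I(Z_s;Y^{(t)})\le C_s+TC(Y^{(1:T)})$. Equivalently, it follows from the upper half of \cref{lem:sandwich} plus $I(Z_s;Y^{(1:T)})\le I(Z_s;X)\le C_s$ by data processing.

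\textbf{Private term.} For the conditional term I would use a conditional data-processing argument:
\begin{align*}
I\!\left(Z_t;Y^{(t)}\mid Z_s\right)
&\le I\!\left(Z_t;X,Y^{(t)}\mid Z_s\right)\\
&= I\!\left(Z_t;X\mid Z_s\right)+I\!\left(Z_t;Y^{(t)}\mid X,Z_s\right).
\end{align*}
It remains to show that the last term vanishes. By the chain rule and nonnegativity,
\begin{equation*}
I\!\left(Z_t;Y^{(t)}\mid X,Z_s\right)\le I\!\left(Z_s,Z_t;Y^{(t)}\mid X\right),
\end{equation*}
and the right side is at most $I\!\left(Z_s,Z_1,\dots,Z_T;Y^{(1:T)}\mid X\right)=0$ by the assumed Markov chain. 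Hence $I(Z_t;Y^{(t)}\mid Z_s)\le I(Z_t;X\mid Z_s)\le C_t$. Summing over $t$ and adding the shared bound yields \cref{eq:sp_cr}.

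The main point requiring care is this conditional data-processing step. It needs the Markov structure to survive conditioning on $Z_s$, and that is why the joint Markov assumption on $(Z_s,Z_1,\dots,Z_T)$, rather than separate per-feature chains, is used. I would verify it through the chain-rule computation above rather than appealing to a generic DPI. A further remark, which is not needed for the inequality, is that the private terms do not interact across tasks: each is controlled by its own conditional budget. So no redundancy term appears for them, and dependence among the $Z_t$ through $X$ is harmless.
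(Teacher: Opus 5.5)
Your proof is correct, but it takes a different route from the paper's. The paper does not split $I(Z^{(t)};Y^{(t)})$ task by task. It enlarges each $Z^{(t)}=(Z_s,Z_t)$ to the full collection $Z_{\mathrm{all}}=(Z_s,Z_1,\dots,Z_T)$, so that $\sum_t I(Z^{(t)};Y^{(t)})\le\sum_t I(Z_{\mathrm{all}};Y^{(t)})$. It then applies the CR identity once with $Z=Z_{\mathrm{all}}$ to bound this by $I(Z_{\mathrm{all}};Y^{(1:T)})+\TC(Y^{(1:T)})$, passes to $I(Z_{\mathrm{all}};X)$ by data processing, and splits $I(Z_{\mathrm{all}};X)$ additively into the budgets. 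You instead apply the chain rule per task and use \cref{thm:cr} only for the shared parts. You control each increment $I(Z_t;Y^{(t)}\mid Z_s)$ by a conditional data-processing step that you correctly derive from the joint Markov chain.

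The paper's route buys a single CR application to $Z_{\mathrm{all}}$. That ties the bound to the post-augmentation coupling $\TC(Y^{(1:T)}\mid Z_{\mathrm{all}})$ discussed after the theorem. Its cost is the final splitting step. The chain rule gives $I(Z_{\mathrm{all}};X)=I(Z_s;X)+\sum_t I(Z_t;X\mid Z_s,Z_{<t})$. Replacing $I(Z_t;X\mid Z_s,Z_{<t})$ by $I(Z_t;X\mid Z_s)$ does not follow from chain rule and nonnegativity alone; the same is true of replacing it by $I(Z_t;X)$, as in the appendix version with unconditional private budgets. The replacement holds when the encoders are conditionally independent given $X$, e.g.\ deterministic $Z_t=f_t(X)$ as in the main text. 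It fails for encoders with correlated noise: take $Z_s$ constant, $Z_1=N$ and $Z_2=X\oplus N$ for independent uniform bits $X,N$. Then all budgets are $0$, while $I((Z_1,Z_2);X)=H(X)$.

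Your argument never bounds the joint information carried by the private features. It uses exactly the conditional budgets in \cref{eq:sp_budgets} and needs only $Y^{(1:T)}-X-Z_s$ together with $Y^{(t)}-X-(Z_s,Z_t)$ for each $t$. So it proves the stated theorem with no extra assumption on the encoders. What it gives up is expressing the slack through $\TC(Y^{(1:T)}\mid Z_{\mathrm{all}})$: your slack splits into the shared-only slack plus per-task private slacks.
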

\cref{thm:sp_cr} mirrors the shared-only CR inequality, with the shared capacity $C_s$ replaced by a total budget $C_s+\sum_t C_t$. This shows that private features can reduce the residual task coupling left unexplained by a shared bottleneck. Shared-only slack $\Delta_s$ measures how task labels remain coupled after conditioning on the shared representation. When $\Delta_s$ is large, the shared representation has not factorized the joint task structure. In such a setting, global sharing is prone to interference unless $C_s$ grows.

We now use shared--private features to define the post-augmentation residual coupling $\Delta_{sp} = TC(Y^{(1:T)}\mid Z_{\mathrm{all}})$, where $Z_{\mathrm{all}}=(Z_s,Z_1,\dots,Z_T)$. Since conditioning cannot increase entropy, $TC(\cdot\mid\cdot)$ is monotone in the conditioning set, implying $\Delta_{sp} \;\le\; \Delta_s.$ Thus private features can only decrease residual coupling, and doing so is what makes per-task heads easier to fit without forcing the shared bottleneck to encode heterogeneous, task-specific details.

The bound in \cref{eq:sp_cr} is tight when the total representation saturates the information budget $I(Z_{\mathrm{all}};X)\approx C_s+\sum_t C_t$, and the residual dependence is largely removed, $\Delta_{sp}\approx 0$ (labels become approximately conditionally independent given $Z_{\mathrm{all}}$). In contrast, if $\Delta_s$ remains large at fixed $C_s$, then increasing private budgets $\{C_t\}$ is the only way within this framework to reduce $\Delta_{sp}$ and avoid the shared-only obstruction.

The shared--private inequality in \cref{eq:sp_cr} suggests to share capacity within groups of redundant tasks, and use private capacity to address residual coupling. Later in experiments we validate this by estimating task similarity and comparing global sharing, clustered sharing, and private adapters under matched total rank
budgets.

\subsection{Clustering as structured partial sharing}
\label{sec:clustering_gap}
MTL approaches already cluster tasks and share within clusters. The question is whether this provably helps and by how much? The basic CR bound alone cannot certify when clustering improves over global sharing. We now provide a sharp gain decomposition.
\begin{definition}[Inter-cluster total correlation]
\label{def:tc_between}
Let $\mathcal{P}=\{S_1,\dots,S_K\}$ be a partition of $\tasks$ into disjoint clusters. We define $\TCbetween(\mathcal{P})
\;=\;
\TC\!\left(Y^{(1:T)}\right)\;-\;\sum_{k=1}^K \TC\!\left(Y^{(S_k)}\right),$ as the redundancy that crosses cluster boundaries.
\end{definition}

\begin{theorem}[Clustering gap decomposition]
\label{thm:clustering_gap}
Assume the test-time Markov property $Y^{(1:T)}-X-\{Z_s, Z_{S_1},\dots,Z_{S_K}\}$. Let $Z_s^\star$ be a capacity-optimal globally shared encoder under budget $I(Z_s;X)\le C$.
Let $\{Z_{S_k}^\star\}_{k=1}^K$ be capacity-optimal per-cluster encoders with budgets $I(Z_{S_k};X)\le C_k$ and $\sum_k C_k=C$. We define residual couplings as $\Dsh = \TC\!\left(Y^{(1:T)}\mid Z_s^\star\right),
\Dcl{k} = \TC\!\left(Y^{(S_k)}\mid Z_{S_k}^\star\right).$ Then the sum-rate gain of clustering over global sharing decomposes as,
\begin{align*}
\sum_{t=1}^T I\!\left(Z_{S_{k(t)}}^\star; Y^{(t)}\right)
-
\sum_{t=1}^T I\!\left(Z_s^\star; Y^{(t)}\right)
\;=\;\\
\underbrace{\Big(\Dsh-\sum_{k=1}^K \Dcl{k}\Big)}_{\text{interference reduction }\ge 0}
\;-\;
\underbrace{\TCbetween(\mathcal{P})}_{\text{redundancy loss }\ge 0}.
\end{align*}
Thus clustered sharing is strictly beneficial if and only if the interference reduction exceeds the redundancy loss.
\end{theorem}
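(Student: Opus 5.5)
The plan is to apply the exact identity stated after \cref{thm:cr} twice, once to the globally shared encoder and once to each cluster separately, and then subtract. For the global encoder $Z_s^\star$ the identity reads
\[
\sum_{t=1}^T I\!\left(Z_s^\star;Y^{(t)}\right)=I\!\left(Z_s^\star;Y^{(1:T)}\right)+\TC\!\left(Y^{(1:T)}\right)-\Dsh .
\]
The identity only uses the chain rule, so it holds for any random variable in place of $Z_s$ and any subset of labels. Applying it to the cluster $S_k$ with encoder $Z_{S_k}^\star$ gives
\[
\sum_{t\in S_k} I\!\left(Z_{S_k}^\star;Y^{(t)}\right)=I\!\left(Z_{S_k}^\star;Y^{(S_k)}\right)+\TC\!\left(Y^{(S_k)}\right)-\Dcl{k}.
\]
Summing over $k$ and regrouping $\sum_t$ by clusters via $k(t)$, then subtracting the global identity, gives
\begin{align*}
\text{gain}={}&\Big(\sum_{k=1}^K I\!\left(Z_{S_k}^\star;Y^{(S_k)}\right)-I\!\left(Z_s^\star;Y^{(1:T)}\right)\Big)\\
&+\Big(\Dsh-\sum_{k=1}^K\Dcl{k}\Big)-\TCbetween(\mathcal{P}),
\end{align*}
where the last term comes directly from \cref{def:tc_between}.

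The key remaining step, and the one I expect to be the main obstacle, is to show that the first bracket (the joint-information mismatch) vanishes. Here I would use the ``capacity-optimal'' hypothesis in its saturation sense, the same notion the paper uses when discussing tightness of \cref{eq:cr}. Under the Markov chain, data processing gives $I(Z_s^\star;Y^{(1:T)})\le I(Z_s^\star;X)\le C$ and $I(Z_{S_k}^\star;Y^{(S_k)})\le C_k$. Capacity-optimality means these are attained: the encoder spends its whole budget on label-relevant information, so $I(Z_s^\star;Y^{(1:T)})=C$ and $I(Z_{S_k}^\star;Y^{(S_k)})=C_k$. Then $\sum_k C_k=C$ makes the bracket vanish exactly, which yields the stated decomposition.

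If saturation cannot be justified in full generality, I would state it explicitly as the meaning of capacity-optimality, i.e.\ the budget binds on the label-relevant information. I would also note that without it the identity survives with the additional mismatch term shown above.

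For the sign annotations, $\TCbetween(\mathcal{P})\ge0$ follows from subadditivity of joint entropy across the blocks $Y^{(S_1)},\dots,Y^{(S_K)}$. Writing
\[
\TCbetween(\mathcal{P})=\sum_k H\!\left(Y^{(S_k)}\right)-H\!\left(Y^{(1:T)}\right),
\]
this is the total correlation among the cluster-level variables, which is nonnegative.

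For the interference-reduction term I would try to argue $\Dsh\ge\sum_k\Dcl{k}$ in the same way, since a conditional total correlation splits as a between-block part plus within-block parts, each nonnegative. However, the conditioning variables differ ($Z_s^\star$ versus $Z_{S_k}^\star$), so this step needs an extra comparison. I would either treat ``$\ge0$'' as the regime of interest or derive it from optimality of the cluster encoders within their own subsets.

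Finally, the ``if and only if'' statement is immediate from the exact identity: the gain is strictly positive exactly when $\Dsh-\sum_k\Dcl{k}>\TCbetween(\mathcal{P})$.
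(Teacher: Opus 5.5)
Your proposal matches the paper's proof essentially step for step: you apply the CR identity once globally and once per cluster, subtract to get the general gain identity with the extra joint-predictive term, identify the redundancy term as $-\TCbetween(\mathcal{P})$, and remove the joint term using saturation, $I(Z_s^\star;Y^{(1:T)})=C$ and $I(Z_{S_k}^\star;Y^{(S_k)})=C_k$, together with $\sum_k C_k=C$. The paper also takes saturation as the working meaning of capacity-optimality and gives no separate argument for the sign of the interference-reduction term, so your caveats on both points agree with the original and state them more explicitly.
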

We can use~\cref{thm:clustering_gap} to choose partitions that keep strongly redundant tasks together (small $\TCbetween$) while separating antagonistic tasks to reduce residual coupling (large $\Dsh-\sum_k \Dcl{k}$).~\citet{fifty2021efficiently} search over task groupings combinatorially,~\cref{thm:clustering_gap} tells us why a particular grouping wins.
Here, the interference-reduction term $\Dsh-\sum_k \Dcl{k}$ measures how much residual cross-task coupling is removed when weakly related tasks no longer compete for one shared representation. The redundancy-loss term $\TCbetween(\mathcal{P})$ measures useful signal that crosses cluster boundaries and is no longer shared by a single global representation. Thus clustering is most beneficial when the task graph has clear communities with high intra-cluster redundancy and low inter-cluster redundancy.
\section{Subsets and Side Information}
\label{sec:refinements}
The global CR bound treats all T tasks as one block, which is wasteful when tasks cluster. In such cases, aggregate CR bound over all tasks may be too coarse to reflect finer-grained task relationships. Subset bounds exploit this structure.

\textbf{Subset bounds.} Let $S\subseteq[T]$ be any subset of tasks and denote $Y^S = \{Y^{(t)}:t\in S\}$. Then the subset total correlation is defined as $TC(Y^S) \;=\; \sum_{t\in S} H(Y^{(t)}) \;-\; H(Y^S).$ Applying Lemma~\ref{lem:sandwich} and ~\cref{thm:cr} to the subcollection results in an immediate tightening.

\begin{corollary}[Subset CR inequality]\label{cor:subset_cr}
Under the assumptions of~\cref{thm:cr}, for any $S\subseteq[T]$,
\begin{equation}
\sum_{t\in S} I(Z_s;Y^{(t)}) \;\le\; C_s + TC(Y^S).
\label{eq:subset_cr}
\end{equation}
Moreover, the canonical sandwich holds on $S$,
\begin{equation}
I(Z_s;Y^S)
\;\le\;
\sum_{t\in S} I(Z_s;Y^{(t)})
\;\le\;
I(Z_s;Y^S) + TC(Y^S).
\label{eq:subset_sandwich}
\end{equation}
\end{corollary}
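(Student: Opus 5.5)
The plan is to apply Lemma~\ref{lem:sandwich} and \cref{thm:cr} with the index set $[T]$ replaced by $S$. Lemma~\ref{lem:sandwich} is stated for an arbitrary finite collection $(Z,Y^{(1)},\dots,Y^{(T)})$, so we relabel the tasks in $S$ as $Y^{(1)},\dots,Y^{(|S|)}$ and take $Z=Z_s$. This gives \cref{eq:subset_sandwich} immediately, with $TC(Y^S)=\sum_{t\in S}H(Y^{(t)})-H(Y^S)$ as defined above. For completeness, we would recall the mechanism. The exact identity
\[
\sum_{t\in S} I(Z_s;Y^{(t)}) = I(Z_s;Y^S) + TC(Y^S) - TC(Y^S\mid Z_s)
\]
follows by expanding each mutual information as $H(\cdot)-H(\cdot\mid Z_s)$ and regrouping. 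The upper bound then uses $TC(Y^S\mid Z_s)\ge 0$. The lower bound uses $TC(Y^S\mid Z_s)\le TC(Y^S)$, or equivalently it follows directly from the chain rule, since $I(Z_s;Y^{(t)}\mid Y^{(<t)},\cdot)$ terms can be compared via $H(Y^{(t)}\mid Y^{(<t)})\le H(Y^{(t)})$.

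For \cref{eq:subset_cr}, we would bound $I(Z_s;Y^S)\le C_s$. The Markov chain $Y^{(1:T)}-X-Z_s$ implies the chain $Y^S-X-Z_s$, since marginalizing the tasks outside $S$ preserves $I(Z_s;Y^S\mid X)\le I(Z_s;Y^{(1:T)}\mid X)=0$. The data processing inequality then gives $I(Z_s;Y^S)\le I(Z_s;X)\le C_s$. Substituting this into the right half of \cref{eq:subset_sandwich} yields the claim.

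The only step requiring care is the inheritance of the Markov property by the subcollection. The inequality $I(Z_s;Y^S\mid X)\le I(Z_s;Y^{(1:T)}\mid X)$ holds by the chain rule for mutual information, because the difference is $I(Z_s;Y^{[T]\setminus S}\mid X,Y^S)\ge 0$. Everything else is a direct re-indexing of \cref{thm:cr}.
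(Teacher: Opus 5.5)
Your treatment of \eqref{eq:subset_cr} and of the right half of \eqref{eq:subset_sandwich} is correct and follows the paper's route. You apply the CR identity to $Y^S$, use $\TC(Y^S\mid Z_s)\ge 0$, and then apply data processing, $I(Z_s;Y^S)\le I(Z_s;X)\le C_s$. The appendix proof of the CR bound says only that the subset case ``repeats the same argument for $Y_S$.'' Your check that the Markov chain passes to the subcollection, via $I(Z_s;Y^S\mid X)\le I(Z_s;Y^{(1:T)}\mid X)=0$, fills in a step the paper leaves implicit.

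The gap is the left half of \eqref{eq:subset_sandwich}. You correctly reduce it, via the identity, to $\TC(Y^S\mid Z_s)\le \TC(Y^S)$, but that inequality is false in general. Conditioning can increase total correlation, just as it can increase mutual information; ``conditioning reduces entropy'' gives no monotonicity for a difference of entropies. Your chain-rule justification fails for the same reason. In $I(Z_s;Y^{(t)}\mid Y^{(<t)})=H(Y^{(t)}\mid Y^{(<t)})-H(Y^{(t)}\mid Y^{(<t)},Z_s)$, conditioning on $Y^{(<t)}$ lowers both terms, so no comparison with $I(Z_s;Y^{(t)})$ follows.

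Concretely, take $Y^{(1)},Y^{(2)}$ independent fair bits, $X=(Y^{(1)},Y^{(2)})$, $Z_s=Y^{(1)}\oplus Y^{(2)}$, $C_s=1$ bit and $S=\{1,2\}$. The Markov chain and the budget hold. Yet $I(Z_s;Y^S)=1$ while $I(Z_s;Y^{(1)})=I(Z_s;Y^{(2)})=0$; equivalently $\TC(Y^S\mid Z_s)=1>0=\TC(Y^S)$. So the left inequality cannot be proved under the stated hypotheses.

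Deferring to Lemma~\ref{lem:sandwich}, as the paper does, does not rescue it, because that lemma's left inequality fails on the same example. What is actually provable is the exact identity, the right inequality, and \eqref{eq:subset_cr}. The left inequality needs an extra assumption such as $\TC(Y^S\mid Z_s)\le\TC(Y^S)$, which holds for instance when the labels in $S$ are conditionally independent given $Z_s$. Under that assumption it follows immediately from the identity.
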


The subset bound is usually tighter than~\cref{eq:cr} because redundancy is not uniform, tasks often form clusters with high intra-cluster dependence but low inter-cluster dependence. This refinement suggests that shared capacity should be allocated relative to $TC(Y^S)$ within each cluster, with respect to the global quantity $TC(Y^{(1:T)})$.

\textbf{Side information and conditional CR.} Let $W$ denote side information available when constructing or analyzing the representation. The conditional total correlation is defined as $TC(Y^{(1:T)}\mid W)
\;=\;
\sum_{t=1}^T H(Y^{(t)}\mid W) 
\;-\; H(Y^{(1:T)}\mid W)
\;\ge\; 0.$

\begin{theorem}[Conditional CR inequality]\label{thm:cond_cr}
Under a conditional Markov property $Y^{(1:T)} \;-\; X \;-\; Z_s \quad \text{given } W$, and a shared capacity budget  $I(Z_s;X\mid W)\le C_s(W)$,
\begin{equation}
\sum_{t=1}^T I(Z_s;Y^{(t)}\mid W)
\;\le\;
C_s(W) + TC(Y^{(1:T)}\mid W).
\label{eq:cond_cr}
\end{equation}
More generally, for any subset $S\subseteq[T]$,
\begin{equation}
\sum_{t\in S} I(Z_s;Y^{(t)}\mid W)
\;\le\;
C_s(W) + TC(Y^S\mid W).
\label{eq:cond_subset_cr}
\end{equation}
\end{theorem}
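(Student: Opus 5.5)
The plan is to mirror the proof of \cref{thm:cr}, with every entropy and mutual information conditioned on $W$. The first step is a conditional version of the chain-rule sandwich (Lemma~\ref{lem:sandwich}): for any $Z$ and $W$,
\[
I(Z;Y^{(1:T)}\mid W)\le \sum_{t=1}^T I(Z;Y^{(t)}\mid W)\le I(Z;Y^{(1:T)}\mid W)+TC(Y^{(1:T)}\mid W).
\]
To prove it, I write $I(Z;Y^{(t)}\mid W)=H(Y^{(t)}\mid W)-H(Y^{(t)}\mid Z,W)$ and sum over $t$. Using the definition of conditional total correlation, this gives the exact identity
\[
\sum_{t} I(Z;Y^{(t)}\mid W)=I(Z;Y^{(1:T)}\mid W)+TC(Y^{(1:T)}\mid W)-TC(Y^{(1:T)}\mid Z,W).
\]
Both inequalities of the sandwich then follow from two facts. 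First, $TC(\cdot\mid Z,W)\ge 0$, which is the conditional independence bound on entropy. Second, $TC(Y^{(1:T)}\mid W)-TC(Y^{(1:T)}\mid Z,W)\ge 0$. For the second fact I would avoid arguing monotonicity of TC in the conditioning set, which is false in general. Instead I obtain the lower bound from the chain rule directly: $I(Z;Y^{(1:T)}\mid W)=\sum_t I(Z;Y^{(t)}\mid Y^{(1:t-1)},W)$. Only the upper bound is needed for \cref{eq:cond_cr}, and it needs only $TC(\cdot\mid Z,W)\ge 0$.

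The second step is conditional data processing. The conditional Markov property $Y^{(1:T)}-X-Z_s$ given $W$ means $I(Z_s;Y^{(1:T)}\mid X,W)=0$. Expanding $I(Z_s;X,Y^{(1:T)}\mid W)$ by the chain rule in two orders gives
\[
I(Z_s;Y^{(1:T)}\mid W)\le I(Z_s;X\mid W)\le C_s(W).
\]
Substituting this into the upper half of the conditional sandwich, with $Z=Z_s$, yields \cref{eq:cond_cr}.

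For the subset statement \cref{eq:cond_subset_cr}, I repeat the argument with $Y^S$ in place of $Y^{(1:T)}$. The conditional Markov property for the full label vector implies it for $Y^S$, because $I(Z_s;Y^S\mid X,W)\le I(Z_s;Y^{(1:T)}\mid X,W)=0$. Hence $I(Z_s;Y^S\mid W)\le C_s(W)$, and the conditional sandwich restricted to $S$ gives the bound. This is the same argument by which Corollary~\ref{cor:subset_cr} follows from \cref{thm:cr}.

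I expect the main obstacle to be interpretive rather than technical. One must be careful that "$C_s(W)$" is a deterministic bound on the averaged conditional mutual information $I(Z_s;X\mid W)$, not a pointwise bound for each value $w$. If it is instead read pointwise, $I(Z_s;X\mid W=w)\le C_s(w)$, then I would run the whole argument for each fixed $w$ and state the bound per $w$; averaging over $w$ recovers the version above. I would also check the edge case where $W$ is a function of $X$. There the Markov condition still holds, and the argument goes through unchanged.
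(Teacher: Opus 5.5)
Your argument is correct and is essentially the paper's proof. The paper likewise combines the conditional CR identity with $TC(Y^{(1:T)}\mid Z_s,W)\ge 0$ and conditional data processing, $I(Z_s;Y^{(1:T)}\mid W)\le I(Z_s;X\mid W)\le C_s(W)$, and then repeats the same argument for $Y^S$.

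Your side claim about the lower half of the sandwich is wrong, though. The chain rule does not give it, because $I(Z;Y^{(t)}\mid Y^{(1:t-1)},W)$ can exceed $I(Z;Y^{(t)}\mid W)$. For example, take $Z=Y^{(1)}\oplus Y^{(2)}$ with independent fair bits: then $I(Z;Y^{(1:2)})=1>0=\sum_t I(Z;Y^{(t)})$, so that half is false in general. As you note, the statement does not need it.
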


\textbf{Residual coupling with side information.}
The exact decomposition also holds conditionally $\sum_{t=1}^T I(Z_s;Y^{(t)}\mid W) = I(Z_s;Y^{(1:T)}\mid W) + \Big(TC(Y^{(1:T)}\mid W)-TC(Y^{(1:T)}\mid Z_s,W)\Big)$ and the conditional slack $\Delta(W) = TC(Y^{(1:T)}\mid Z_s,W)\ge 0$ again measures residual task coupling after observing $Z_s$, now at a fixed value of side information $W$. During training, the representation parameters depend on labels, so the unconditional Markov chain $Y^{(1:T)}-X-Z_s$ need not hold. However, at evaluation time weights are fixed making it possible to use conditional Markov property, DPI and CR arguments at test time. This is crucial as it separates learning dynamics from the representational limit that are the main focus of this work.

If $W$ includes task identity or context, then $TC(Y^{(1:T)}\mid W)$ can be much smaller than $TC(Y^{(1:T)})$, since conditioning can decouple labels across tasks. In such a setting, using conditional adapters or MoE routing increase effective performance by reducing conditional redundancy terms or allowing $C_s(W)$ to vary with $W$. The conditional CR bound \cref{eq:cond_cr} serves as a unified lens for these designs. It also covers task-dependent input distributions by taking $W$ to include task identity or context.

\textbf{Implications for clustered sharing.} Subset and conditional refinements together suggest to first estimate task similarity (as a proxy for dependence), cluster tasks into subsets $\{S_k\}$, and allocate shared capacity per cluster. Formally, applying \cref{eq:subset_cr} within each cluster gives $\sum_{t\in S_k} I(Z_s^{(k)};Y^{(t)}) \;\le\; C_s^{(k)} + TC(Y^{S_k})$ so at fixed total budget $\sum_k C_s^{(k)}$, clustered sharing targets the local redundancy structure rather than paying for global heterogeneity. We refer readers to Appendix~\cref{sec:gaussian} for a Gaussian model where both shared capacity and redundancy can be computed in closed form.
\section{LoRA Fine-tuning}
\label{sec:lora}
Next, we introduce a capacity proxy to link adapter rank to an effective shared budget $C_s$ that results in a testable predictions about when globally shared low-rank updates saturate and when clustered/private updates are necessary.

\textbf{LoRA as a low-rank information channel.} Consider a frozen backbone with a hidden representation $\phi(X)\in\mathbb{R}^d$. A LoRA update replaces a linear map $W\in\mathbb{R}^{o\times d}$ by
\begin{equation}
W \;\mapsto\; W + \Delta W,
\quad
\Delta W = B A,
\quad
A\in\mathbb{R}^{r\times d},\; B\in\mathbb{R}^{o\times r},
\label{eq:lora_update}
\end{equation}
where $r\ll \min\{o,d\}$ is the LoRA rank. We view LoRA branch as a feature channel that compresses the input into a low-dimensional signal $Z_s \;=\; A\,\phi(X) \in \mathbb{R}^r,$ which is then mixed into the output via $B$. In a multi task setup, this implies a structural constraint that tasks sharing the same LoRA factors share the same bottleneck $Z_s$. Thus, the shared representation capacity is controlled by the rank $r$ and the scale of $A$ relative to the feature distribution $\phi(X)$. This is consistent with the intrinsic-dimensionality perspective~\citep{aghajanyan-etal-2021-intrinsic}.

\textbf{CR predictions for LoRA design.} Let us consider $T$ tasks fine-tuned by sharing the same LoRA factors $(A,B)$ (and thus the same bottleneck $Z_s$). Applying~\cref{thm:cr} with shared budget $C_s \approx I(Z_s;\phi(X))$ the capacity can be upper bound via \cref{eq:lora_capacity_exact}. If tasks are heterogeneous so that $TC(Y^{(1:T)})$ is small, then CR prediction implies that the sum of task-relevant information is controlled primarily by $I(Z_s;\phi(X))$, which grows slowly with $r$. Consequently, performance under a single globally shared LoRA should saturate quickly as rank increases, and adding tasks without increasing rank induces interference (\cref{sec:consequences}).

\begin{table}
\centering
\begin{tabular}{lccc}
\toprule
 & \textbf{r=4} & \textbf{r=8} & \textbf{r=16} \\
\midrule
Shared LoRA      & 0.763 & 0.773 & 0.755 \\
Private LoRA     & 0.796 & 0.795 & 0.797 \\
Clustered LoRA   & 0.812 & 0.817 & 0.806 \\
\bottomrule
\end{tabular}
\caption{\textbf{Mean validation accuracy} on SST2/MRPC/RTE for budget-valid total ranks. Clustered sharing consistently outperforms global sharing under matched rank budgets.}
\label{tab:glue_rank}
\end{table}
To obtain a per-task information level $I(Z_s;Y^{(t)})\gtrsim b$ across many weakly dependent tasks, Corollary~\ref{cor:linear_capacity} tells us that the total shared information $I(Z_s;\phi(X))$ must scale approximately like $Tb$ (modulo redundancy). Under our rank-controlled bound, LoRA rank must grow roughly linearly in the number of distinct task directions or clusters. If tasks form clusters $\{S_k\}$ with high within-cluster dependence but low across-cluster dependence, then we can use the subset CR bound (\cref{cor:subset_cr}), sharing LoRA within each cluster and allocating rank per cluster,
\[
\sum_{t\in S_k} I(Z_s^{(k)};Y^{(t)}) \;\le\; C_s^{(k)} + TC(Y^{S_k}),
\quad
\sum_k C_s^{(k)}\le C_{\mathrm{tot}}.
\]

\textbf{Estimating residual coupling \texorpdfstring{$\Delta$}{Delta} from validation residuals.} We use validation data $\{(x_i,y_i^{(1:T)})\}_{i=1}^n$ to compute per-task residuals
$e_{t}^{(i)} = y_{t}^{(i)} - \hat y_{t}^{(i)}$. Let $\widehat{R}_e$ be the sample correlation matrix of $(e_1,\dots,e_T)$. We approximate $\widehat{\Delta}
\;=\;
-\frac12 \log\det(\widehat{R}_e+\epsilon I)
\;\approx\;
\TC\!\left(Y^{(1:T)}\mid Z_s\right),$ and report it across different settings. In NLP experiments we use $\epsilon=10^{-5}$; the synthetic sensitivity sweep below shows this value is in the near-unbiased regime for our validation sample sizes, while larger $\epsilon$ persistently underestimates the conditional TC.
\section{Empirical Validation}
\subsection{Validation on synthetic data.}
\label{sec:synthetic}
\textbf{CR inequality for Gaussian case.} We first verify the CR inequality in a linear--Gaussian multitask model where all terms are available in closed form, and show that the bound is tight once the representation spans the task subspace. In particular, when the encoder row-space contains the span of task directions, the residual coupling term vanishes, while under rank constraints the CR slack equals the conditional total correlation $\TC(Y^{1:T}\mid Z)$. Full derivations and controlled sweeps across redundancy, encoder alignment, and rank are provided in Appendix~\ref{sec:supp-linear-gaussian}.

\textbf{Validation for $\widehat{\Delta}$.} We decouple estimator behavior from representation learning using a synthetic Gaussian multitask model where the target conditional TC quantity is known in closed form. We then estimate $\widehat{\Delta}$ from $N$ i.i.d.\ samples and study both absolute error and signed bias across ridge parameters $\varepsilon$. Figure~\ref{fig:gaussian} shows that small $\varepsilon$ in a near-unbiased estimates as $N$ increases, while larger $\varepsilon$ induces a persistent negative bias that does not vanish with sample size. At the largest sample size ($N=16384$), Table~\ref{tab:gaussian_eps} quantifies this effect $\varepsilon\in\{10^{-6},10^{-5}\}$ achieves small error, whereas $\varepsilon\ge 10^{-4}$ results in an underestimation. This shows that the $\varepsilon$ acts as a regularization term for a log-determinant inverse problem, and using small $\varepsilon$ in downstream evaluations. This sensitivity analysis directly motivates the choice $\epsilon=10^{-5}$ used for the LoRA residual-coupling estimates.

\begin{figure}
    \centering
    \includegraphics[width=\linewidth]{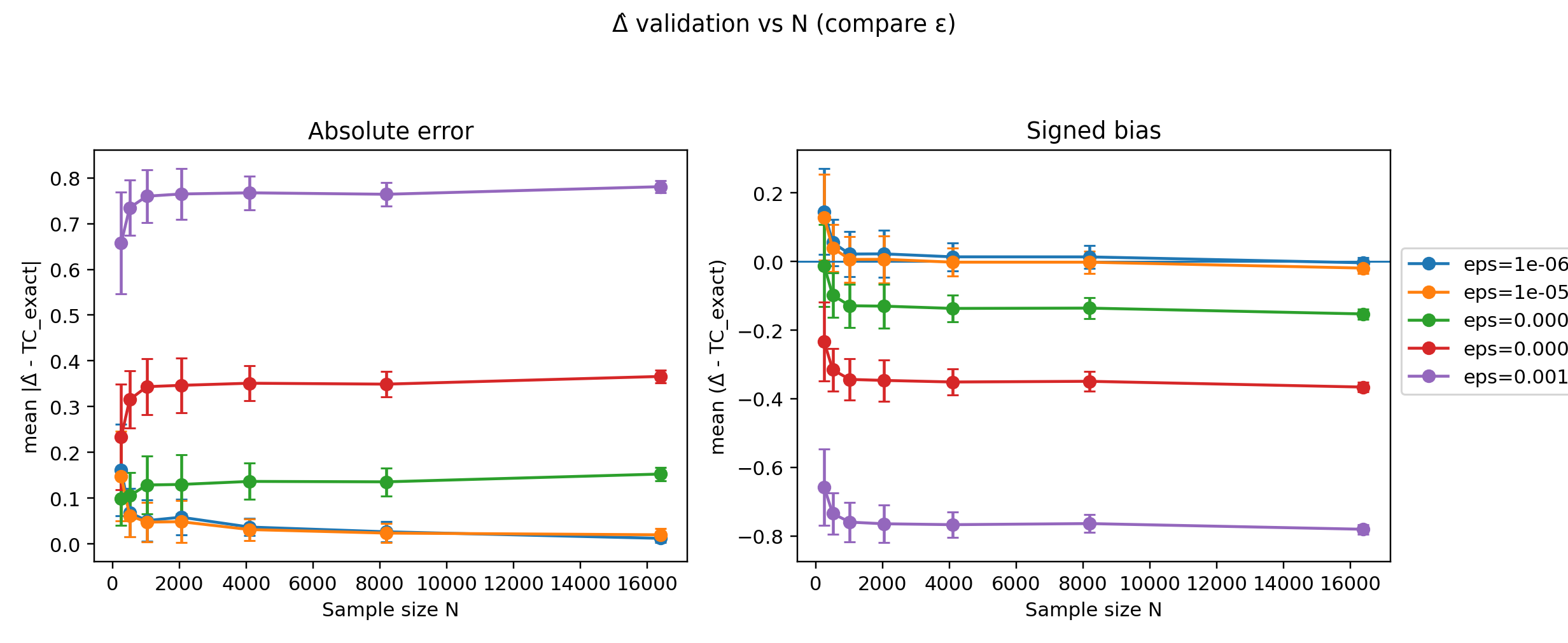}
  \caption{Mean absolute error $|\widehat{\Delta}-\mathrm{TC}_{\text{exact}}|$ vs sample size $N$ on the left. Signed bias $\widehat{\Delta}-\mathrm{TC}_{\text{exact}}$ vs $N$ on the right.
  Small $\varepsilon$ converges toward the ground truth as $N$ increases, while larger $\varepsilon$ results in a persistent negative bias.}
  \label{fig:gaussian}
\end{figure}

\begin{table}[t]
\centering
\caption{Gaussian setting: error of $\hat\Delta$ at the largest sample size ($N=16384$, 10 seeds).}
\label{tab:gaussian_eps}
\begin{tabular}{lrrrr}
\toprule
$\varepsilon$ & \shortstack{mean $|\hat\Delta-\mathrm{TC}|$} & std & \shortstack{mean\\signed} & std \\
\midrule
1e-06 & 0.0123 & 0.0089 & -0.0037 & 0.0153 \\
1e-05 & 0.0201 & 0.0138 & -0.0191 & 0.0152 \\
1e-04 & 0.1528 & 0.0148 & -0.1528 & 0.0148 \\
3e-04 & 0.3660 & 0.0143 & -0.3660 & 0.0143 \\
1e-03 & 0.7807 & 0.0138 & -0.7807 & 0.0138 \\
\bottomrule
\end{tabular}
\end{table}

\subsection{CR aligned PEFT.}
We now evaluate applicability of CR in a real setting by training multi-task PEFT with a frozen \texttt{bert-base-uncased} encoder and LoRA adapters applied to attention query and value projections.

\paragraph{GoEmotions Dataset.} 
We use GoEmotions~\citep{demszky-etal-2020-goemotions} dataset which consists of shared inputs and multiple binary labels. We treat each label as a task sharing the same input text and select twelve most frequent GoEmotions labels (fixed across runs) and evaluate each task using AUROC on the validation split and report the macro-average AUROC across the chosen tasks.

We analyze two finetuning setting under a fixed LoRA rank budget $r_{\mathrm{tot}} \in \{4,8,12,16,24,32\}$: using one shared adapter across all tasks, and cluster tasks into K clusters based on similarity and use K adapters. For clustered routing, the clustering is done once using gradient-similarity probes (details in Appendix~\ref{sec:supp-setup}). We also track $\widehat{\Delta}$ as a diagnostic of residual dependence. Figure~\ref{fig:goemotions_auc} shows that similarity-based routing substantially improves performance under small budgets. From $r_{\mathrm{tot}}=8$ onward, clustered consistently outperforms shared in average AUCROC (Table~\ref{tab:goemotions_auc}). Moreover, the clustered routing results in a systematically lower $\widehat{\Delta}$ at matched rank budgets (Figure~\ref{fig:goemotions_delta}) which shows routing reduces residual interference in the learned predictor.

\begin{figure}[t]
    \centering
    \includegraphics[width=0.7\linewidth]{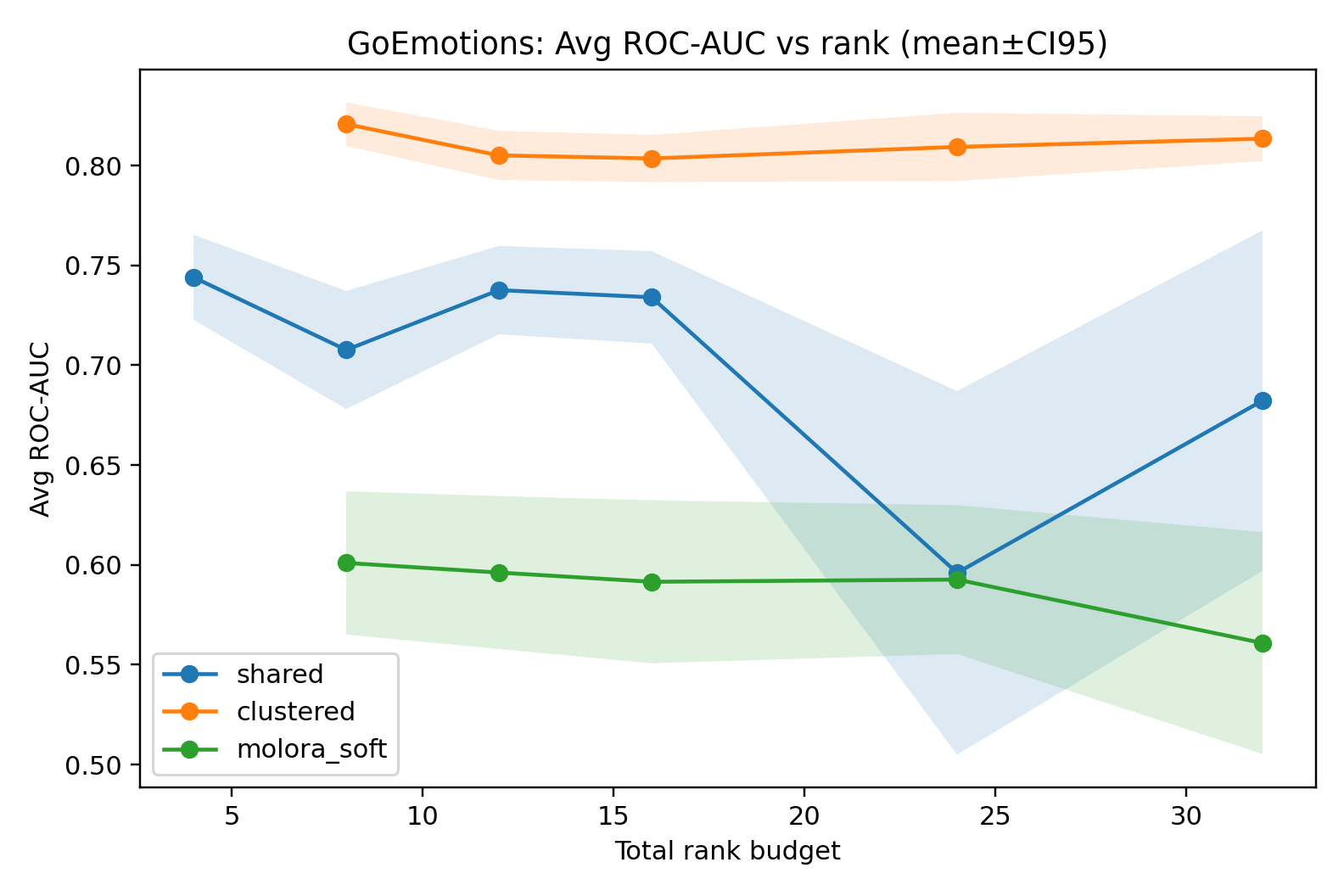}
    \caption{Macro-average AUROC on the validation split as a function of total LoRA rank budget $r_{\mathrm{tot}}$. Clustered routing consistently improves over naive sharing from $r_{\mathrm{tot}}=8$ onward. Shaded regions denote CI.}
    \label{fig:goemotions_auc}
\end{figure}

\begin{figure}[t]
    \centering
    \includegraphics[width=0.7\linewidth]{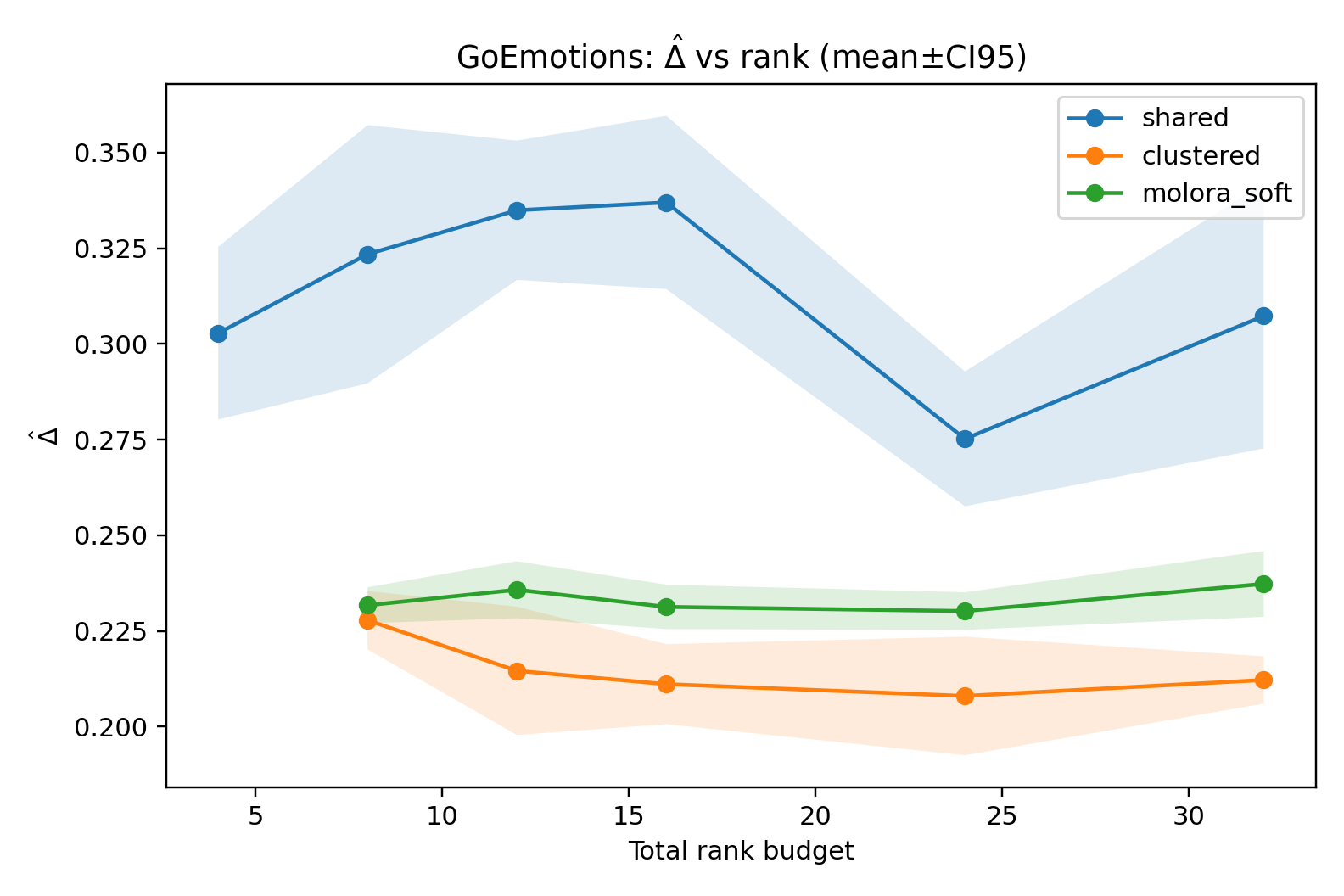}
  \caption{$\widehat{\Delta}$ computed from validation residuals vs total rank budget.
  Clustered routing results in a systematically smaller $\widehat{\Delta}$ than sharing at matched budgets (rank $\ge 8$), showing reduced residual dependence under similarity-guided finetuning.}
  \label{fig:goemotions_delta}
\end{figure}

\begin{table}[t]
\centering
\caption{Average AUROC (mean $\pm$ CI95) vs total rank budget.}
\label{tab:goemotions_auc}
\begin{tabular}{lccc}
\toprule
Rank & Shared & Clustered \\
\midrule
4 & 0.744 $\pm$ 0.021 & -- \\
8 & 0.708 $\pm$ 0.030 & 0.821 $\pm$ 0.011 \\
12 & 0.738 $\pm$ 0.022 & 0.805 $\pm$ 0.012\\
16 & 0.734 $\pm$ 0.023 & 0.804 $\pm$ 0.012 \\
24 & 0.596 $\pm$ 0.091 & 0.809 $\pm$ 0.017 \\
32 & 0.682 $\pm$ 0.085 & 0.813 $\pm$ 0.011 \\
\bottomrule
\end{tabular}
\end{table}

\paragraph{GLUE8 benchmark.} 
We evaluate on eight GLUE tasks (SST-2, MRPC, RTE, QNLI, QQP, MNLI, CoLA, STS-B) using task metrics and report the average score across tasks.(Appendix~\ref{sec:supp-setup}). Figure~\ref{fig:glue8} shows that clustered and private adapters provide modest but consistent gains over naive sharing for budgets where routing is feasible (rank $\ge 8$ in our setup). Unlike GoEmotions, the improvements on GLUE are smaller in magnitude, which can be attributed to heterogeneous task formats and limited sharing opportunities once syntax and sentiment tasks are isolated by similarity. This asymmetry is consistent with the CR prediction: GoEmotions labels are subcategories of a shared emotion taxonomy with higher cross-label redundancy, whereas GLUE8 combines heterogeneous task formats with weaker cross-task structure, so clustering has less redundant signal to reuse. Nevertheless, the results show similarity-guided finetuning improves transfer under a parameter budget.

\begin{figure}
    \centering
    \includegraphics[width=0.7\linewidth]{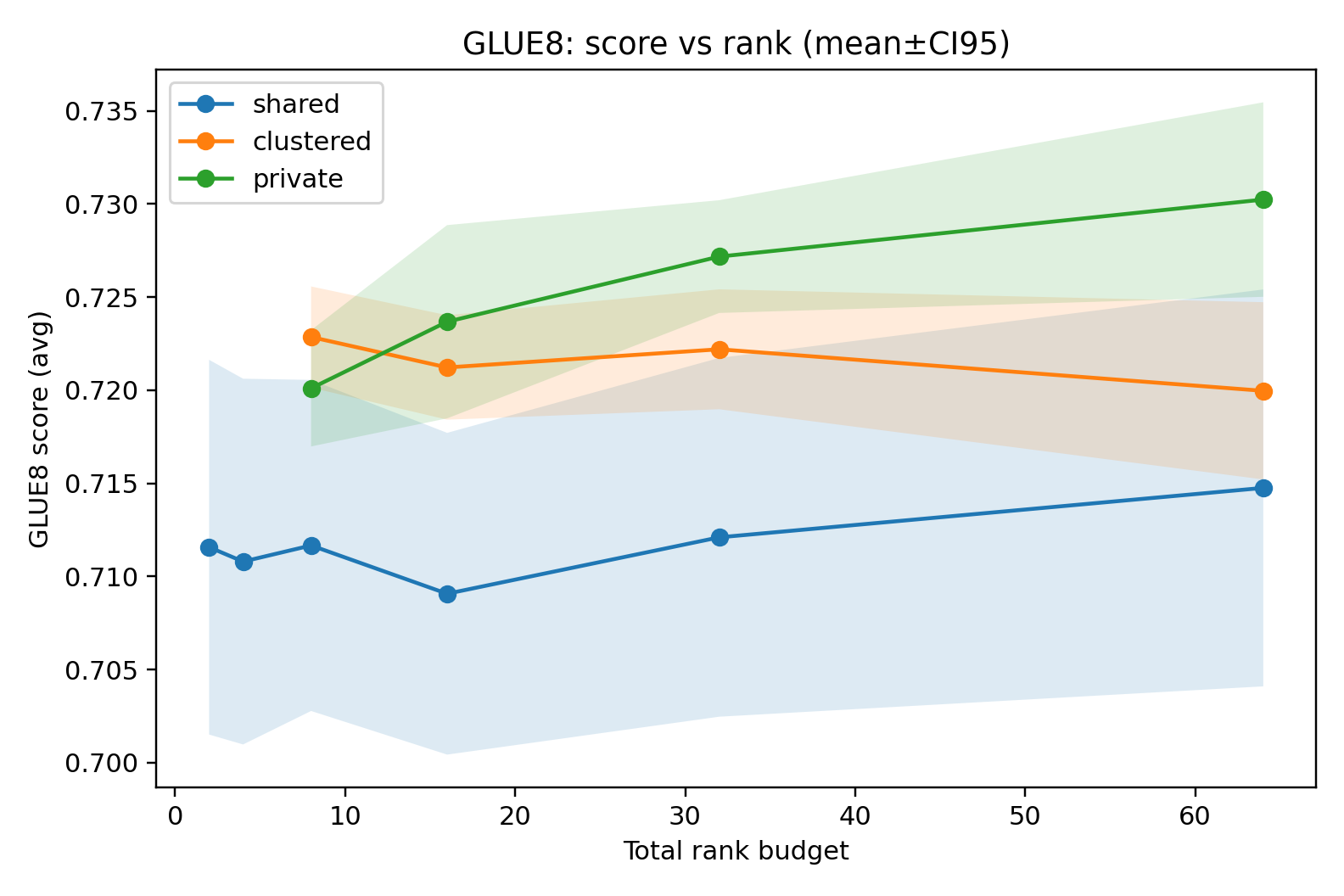}
  \caption{Shared is evaluated at all budgets, clustered/private are feasible at rank $\ge 8$ due to the number of adapters/tasks. We observe consistent improvement over naive sharing for the feasible budgets.}
  \label{fig:glue8}
\end{figure}

\begin{table}[t]
\centering
\caption{GLUE8: average score vs total rank budget (mean $\pm$ CI95) using official task metrics.}
\label{tab:glue8_main_curve}
\begin{tabular}{lrrrr}
\toprule
Rank & Regime & n & \shortstack{Score\\mean} & \shortstack{Score\\ci95} \\
\midrule
2 & shared & 10 & 0.712 & 0.010 \\
2 & clustered & 0 & -- & -- \\
2 & private & 0 & -- & -- \\
4 & shared & 10 & 0.711 & 0.010 \\
4 & clustered & 0 & -- & -- \\
4 & private & 0 & -- & -- \\
8 & shared & 10 & 0.712 & 0.009 \\
8 & clustered & 10 & 0.723 & 0.003 \\
8 & private & 10 & 0.720 & 0.003 \\
16 & shared & 10 & 0.709 & 0.009 \\
16 & clustered & 10 & 0.721 & 0.003 \\
16 & private & 10 & 0.724 & 0.005 \\
32 & shared & 10 & 0.712 & 0.010 \\
32 & clustered & 10 & 0.722 & 0.003 \\
32 & private & 10 & 0.727 & 0.003 \\
64 & shared & 7 & 0.715 & 0.011 \\
64 & clustered & 7 & 0.720 & 0.005 \\
64 & private & 6 & 0.730 & 0.005 \\
\bottomrule
\end{tabular}
\end{table}

Overall, our empirical results across GoEmotions and GLUE8 demonstrate that routing based on gradient similarity improves performance under fixed adapter capacity. So when tasks compete for limited shared capacity a naive sharing can amplify interference while specialization restores effective capacity aligned with our CR perspective.

On GoEmotions, clustered routing reduces $\widehat{\Delta}$ alongside improving AUROC (Figure~\ref{fig:goemotions_delta}). The Gaussian experiment clarifies how $\widehat{\Delta}$ acts like a regularization, increasing $N$ reduces variance, while $\varepsilon$ controls bias (Figure~\ref{fig:gaussian}). Thus, $\widehat{\Delta}$ can be used as a compact diagnostic of residual dependence in multi-task settings.

\section{Conclusion}
We introduced the CR inequality, a distribution-level converse that upper-bounds the total predictive information extractable from a shared latent by its information capacity plus the redundancy among tasks. This result explains negative transfer as an unavoidable consequence of low redundancy and finite capacity, and provides a principled approach for the design of shared--private architectures, including LoRA-based fine-tuning schemes.
Furthermore, we demonstrate a similarity-guided routing approach in PEFT improves performance on multitask under strict capacity budgets. Overall, our results suggest that under limited adapter capacity sharing parameters based on gradient based task similarity provides a practical basis for budget-aware specialization. Because TC measures redundancy, not synergy, CR does not capture complementary information across tasks. Extensions via partial information decomposition or multivariate interaction measures (e.g., $O$-information~\citep{rosas2019quantifying}) are possible future directions to go beyond redundancy. Moreover, extending the analysis to dynamic training settings may explain how optimization implicitly allocates capacity across tasks. 

\bibliography{uai}

@article{caruana1997multitask,
  title={Multitask learning},
  author={Caruana, Rich},
  journal={Machine Learning},
  volume={28},
  number={1},
  pages={41--75},
  year={1997},
  publisher={Springer}
}

@article{ruder2017overview,
  title={An overview of multi-task learning in deep neural networks},
  author={Ruder, Sebastian},
  journal={arXiv preprint arXiv:1706.05098},
  year={2017}
}

@inproceedings{zhang2014facial,
  title={Facial landmark detection by deep multi-task learning},
  author={Zhang, Zhanpeng and Luo, Ping and Loy, Chen Change and Tang, Xiaoou},
  booktitle={European Conference on Computer Vision},
  pages={94--108},
  year={2014},
  organization={Springer}
}

@inproceedings{misra2016cross,
  title={Cross-stitch networks for multi-task learning},
  author={Misra, Ishan and Shrivastava, Abhinav and Gupta, Abhinav and Hebert, Martial},
  booktitle={Proceedings of the IEEE conference on computer vision and pattern recognition},
  pages={3994--4003},
  year={2016}
}

@inproceedings{collobert2008unified,
  title={A unified architecture for natural language processing: Deep neural networks with multitask learning},
  author={Collobert, Ronan and Weston, Jason},
  booktitle={Proceedings of the 25th international conference on Machine learning},
  pages={160--167},
  year={2008}
}

@article{deng2013new,
  title={New types of deep neural network learning for speech recognition and related applications: An overview},
  author={Deng, Li and Hinton, Geoffrey and Kingsbury, Brian},
  journal={ICASSP},
  pages={8599--8603},
  year={2013}
}

@inproceedings{wang2019characterizing,
  title={Characterizing and avoiding negative transfer},
  author={Wang, Zirui and Dai, Zihang and P{\'o}czos, Barnab{\'a}s and Carbonell, Jaime},
  booktitle={Proceedings of the IEEE/CVF conference on computer vision and pattern recognition},
  pages={11293--11302},
  year={2019}
}

@inproceedings{houlsby2019parameter,
  title={Parameter-efficient transfer learning for NLP},
  author={Houlsby, Neil and Giurgiu, Andrei and Jastrzebski, Stanislaw and Morrone, Bryan and De Laroussilhe, Quentin and Gesmundo, Andrea and Attariyan, Mona and Gelly, Sylvain},
  booktitle={International Conference on Machine Learning},
  pages={2790--2799},
  year={2019}
}

@inproceedings{li2021prefix,
    title = "Prefix-Tuning: Optimizing Continuous Prompts for Generation",
    author = "Li, Xiang Lisa  and
      Liang, Percy",
    editor = "Zong, Chengqing  and
      Xia, Fei  and
      Li, Wenjie  and
      Navigli, Roberto",
    booktitle = "Proceedings of the 59th Annual Meeting of the Association for Computational Linguistics and the 11th International Joint Conference on Natural Language Processing (Volume 1: Long Papers)",
    month = aug,
    year = "2021",
    address = "Online",
    publisher = "Association for Computational Linguistics",
    url = "https://aclanthology.org/2021.acl-long.353/",
    doi = "10.18653/v1/2021.acl-long.353",
    pages = "4582--4597"
}

@article{ding2023parameter,
  title={Parameter-efficient fine-tuning of large-scale pre-trained language models},
  author={Ding, Ning and Qin, Yujia and Yang, Guang and Wei, Fuchao and Yang, Zonghan and Su, Yusheng and Hu, Shengding and Chen, Yulin and Chan, Chi-Min and Chen, Weize and others},
  journal={Nature machine intelligence},
  volume={5},
  number={3},
  pages={220--235},
  year={2023},
  publisher={Nature Publishing Group UK London}
}

@article{hu2022lora,
  title={Lora: Low-rank adaptation of large language models.},
  author={Hu, Edward J and Shen, Yelong and Wallis, Phillip and Allen-Zhu, Zeyuan and Li, Yuanzhi and Wang, Shean and Wang, Lu and Chen, Weizhu and others},
  journal={ICLR},
  volume={1},
  number={2},
  pages={3},
  year={2022}
}

@article{he2021towards,
  title={Towards a unified view of parameter-efficient transfer learning},
  author={He, Junxian and Zhou, Chunting and Ma, Xuezhe and Berg-Kirkpatrick, Taylor and Neubig, Graham},
  journal={arXiv preprint arXiv:2110.04366},
  year={2021}
}

@article{karimi2021compacter,
  title={Compacter: Efficient low-rank hypercomplex adapter layers},
  author={Karimi Mahabadi, Rabeeh and Henderson, James and Ruder, Sebastian},
  journal={Advances in neural information processing systems},
  volume={34},
  pages={1022--1035},
  year={2021}
}

@article{mahabadi2021parameter,
  title={Parameter-efficient multi-task fine-tuning for transformers via shared hypernetworks},
  author={Mahabadi, Rabeeh Karimi and Ruder, Sebastian and Dehghani, Mostafa and Henderson, James},
  journal={arXiv preprint arXiv:2106.04489},
  year={2021}
}

@article{yu2020gradient,
  title={Gradient surgery for multi-task learning},
  author={Yu, Tianhe and Kumar, Saurabh and Gupta, Abhishek and Levine, Sergey and Hausman, Karol and Finn, Chelsea},
  journal={Advances in neural information processing systems},
  volume={33},
  pages={5824--5836},
  year={2020}
}

@inproceedings{chen2018gradnorm,
  title={Gradnorm: Gradient normalization for adaptive loss balancing in deep multitask networks},
  author={Chen, Zhao and Badrinarayanan, Vijay and Lee, Chen-Yu and Rabinovich, Andrew},
  booktitle={International conference on machine learning},
  pages={794--803},
  year={2018},
  organization={PMLR}
}

@article{gururangan2021demix,
  title={Demix layers: Disentangling domains for modular language modeling},
  author={Gururangan, Suchin and Lewis, Mike and Holtzman, Ari and Smith, Noah A and Zettlemoyer, Luke},
  journal={arXiv preprint arXiv:2108.05036},
  year={2021}
}

@article{mudrakarta2018k,
  title={K for the price of 1: Parameter-efficient multi-task and transfer learning},
  author={Mudrakarta, Pramod Kaushik and Sandler, Mark and Zhmoginov, Andrey and Howard, Andrew},
  journal={arXiv preprint arXiv:1810.10703},
  year={2018}
}

@article{watanabe1960information,
  title={Information theoretical analysis of multivariate correlation},
  author={Watanabe, Satosi},
  journal={IBM Journal of research and development},
  volume={4},
  number={1},
  pages={66--82},
  year={1960},
  publisher={IBM}
}

@book{cover1999elements,
  title={Elements of information theory},
  author={Cover, Thomas M},
  year={1999},
  publisher={John Wiley \& Sons}
}

@article{sun1975linear,
  title={Linear dependence structure of the entropy space},
  author={Sun, TH},
  journal={Inf. Control},
  volume={29},
  number={4},
  pages={337--368},
  year={1975},
  publisher={Elsevier}
}

@article{madiman2010information,
  title={Information inequalities for joint distributions, with interpretations and applications},
  author={Madiman, Mokshay and Tetali, Prasad},
  journal={IEEE Transactions on Information Theory},
  volume={56},
  number={6},
  pages={2699--2713},
  year={2010},
  publisher={IEEE}
}

@inproceedings{evgeniou2004regularized,
  title={Regularized multi--task learning},
  author={Evgeniou, Theodoros and Pontil, Massimiliano},
  booktitle={Proceedings of the tenth ACM SIGKDD international conference on Knowledge discovery and data mining},
  pages={109--117},
  year={2004}
}

@article{ando2005framework,
  title={A framework for learning predictive structures from multiple tasks and unlabeled data.},
  author={Ando, Rie Kubota and Zhang, Tong and Bartlett, Peter},
  journal={Journal of machine learning research},
  volume={6},
  number={11},
  year={2005}
}

@article{argyriou2008convex,
  title={Convex multi-task feature learning},
  author={Argyriou, Andreas and Evgeniou, Theodoros and Pontil, Massimiliano},
  journal={Machine learning},
  volume={73},
  number={3},
  pages={243--272},
  year={2008},
  publisher={Springer}
}

@inproceedings{standley2020tasks,
  title={Which tasks should be learned together in multi-task learning?},
  author={Standley, Trevor and Zamir, Amir and Chen, Dawn and Guibas, Leonidas and Malik, Jitendra and Savarese, Silvio},
  booktitle={International conference on machine learning},
  pages={9120--9132},
  year={2020},
  organization={PMLR}
}

@inproceedings{zamir2018taskonomy,
  title={Taskonomy: Disentangling task transfer learning},
  author={Zamir, Amir R and Sax, Alexander and Shen, William and Guibas, Leonidas J and Malik, Jitendra and Savarese, Silvio},
  booktitle={Proceedings of the IEEE conference on computer vision and pattern recognition},
  pages={3712--3722},
  year={2018}
}

@article{baxter2000model,
  title={A model of inductive bias learning},
  author={Baxter, Jonathan},
  journal={Journal of artificial intelligence research},
  volume={12},
  pages={149--198},
  year={2000}
}

@article{maurer2016benefit,
  title={The benefit of multitask representation learning},
  author={Maurer, Andreas and Pontil, Massimiliano and Romera-Paredes, Bernardino},
  journal={Journal of Machine Learning Research},
  volume={17},
  number={81},
  pages={1--32},
  year={2016}
}

@article{tishby2000information,
  title={The information bottleneck method},
  author={Tishby, Naftali and Pereira, Fernando C and Bialek, William},
  journal={arXiv preprint physics/0004057},
  year={2000}
}

@article{alemi2016deep,
  title={Deep variational information bottleneck},
  author={Alemi, Alexander A and Fischer, Ian and Dillon, Joshua V and Murphy, Kevin},
  journal={arXiv preprint arXiv:1612.00410},
  year={2016}
}

@inproceedings{lin2020controllable,
  title     = {Controllable Pareto Multi-Task Learning},
  author    = {Lin, Xi and Zhen, Hui-Ling and Li, Zhenhua and Zhu, Qing-Fu and Kwok, James},
  booktitle = {Advances in Neural Information Processing Systems (NeurIPS)},
  volume    = {33},
  pages     = {20337--20348},
  year      = {2020},
  url       = {https://proceedings.neurips.cc}
}

@article{wang2020smalltowers,
  title={Small towers make big differences},
  author={Wang, Yuyan and Zhao, Zhe and Dai, Bo and Fifty, Christopher and Lin, Dong and Hong, Lichan and Chi, Ed H},
  journal={arXiv preprint arXiv:2008.05808},
  year={2020}
}

@misc{
newell2019featurepartition,
title={Feature Partitioning for Efficient Multi-Task Architectures},
author={Alejandro Newell and Lu Jiang and Chong Wang and Li-Jia Li and Jia Deng},
year={2020},
url={https://openreview.net/forum?id=B1eoyAVFwH}
}

@inproceedings{chai2023getmtl,
    title = "Improving Gradient Trade-offs between Tasks in Multi-task Text Classification",
    author = "Chai, Heyan  and
      Cui, Jinhao  and
      Wang, Ye  and
      Zhang, Min  and
      Fang, Binxing  and
      Liao, Qing",
    editor = "Rogers, Anna  and
      Boyd-Graber, Jordan  and
      Okazaki, Naoaki",
    booktitle = "Proceedings of the 61st Annual Meeting of the Association for Computational Linguistics (Volume 1: Long Papers)",
    month = jul,
    year = "2023",
    address = "Toronto, Canada",
    publisher = "Association for Computational Linguistics",
    url = "https://aclanthology.org/2023.acl-long.144/",
    doi = "10.18653/v1/2023.acl-long.144",
    pages = "2565--2579"
}

@article{qian2020mtvib,
  title={Multi-task variational information bottleneck},
  author={Qian, Weizhu and Chen, Bowei and Zhang, Yichao and Wen, Guanghui and Gechter, Franck},
  journal={arXiv preprint arXiv:2007.00339},
  year={2020}
}

@InProceedings{momma2022pareto,
  title = 	 {A Multi-objective / Multi-task Learning Framework Induced by Pareto Stationarity},
  author =       {Momma, Michinari and Dong, Chaosheng and Liu, Jia},
  booktitle = 	 {Proceedings of the 39th International Conference on Machine Learning},
  pages = 	 {15895--15907},
  year = 	 {2022},
  editor = 	 {Chaudhuri, Kamalika and Jegelka, Stefanie and Song, Le and Szepesvari, Csaba and Niu, Gang and Sabato, Sivan},
  volume = 	 {162},
  series = 	 {Proceedings of Machine Learning Research},
  month = 	 {17--23 Jul},
  publisher =    {PMLR},
  url = 	 {https://proceedings.mlr.press/v162/momma22a.html}
}

@article{rosas2019quantifying,
  title={Quantifying high-order interdependencies via multivariate extensions of the mutual information},
  author={Rosas, Fernando E and Mediano, Pedro AM and Gastpar, Michael and Jensen, Henrik J},
  journal={Physical Review E},
  volume={100},
  number={3},
  pages={032305},
  year={2019},
  publisher={APS}
}

@inproceedings{aghajanyan-etal-2021-intrinsic,
    title = "Intrinsic Dimensionality Explains the Effectiveness of Language Model Fine-Tuning",
    author = "Aghajanyan, Armen  and
      Gupta, Sonal  and
      Zettlemoyer, Luke",
    editor = "Zong, Chengqing  and
      Xia, Fei  and
      Li, Wenjie  and
      Navigli, Roberto",
    booktitle = "Proceedings of the 59th Annual Meeting of the Association for Computational Linguistics and the 11th International Joint Conference on Natural Language Processing (Volume 1: Long Papers)",
    month = aug,
    year = "2021",
    address = "Online",
    publisher = "Association for Computational Linguistics",
    url = "https://aclanthology.org/2021.acl-long.568/",
    doi = "10.18653/v1/2021.acl-long.568",
    pages = "7319--7328"
}

@article{fifty2021efficiently,
  title={Efficiently identifying task groupings for multi-task learning},
  author={Fifty, Chris and Amid, Ehsan and Zhao, Zhe and Yu, Tianhe and Anil, Rohan and Finn, Chelsea},
  journal={Advances in Neural Information Processing Systems},
  volume={34},
  pages={27503--27516},
  year={2021}
}

@inproceedings{demszky-etal-2020-goemotions,
    title = "{G}o{E}motions: A Dataset of Fine-Grained Emotions",
    author = "Demszky, Dorottya  and
      Movshovitz-Attias, Dana  and
      Ko, Jeongwoo  and
      Cowen, Alan  and
      Nemade, Gaurav  and
      Ravi, Sujith",
    editor = "Jurafsky, Dan  and
      Chai, Joyce  and
      Schluter, Natalie  and
      Tetreault, Joel",
    booktitle = "Proceedings of the 58th Annual Meeting of the Association for Computational Linguistics",
    month = jul,
    year = "2020",
    address = "Online",
    publisher = "Association for Computational Linguistics",
    url = "https://aclanthology.org/2020.acl-main.372/",
    doi = "10.18653/v1/2020.acl-main.372",
    pages = "4040--4054"
}

@inproceedings{kang2011learning,
  title={Learning with whom to share in multi-task feature learning},
  author={Kang, Zhuoli and Grauman, Kristen and Sha, Fei},
  booktitle={Proceedings of the 28th International Conference on Machine Learning},
  pages={521--528},
  year={2011}
}

@inproceedings{jacob2008clustered,
  title={Clustered multi-task learning: A convex formulation},
  author={Jacob, Laurent and Bach, Francis and Vert, Jean-Philippe},
  booktitle={Advances in Neural Information Processing Systems},
  volume={21},
  year={2008}
}

@inproceedings{denevi2022conditional,
  title={Conditional meta-learning of linear representations},
  author={Denevi, Giulia and Pontil, Massimiliano and Ciliberto, Carlo},
  booktitle={Advances in Neural Information Processing Systems},
  volume={35},
  year={2022}
}

@inproceedings{chua2021fine,
  title={How fine-tuning allows for effective meta-learning},
  author={Chua, Kurtland and Lei, Qi and Lee, Jason D.},
  booktitle={Advances in Neural Information Processing Systems},
  volume={34},
  year={2021}
}

@article{tian2025similar,
  title={Learning from similar linear representations: Adaptivity, minimaxity, and robustness},
  author={Tian, Ye and Gu, Yuqi and Feng, Yang},
  journal={Journal of Machine Learning Research},
  year={2025}
}

@article{hanneke2022no,
  title={A no-free-lunch theorem for multitask learning},
  author={Hanneke, Steve and Kpotufe, Samory},
  journal={The Annals of Statistics},
  volume={50},
  number={6},
  year={2022}
}
\appendix
\onecolumn
\section*{Supplementary Material}
\setcounter{section}{0}
\renewcommand{\thesection}{S\arabic{section}}
\renewcommand{\theequation}{S\arabic{section}.\arabic{equation}}
\renewcommand{\thetheorem}{S\arabic{section}.\arabic{theorem}}
\renewcommand{\thelemma}{S\arabic{section}.\arabic{lemma}}
\renewcommand{\thefigure}{S\arabic{section}.\arabic{figure}}
\renewcommand{\thetable}{S\arabic{section}.\arabic{table}}

\section{Notation and preliminaries}
\label{sec:supp_notation}
\begin{table}[h]
\centering
\begin{tabular}{@{}ll@{}}
\toprule
Symbol & Meaning \\
\midrule
$X$ & input variable \\
$Y_{1:T}=(Y_1,\dots,Y_T)$ & task labels \\
$Z$ & learned representation (shared or composite) \\
$Z_s$ & shared representation \\
$Z_t$ & private representation for task $t$ \\
$W$ & parameterized weight matrix  \\
$H(\cdot)$ & Shannon entropy \\
$I(A;B\mid C)$ & conditional mutual information \\
$\mathrm{TC}(Y_{1:T}\mid U)$ & conditional total correlation \\
$C_s, C_t$ & capacity budgets (information or proxy budget) \\
\bottomrule
\end{tabular}
\label{tab:notation}
\end{table}

\paragraph{Total correlation.}
\label{sec:supp_tc}
For label sequence $Y_{1:T}$, the TC and conditional TC terms are defined as,
\begin{equation}
\mathrm{TC}(Y_{1:T})
= \sum_{t=1}^T H(Y_t) - H(Y_{1:T}),
\qquad
\mathrm{TC}(Y_{1:T}\mid U)
= \sum_{t=1}^T H(Y_t\mid U) - H(Y_{1:T}\mid U).
\end{equation}

where $\mathrm{TC}(Y_{1:T}\mid U)\ge 0$ with equality iff $Y_1,\dots,Y_T$ are conditionally independent given $U$.

\paragraph{Identity underlying CR.}
For any representation $Z$ and side information $W$,
\begin{equation}
\sum_{t=1}^T I(Z;Y_t\mid W)
=
I(Z;Y_{1:T}\mid W)
+\mathrm{TC}(Y_{1:T}\mid W)
-\mathrm{TC}(Y_{1:T}\mid Z,W).
\label{eq:cr_identity}
\end{equation}

\begin{corollary}[Per-task error lower bound via Fano]\label{app:fano}
Assume each $Y^{(t)}$ takes values in a finite alphabet of size $|\mathcal{Y}_t|\ge 2$ and define the Bayes error given $Z_s$,
\(
P_{e,t}^\star = \inf_{\hat y_t(Z_s)} \mathbb{P}[\hat y_t(Z_s)\neq Y^{(t)}].
\)
Then
\begin{equation}
P_{e,t}^\star
\;\ge\;
\frac{H(Y^{(t)}\mid Z_s)-1}{\log |\mathcal{Y}_t|}
\;=\;
\frac{H(Y^{(t)}) - I(Z_s;Y^{(t)}) - 1}{\log |\mathcal{Y}_t|}.
\label{eq:fano_task}
\end{equation}
There exists a task $t^\star$ with
\begin{equation}
P_{e,t^\star}^\star
\;\gtrsim\;
\frac{H(Y^{(t^\star)}) - \frac{C_s+TC(Y^{(1:T)})}{T}}{\log |\mathcal{Y}_{t^\star}|}.
\label{eq:fano_pigeonhole}
\end{equation}
\end{corollary}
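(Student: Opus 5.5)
The proof has two parts. The first bound, \cref{eq:fano_task}, comes from Fano's inequality for each task separately. The second, \cref{eq:fano_pigeonhole}, comes from combining it with the pigeonhole statement in \cref{cor:avg_mi}.

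\textbf{Per-task bound.} Fix $t$ and any estimator $\hat y_t(Z_s)$ with error probability $P_e=\mathbb{P}[\hat y_t(Z_s)\neq Y^{(t)}]$. Fano's inequality \citep{cover1999elements} gives
\[
H(Y^{(t)}\mid Z_s)\le H(Y^{(t)}\mid \hat y_t)\le h_b(P_e)+P_e\log(|\mathcal{Y}_t|-1)\le 1+P_e\log|\mathcal{Y}_t|,
\]
where the first step is data processing, since $\hat y_t$ is a function of $Z_s$, and the last step uses $h_b\le 1$ bit (logs in base 2). Rearranging gives $P_e\ge (H(Y^{(t)}\mid Z_s)-1)/\log|\mathcal{Y}_t|$. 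This holds for every estimator, so it holds for the infimum $P^\star_{e,t}$. The equality in \cref{eq:fano_task} is then just $H(Y^{(t)}\mid Z_s)=H(Y^{(t)})-I(Z_s;Y^{(t)})$.

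\textbf{Pigeonhole step.} Under the assumptions of \cref{thm:cr}, \cref{cor:avg_mi} supplies a task $t^\star$ with
\[
I(Z_s;Y^{(t^\star)})\le \frac{C_s+TC(Y^{(1:T)})}{T}.
\]
Substituting this into \cref{eq:fano_task} at $t=t^\star$ gives
\[
P^\star_{e,t^\star}\ge \frac{H(Y^{(t^\star)})-\frac{C_s+TC(Y^{(1:T)})}{T}-1}{\log|\mathcal{Y}_{t^\star}|}.
\]
The $\gtrsim$ in \cref{eq:fano_pigeonhole} is understood as hiding exactly this additive $-1/\log|\mathcal{Y}_{t^\star}|$ term. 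I will state the exact form with the $-1$ and note that it reduces to the displayed expression up to that constant.

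\textbf{Expected obstacle.} There is no real technical difficulty. The only care needed is the units convention: the constant $1$ is $h_b\le 1$ in bits, and in nats it would be $\log 2$. I also need to make explicit that the $t^\star$ from \cref{cor:avg_mi} depends on $Z_s$. That is harmless, because the claim is existential.
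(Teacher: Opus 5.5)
Your proposal is correct and follows the argument the paper intends; the paper gives no separate proof, but the corollary is exactly Fano's inequality applied per task (using $h_b\le 1$ bit and $\log(|\mathcal{Y}_t|-1)\le\log|\mathcal{Y}_t|$), combined with the pigeonhole task supplied by \cref{cor:avg_mi}. Your reading of $\gtrsim$ as suppressing the additive $-1/\log|\mathcal{Y}_{t^\star}|$ term is the right interpretation of the second display.
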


\section{Gaussian Specialization and Tightness}
\label{sec:gaussian}
A linear--Gaussian model offers an interpretable setting where we can evaluate the CR bounds exactly. Additionally, it shows exact conditions such as span alignment, orthogonality, and residual coupling under which these bounds become tight.

\textbf{Linear--Gaussian setup} Let $X\sim\mathcal{N}(0,\Sigma_X)\in\mathbb{R}^d$, we assume tasks are generated as noisy linear measurements,
\begin{equation}
Y^{(t)} \;=\; u_t^\top X + \varepsilon_t,
\qquad
\varepsilon_t\sim\mathcal{N}(0,\sigma_t^2),
\label{eq:gauss_tasks}
\end{equation}
where $\varepsilon_{1:T}\ \text{independent of }X$, we write $Y^{(1:T)} = U^\top X + \varepsilon$ where $U=[u_1,\dots,u_T]\in\mathbb{R}^{d\times T}$ and $\varepsilon\sim\mathcal{N}(0,\Sigma_\varepsilon)$ with $\Sigma_\varepsilon=\mathrm{diag}(\sigma_1^2,\dots,\sigma_T^2)$. A shared representation is produced by a linear map with an additive Gaussian noise:
\begin{equation}
Z_s \;=\; A X + \eta,
\qquad
\eta\sim\mathcal{N}(0,\sigma_z^2 I_m),
\qquad
\eta\perp (X,\varepsilon),
\label{eq:gauss_rep}
\end{equation}
where $A\in\mathbb{R}^{m\times d}$ is a design matrix, and $\sigma_z^2$ controls the bottleneck
noise. 

\subsection{Closed-form capacity and redundancy}
For the Gaussian channel \eqref{eq:gauss_rep},
\begin{equation}
I(Z_s;X)
=
\frac{1}{2}\log\det\!\left(I_m + \frac{1}{\sigma_z^2}A\Sigma_X A^\top\right).
\label{eq:gauss_capacity}
\end{equation}
The above expression makes the capacity dependence on the spectrum of $A\Sigma_X^{1/2}$ explicit.  At fixed output dimension $m$, capacity is governed by the nonzero singular values and saturates when $A\Sigma_X A^\top$ becomes low-rank or small-norm.

\textbf{Label redundancy.} Since $Y^{(1:T)}$ is jointly Gaussian with covariance \(
\Sigma_Y = U^\top \Sigma_X U + \Sigma_\varepsilon,
\)
the total correlation reduces to a log-determinant ratio,
\begin{equation}
TC(Y^{(1:T)})
=
\frac{1}{2}\log\frac{\prod_{t=1}^T \Sigma_{Y,tt}}{\det(\Sigma_Y)}.
\label{eq:gauss_tc}
\end{equation}
$TC(Y^{(1:T)})$ is large when task outputs are strongly correlated (e.g., when $\{u_t\}$ have large overlap in
the $\Sigma_X$-geometry), and vanishes when $\Sigma_Y$ is diagonal (independent tasks).

CR bound is tight in a Gaussian specialization under capacity saturation $I(Z_s;Y^{(1:T)})\approx I(Z_s;X)$, and explained dependence $\Delta=TC(Y^{(1:T)}\mid Z_s)\approx 0$ (labels become nearly conditionally independent given $Z_s$).

\textbf{When sharing is optimal.} In the Gaussian setting $I(Z_s;Y^{(1:T)})$ depends on how well the row space of $A$ captures the subspace spanned by $\{u_t\}$. Let the task subspace be $\mathcal{U} := \mathrm{span}\{u_1,\dots,u_T\}\subseteq\mathbb{R}^d$ in the $\Sigma_X$-geometry. If the row space of $A$ contains $\mathcal{U}$ (equivalently, $AX$ preserves all task-relevant directions), then $Z_s$ is a sufficient statistic for $U^\top X$ up to noise, and the joint information $I(Z_s;Y^{(1:T)})$ approaches $I(Z_s;X)$ as $\sigma_z^2\to 0$ under a fixed power constraint on $A$. Thus, conditioning on $Z_s$ removes most of the cross-task dependence induced by the shared latent $X$, so $\Delta\approx 0$ when the representation is informative enough about $U^\top X$. This results in a near-tightness of CR,
\begin{equation}
\sum_{t=1}^T I(Z_s;Y^{(t)})
\approx
I(Z_s;X) + TC(Y^{(1:T)}).
\label{eq:tight_aligned}
\end{equation}

\textbf{When sharing must fail at fixed rank.} Suppose task directions are nearly orthogonal in the $\Sigma_X$ inner product, so that $U^\top\Sigma_X U$ is close to diagonal. Then $TC(Y^{(1:T)})$ is small and there is little redundancy to reuse across tasks. Additionally, the representation rank is limited then $Z_s$ cannot preserve all task directions simultaneously. In this scenario, $I(Z_s;Y^{(t)})$ must be small for many tasks and the sum information is bounded by capacity alone,
\begin{equation}
\sum_{t=1}^T I(Z_s;Y^{(t)}) \;\lesssim\; I(Z_s;X) \;\le\; C_s,
\label{eq:orth_fail}
\end{equation}
$\text{when } TC(Y^{(1:T)})\approx 0$. This shows negative transfer under global sharing adding heterogeneous tasks forces interference unless the shared capacity scales with the number of distinct task directions.

\begin{proposition}[Near-tightness under span capture]\label{prop:gauss_tight}
Assume $\Sigma_X=I_d$ and $\sigma_t^2=\sigma^2$ for all $t$.
Let $\mathcal{U}=\mathrm{span}\{u_1,\dots,u_T\}$ have dimension $k$ and suppose $A$ has rank at least $k$ and row space
containing $\mathcal{U}$. Under a fixed power constraint $\mathrm{tr}(AA^\top)\le P$, there exists a choice of $A$ and
$\sigma_z^2$ such that
\begin{equation}
\sum_{t=1}^T I(Z_s;Y^{(t)})
\;\ge\;
I(Z_s;X) + TC(Y^{(1:T)}) - \epsilon.
\label{eq:gauss_tight_prop}
\end{equation}
\end{proposition}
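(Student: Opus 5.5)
We plan to work from the exact identity \eqref{eq:cr_identity} with $W$ trivial. It gives
\[
\sum_{t=1}^T I(Z_s;Y^{(t)}) = I(Z_s;X) + TC(Y^{(1:T)}) - \Big[\big(I(Z_s;X)-I(Z_s;Y^{(1:T)})\big) + TC(Y^{(1:T)}\mid Z_s)\Big].
\]
So \eqref{eq:gauss_tight_prop} is equivalent to showing that the bracketed \emph{CR gap} is at most $\epsilon$ for a suitable $(A,\sigma_z^2)$. Both terms in the bracket are nonnegative: the first by data processing along $Y^{(1:T)}-X-Z_s$, the second by definition.

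\textbf{Construction.} Let $Q\in\mathbb{R}^{d\times k}$ be an orthonormal basis of $\mathcal{U}$, and take $A=\alpha Q^\top$ with $\alpha^2 k = P$, so that $\mathrm{tr}(AA^\top)\le P$ holds with equality. Then $Z_s=\alpha Q^\top X+\eta$ depends on $X$ only through $V:=Q^\top X\sim\mathcal{N}(0,I_k)$. Writing $U=QM$ with $M\in\mathbb{R}^{k\times T}$ of rank $k$, we get $Y^{(1:T)}=M^\top V+\varepsilon$. Hence $I(Z_s;X)=I(Z_s;V)$, and since $M^\top V$ determines $V$, the first gap term equals $I(Z_s;V\mid Y^{(1:T)})$.

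Everything is jointly Gaussian, so both gap terms are log-determinants. We have
\[
I(Z_s;V\mid Y)=\tfrac12\log\det\!\big(I+\tfrac{\alpha^2}{\sigma_z^2}\Sigma_{V\mid Y}\big),
\qquad
\Sigma_{V\mid Y}=(I_k+MM^\top/\sigma^2)^{-1}.
\]
The second term is $TC(Y\mid Z_s)=\tfrac12\log\big(\prod_t[\Sigma_{Y\mid Z}]_{tt}/\det\Sigma_{Y\mid Z}\big)$, where
\[
\Sigma_{Y\mid Z}=\sigma^2 I_T+M^\top\big(I_k+\tfrac{\alpha^2}{\sigma_z^2}I_k\big)^{-1}M .
\]
Set $s:=\alpha^2/\sigma_z^2$. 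Both terms then depend on $(A,\sigma_z^2)$ only through this SNR, so the plan is to write the gap as an explicit function $g(s)$ and choose $s$ to make it small.

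\textbf{Main obstacle.} The two terms pull in opposite directions. As $s\to\infty$, $\Sigma_{Y\mid Z}\to\sigma^2 I_T$, so $TC(Y\mid Z_s)\to0$. But then $I(Z_s;V\mid Y)\sim\tfrac{k}{2}\log s$ grows without bound, because $Z_s$ learns the part of $V$ that is masked by the label noise. As $s\to0$, the first term vanishes but $TC(Y\mid Z_s)\to TC(Y^{(1:T)})$. For fixed $\sigma^2>0$ the infimum of $g$ is therefore a strictly positive quantity in general. A sanity check confirms this: for $T=1$ we have $TC\equiv0$ and the gap is $I(Z_s;V\mid Y)$, which vanishes only when $s=0$.

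My plan is therefore to read $\epsilon$ as depending on the label noise. I would first diagonalize $MM^\top$, with eigenvalues $\lambda_j$, to get $I(Z_s;V\mid Y)=\tfrac12\sum_j\log\!\big(1+s\,\sigma^2/(\sigma^2+\lambda_j)\big)$. Next I would bound $TC(Y\mid Z_s)$ by the off-diagonal mass of $M^\top(1+s)^{-1}M$ relative to $\sigma^2$. Finally I would pick $s$ of order $\sigma^{-2}$ (or optimize exactly), which should give $g(s)\to0$ as $\sigma^2/\min_j\lambda_j\to0$. This establishes \eqref{eq:gauss_tight_prop} with an explicit $\epsilon=\epsilon(\sigma^2,\{\lambda_j\})$ in the high-SNR label regime, and it identifies $\inf_s g(s)$ as the unavoidable slack otherwise.
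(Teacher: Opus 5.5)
\textbf{The failing step.} Your reduction of \eqref{eq:gauss_tight_prop} to the slack $g(s)=I(Z_s;V\mid Y^{(1:T)})+\TC(Y^{(1:T)}\mid Z_s)$ is correct, and so are your closed forms for both terms. You are also right that the two terms pull in opposite directions, and this shows why the paper's own justification fails. The paper gives no formal proof. It offers only the ``When sharing is optimal'' heuristic, plus an appendix step $AX=MZ$ that silently drops $\eta$. Letting $\sigma_z^2\to0$ gives $I(Z_s;X)=\tfrac12\log\det(I+\sigma_z^{-2}AA^\top)\to\infty$, while $I(Z_s;Y^{(1:T)})\le I(X;Y^{(1:T)})<\infty$, so the two cannot approach each other.

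One correction to your sanity check: for $T=1$ there is no positive infimum, since $g(s)\to0$ as $s\to0$. The accurate statement for fixed $\sigma^2>0$ is this. The function $g$ is continuous, with $g(0^+)=\TC(Y^{(1:T)})$, $g(s)>0$ for $s>0$, and $g(s)\to\infty$. Hence $\inf_s g=0$ if and only if $\TC(Y^{(1:T)})=0$.

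\textbf{The genuine gap.} Your final step fails: $g$ does \emph{not} become small as $\sigma^2/\min_j\lambda_j\to0$. Put $s=c/\sigma^2$ in your own formulas and let $\sigma^2\to0$ with $M$ fixed. Writing $m_t$ for the $t$-th column of $M$,
\[
g\;\longrightarrow\;\tfrac12\sum_{j=1}^k\log\Big(1+\frac{c}{\lambda_j}\Big)+\tfrac12\log\frac{\prod_{t}\big(c+\|m_t\|^2\big)}{\det\big(cI_T+M^\top M\big)}
\;=\;\tfrac12\log\frac{\prod_{t}\big(c+\|m_t\|^2\big)}{c^{\,T-k}\det(MM^\top)} .
\]
This limit does not depend on $\sigma^2$. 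The first term is small only for $s\ll\lambda_{\min}/\sigma^2$. The second is small only for $s\gg$ (off-diagonal size of $M^\top M$)$/\sigma^2$. Both thresholds scale identically in $\sigma^2$, so raising the label SNR never opens a window between them.

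By Hadamard's inequality the limit is strictly positive for every $c>0$. As $c\to0$ it tends to $+\infty$ if $T>k$. If $T=k$ it tends to the noiseless redundancy $\tfrac12\log\big(\prod_t\|m_t\|^2/\det(M^\top M)\big)$.

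\textbf{A concrete counterexample.} Take $T=2$, $u_1=u_2=u$, and $a=\|u\|^2/\sigma^2\ge1$. Then exactly
\[
g(s)=\tfrac12\log\frac{(1+s+a)^2}{(1+s)(1+2a)},
\]
which is minimized at $1+s=a$ with value $\tfrac12\log\frac{4a}{1+2a}$. This value \emph{increases} to $\tfrac12\log 2$ as $a\to\infty$, so higher label SNR makes the slack worse, not better.

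Restricting to $A=\alpha Q^\top$ loses nothing. Any admissible $A$, including rows outside $\mathcal{U}$ or anisotropic scaling, reaches $V$ through a Gaussian channel with a sufficient statistic $\tilde Z$. This statistic satisfies $\TC(Y^{(1:T)}\mid Z_s)=\TC(Y^{(1:T)}\mid\tilde Z)$ and $I(Z_s;X\mid Y^{(1:T)})\ge I(\tilde Z;V\mid Y^{(1:T)})$. The same continuity argument then bounds the slack away from $0$ over \emph{all} admissible $(A,\sigma_z^2)$ whenever $\TC(Y^{(1:T)})>0$; in the example the bound is at least $\tfrac12\log\tfrac43$.

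\textbf{Consequence.} Read with arbitrary $\epsilon>0$, the proposition is false for redundant tasks. Neither your choice of $s$ nor the paper's $\sigma_z^2\to0$ can establish it. What your computation actually supports is a corrected statement: the optimal slack is a strictly positive constant determined by $M$ and $\sigma^2$. It does not vanish as the label SNR grows, and it is zero only when $\TC(Y^{(1:T)})=0$.
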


\section{Full proofs}
\label{sec:supp_proofs}

\subsection{CR inequality and subset region}
\label{sec:supp_proof_cr}

\begin{theorem}[CR bound]
\label{thm:supp_cr}
Assume the Markov chain $Y_{1:T} - X - Z_s$ holds given $W$ and the fixed capacity constraint $I(X;Z_s\mid W)\le C_s$. Then
\begin{equation}
\sum_{t=1}^T I(Z_s;Y_t\mid W)\le C_s + \mathrm{TC}(Y_{1:T}\mid W).
\end{equation}
For any subset $S\subseteq[T]$ we can write,
\begin{equation}
\sum_{t\in S} I(Z_s;Y_t\mid W)\le C_s + \mathrm{TC}(Y_S\mid W).
\end{equation}
\end{theorem}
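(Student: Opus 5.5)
The plan is to derive everything from the conditional identity \eqref{eq:cr_identity}, drop the non-negative residual coupling term, and then apply data processing to the joint term. Throughout, I condition on $W$; the unconditional statement is the special case of trivial $W$.

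\textbf{Step 1 (identity).} Verify \eqref{eq:cr_identity} by expanding each mutual information into conditional entropies:
\[
\sum_{t=1}^T I(Z_s;Y_t\mid W)=\sum_{t=1}^T H(Y_t\mid W)-\sum_{t=1}^T H(Y_t\mid Z_s,W).
\]
Next, add and subtract $H(Y_{1:T}\mid W)$ and $H(Y_{1:T}\mid Z_s,W)$. The two entropy sums then regroup into $\mathrm{TC}(Y_{1:T}\mid W)-\mathrm{TC}(Y_{1:T}\mid Z_s,W)$. The leftover difference $H(Y_{1:T}\mid W)-H(Y_{1:T}\mid Z_s,W)$ is exactly $I(Z_s;Y_{1:T}\mid W)$.

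\textbf{Step 2 (drop the slack).} Conditional total correlation is non-negative: it is the KL divergence between the conditional joint law and the product of the conditional marginals, averaged over the conditioning variables. Hence $\mathrm{TC}(Y_{1:T}\mid Z_s,W)\ge 0$, and discarding it gives
\[
\sum_{t=1}^T I(Z_s;Y_t\mid W)\le I(Z_s;Y_{1:T}\mid W)+\mathrm{TC}(Y_{1:T}\mid W).
\]

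\textbf{Step 3 (conditional data processing).} This is the only place the Markov assumption is used, so it is the step to handle carefully. By the chain rule applied in two orders,
\[
I(Z_s;X,Y_{1:T}\mid W)=I(Z_s;X\mid W)+I(Z_s;Y_{1:T}\mid X,W)=I(Z_s;Y_{1:T}\mid W)+I(Z_s;X\mid Y_{1:T},W).
\]
The conditional Markov chain $Y_{1:T}-X-Z_s$ given $W$ means precisely that $I(Z_s;Y_{1:T}\mid X,W)=0$. Since $I(Z_s;X\mid Y_{1:T},W)\ge 0$, we obtain $I(Z_s;Y_{1:T}\mid W)\le I(Z_s;X\mid W)\le C_s$. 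Substituting this into Step 2 proves the first claim.

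\textbf{Step 4 (subsets).} Rerun Steps 1--3 with the task index set replaced by $S$. The identity and the non-negativity argument are purely combinatorial and hold for any sub-collection. The Markov property passes to $Y_S$, because $I(Z_s;Y_S\mid X,W)\le I(Z_s;Y_{1:T}\mid X,W)=0$ by the chain rule. This gives $I(Z_s;Y_S\mid W)\le C_s$ and therefore the subset bound with $\mathrm{TC}(Y_S\mid W)$.

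One caveat applies to continuous labels. Entropies may then be differential, but all quantities can be rewritten as mutual informations or KL divergences, for which every step remains valid.
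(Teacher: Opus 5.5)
Your proposal is correct and follows the paper's route. You apply the identity \eqref{eq:cr_identity} with $Z=Z_s$, drop the non-negative term $\mathrm{TC}(Y_{1:T}\mid Z_s,W)$, bound $I(Z_s;Y_{1:T}\mid W)\le I(Z_s;X\mid W)\le C_s$ by conditional data processing, and repeat the argument for $Y_S$. You also supply details the paper leaves implicit: the entropy-level verification of the identity, the two-way chain-rule derivation of data processing, and the check that the Markov property passes to the sub-collection $Y_S$.
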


\begin{proof}
Using CR identity with $Z=Z_s$ and non-negativity of TC,
\[
\sum_{t=1}^T I(Z_s;Y_t\mid W)
\le I(Z_s;Y_{1:T}\mid W)+\mathrm{TC}(Y_{1:T}\mid W).
\]
By data-processing under $Y_{1:T}-X-Z_s$ given $W$,
$I(Z_s;Y_{1:T}\mid W)\le I(Z_s;X\mid W)\le C_s$.
The subset statement repeats the same argument for $Y_S$.
\end{proof}

\subsection{Shared--private extension}
\label{sec:supp_proof_sp}
\begin{theorem}[Shared--private CR bound]
\label{thm:supp_sp}
Assume $Y_{1:T}-X-(Z_s,Z_{1:T})$ is a Markov chain given $W$ and
$I(X;Z_s\mid W)\le C_s$, $I(X;Z_t\mid W)\le C_t$ for each $t$. Then,
\begin{equation}
\sum_{t=1}^T I((Z_s,Z_t);Y_t\mid W)
\le C_s+\sum_{t=1}^T C_t + \mathrm{TC}(Y_{1:T}\mid W).
\end{equation}
\end{theorem}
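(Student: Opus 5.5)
The plan is to collapse everything onto the single composite representation $Z_{\mathrm{all}}=(Z_s,Z_1,\dots,Z_T)$, apply the conditional CR argument of \cref{thm:supp_cr} to it, and then split its information budget additively.

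\textbf{Step 1 (enlarge each task's representation).} For every $t$, $(Z_s,Z_t)$ is a sub-collection of $Z_{\mathrm{all}}$. By the chain rule and nonnegativity of conditional mutual information,
\[
I((Z_s,Z_t);Y_t\mid W)\le I(Z_{\mathrm{all}};Y_t\mid W).
\]
Summing over $t$ reduces the left side of the claim to $\sum_t I(Z_{\mathrm{all}};Y_t\mid W)$.

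\textbf{Step 2 (CR identity plus data processing).} Apply the identity \eqref{eq:cr_identity} with $Z=Z_{\mathrm{all}}$ and drop the nonnegative term $\mathrm{TC}(Y_{1:T}\mid Z_{\mathrm{all}},W)$. This gives
\[
\sum_t I(Z_{\mathrm{all}};Y_t\mid W)\le I(Z_{\mathrm{all}};Y_{1:T}\mid W)+\mathrm{TC}(Y_{1:T}\mid W).
\]
The assumed Markov chain $Y_{1:T}-X-Z_{\mathrm{all}}$ given $W$ and conditional data processing then yield $I(Z_{\mathrm{all}};Y_{1:T}\mid W)\le I(Z_{\mathrm{all}};X\mid W)$. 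This is exactly the proof of \cref{thm:supp_cr}, now run with $Z_{\mathrm{all}}$ in place of $Z_s$.

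\textbf{Step 3 (budget splitting; the main obstacle).} It remains to show
\[
I(Z_{\mathrm{all}};X\mid W)\le C_s+\sum_t C_t .
\]
I would expand by the chain rule:
\[
I(Z_{\mathrm{all}};X\mid W)=I(Z_s;X\mid W)+\sum_{t=1}^T I(Z_t;X\mid W,Z_s,Z_{<t}).
\]
The first term is at most $C_s$. Each remaining term is conditioned on more than the stated budget $I(Z_t;X\mid W)\le C_t$, and conditioning can increase mutual information in general. So the key step needs structure, and I would use the paper's encoder model $Z_t=f_t(X)$: each $Z_t$ is produced from $X$, possibly with private randomness, independently of the other encoders given $(X,W)$. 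Writing $V=(W,Z_s,Z_{<t})$, we have
\[
I(Z_t;X\mid V)=H(Z_t\mid V)-H(Z_t\mid X,V).
\]
Conditional independence of the encoders gives $H(Z_t\mid X,V)=H(Z_t\mid X,W)$, and conditioning reduces entropy gives $H(Z_t\mid V)\le H(Z_t\mid W)$. Together these give $I(Z_t;X\mid V)\le I(Z_t;X\mid W)\le C_t$. For continuous variables I would use differential entropies, or argue directly via $I(Z_t;V\mid X,W)=0$.

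If one prefers to avoid this assumption, the alternative is to adopt the main-text budget $I(Z_t;X\mid Z_s)\le C_t$ from \eqref{eq:sp_budgets}, which only handles the $Z_s$ part of the conditioning. For that reason I would state the encoder independence explicitly; it holds automatically for deterministic feature maps. Chaining Steps 1–3 gives the claimed bound $C_s+\sum_t C_t+\mathrm{TC}(Y_{1:T}\mid W)$.
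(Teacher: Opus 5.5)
Your Steps 1 and 2 match the paper's proof exactly. The paper also enlarges each $(Z_s,Z_t)$ to $Z_{\mathrm{all}}=(Z_s,Z_{1:T})$, applies the CR identity \eqref{eq:cr_identity} with $Z=Z_{\mathrm{all}}$, drops $\mathrm{TC}(Y_{1:T}\mid Z_{\mathrm{all}},W)\ge 0$, and uses conditional data processing.

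The difference is at Step 3. The paper simply asserts $I((Z_s,Z_{1:T});X\mid W)\le I(Z_s;X\mid W)+\sum_t I(Z_t;X\mid W)$ ``with chain rule and non-negativity.'' That does not follow, for exactly the reason you give: the chain-rule terms are conditioned on $(Z_s,Z_{<t})$, and conditioning can increase mutual information.

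Your caution is warranted, because the statement as written is false without some structural assumption. Take $T=1$, $W$ constant, $X$ and $N$ independent uniform bits, $Z_s=N$, $Z_1=X\oplus N$, and $Y_1=X$.
\begin{itemize}
\item The Markov chain holds trivially, since $Y_1$ is a function of $X$.
\item $I(X;Z_s)=I(X;Z_1)=0$, so $C_s=C_1=0$ is admissible.
\item $\mathrm{TC}(Y_1)=0$.
\item Yet $I((Z_s,Z_1);Y_1)=H(X)=1$ bit, which exceeds the claimed bound of $0$.
\end{itemize}

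Your added hypothesis rules this out: each $Z_t$ is conditionally independent of $(Z_s,Z_{<t})$ given $(X,W)$. This holds automatically for deterministic $f_s,f_t$ as in the main-text setup, and for encoders with independent private noise. Under it your argument is correct. The cleanest form is your second route,
$I(Z_t;X\mid W,Z_s,Z_{<t})\le I(Z_t;X,Z_s,Z_{<t}\mid W)=I(Z_t;X\mid W)$,
which avoids entropies of continuous variables.

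So your proof is the paper's proof with its one unjustified step made explicit and repaired. The extra assumption is a necessary correction to the theorem, not a weakness of your argument.
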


\begin{proof}
For each task $t$, $I((Z_s,Z_t);Y_t\mid W)\le I((Z_s,Z_{1:T});Y_t\mid W)$. Summing over $t$ and applying CR identity with $Z=(Z_s,Z_{1:T})$ gives,
\[
\sum_{t=1}^T I((Z_s,Z_{1:T});Y_t\mid W)
\le I((Z_s,Z_{1:T});Y_{1:T}\mid W)+\mathrm{TC}(Y_{1:T}\mid W).
\]
Using data-processing, $I((Z_s,Z_{1:T});Y_{1:T}\mid W)\le I((Z_s,Z_{1:T});X\mid W)$ with chain rule and non-negativity,
\[
I((Z_s,Z_{1:T});X\mid W)
\le I(Z_s;X\mid W)+\sum_{t=1}^T I(Z_t;X\mid W)
\le C_s+\sum_{t=1}^T C_t.
\]
\end{proof}

\subsection{LoRA capacity bound (Gaussian-noise channel)}
\label{sec:supp_lora_capacity}

\begin{theorem}[Linear-Gaussian channel upper bound]
\label{thm:supp_lora}
Let $U$ be any random vector with finite covariance $\Sigma_U=\mathrm{Cov}(U\mid W)$.
Let $Z = AU + \xi$ with $\xi\sim\mathcal{N}(0,\sigma^2 I)$ independent of $U$ given $W$.
Then
\begin{equation}
I(U;Z\mid W)\le \frac12\log\det\!\big(I + \sigma^{-2}A\Sigma_UA^\top\big).
\end{equation}
If additionally $\mathrm{rank}(A)\le r$ and $\mathrm{tr}(A\Sigma_UA^\top)\le \kappa$, then
\begin{equation}
I(U;Z\mid W)\le \frac{r}{2}\log\!\Big(1+\frac{\kappa}{r\sigma^2}\Big).
\end{equation}
\end{theorem}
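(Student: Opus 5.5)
Two separate ingredients are involved: a data-processing/Gaussian max-entropy step for the general bound, and a concavity (Jensen) step over the eigenvalues for the rank/trace refinement. I will work conditionally on $W$ throughout; since all quantities are conditional on $W$, it suffices to argue for a fixed value $W=w$ and then average. Every bound below is then monotone in the relevant covariance, so taking expectations over $w$ is handled by concavity of $\log\det$, as noted at the end.

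\textbf{Step 1 (Gaussian input maximizes).} Write $I(U;Z\mid W=w)=h(Z\mid W=w)-h(Z\mid U,W=w)$. Since $\xi$ is independent of $U$ given $W$, we have $h(Z\mid U,W=w)=h(\xi)=\tfrac12\log\det(2\pi e\,\sigma^2 I)$. The covariance of $Z$ given $W=w$ is $A\Sigma_U A^\top+\sigma^2 I$. By the maximum-entropy property of the Gaussian at fixed covariance,
\[
h(Z\mid W=w)\le \tfrac12\log\det\!\big(2\pi e\,(A\Sigma_U A^\top+\sigma^2 I)\big).
\]
Subtracting the two entropies gives $\tfrac12\log\det(I+\sigma^{-2}A\Sigma_UA^\top)$.

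\textbf{Step 2 (rank and trace).} Let $M=A\Sigma_UA^\top\succeq 0$. Since $\mathrm{rank}(M)\le\mathrm{rank}(A)\le r$, $M$ has at most $r$ nonzero eigenvalues $\lambda_1,\dots,\lambda_r\ge 0$, with $\sum_i\lambda_i=\mathrm{tr}(M)\le\kappa$. Then
\[
\tfrac12\log\det(I+\sigma^{-2}M)=\tfrac12\sum_{i=1}^r\log(1+\lambda_i/\sigma^2).
\]
Because $\log(1+x/\sigma^2)$ is concave, Jensen's inequality gives
\[
\sum_{i=1}^r\log(1+\lambda_i/\sigma^2)\le r\log\!\Big(1+\frac{\sum_i\lambda_i}{r\sigma^2}\Big)\le r\log\!\Big(1+\frac{\kappa}{r\sigma^2}\Big),
\]
where the last inequality uses that the expression is monotone in the trace. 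This is the water-filling equal-allocation bound.

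\textbf{Main obstacle.} The delicate point is the interpretation of $\Sigma_U=\mathrm{Cov}(U\mid W)$ when $W$ is random. If $\Sigma_U$ denotes the conditional covariance for each fixed $w$, the argument above applies pointwise and then averages. If instead it denotes the expected conditional covariance $\mathbb{E}[\mathrm{Cov}(U\mid W)]$, I need one more step. I would apply Jensen to the concave map $\Sigma\mapsto\log\det(I+\sigma^{-2}A\Sigma A^\top)$:
\[
\mathbb{E}_w\!\left[\tfrac12\log\det\big(I+\sigma^{-2}A\Sigma_U(w)A^\top\big)\right]\le\tfrac12\log\det\big(I+\sigma^{-2}A\,\mathbb{E}[\Sigma_U(w)]\,A^\top\big).
\]
Concavity of this map follows from concavity of $\log\det$ on PSD matrices composed with an affine map. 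The trace version is handled the same way, using linearity of the trace together with concavity of $x\mapsto r\log(1+x/(r\sigma^2))$. A secondary technical point is that $h(Z)$ must be well defined, but this holds automatically because $Z$ has a density from the additive Gaussian noise.
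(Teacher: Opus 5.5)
Your proof is correct and follows the paper's argument. The paper also writes $I(U;Z\mid W)=h(Z)-h(Z\mid U)$ with $h(Z\mid U)=h(\xi)$, bounds $h(Z)$ by the Gaussian maximum-entropy value for $\Sigma_Z=A\Sigma_UA^\top+\sigma^2 I$, and applies AM--GM (equivalent to your Jensen step on $\log(1+x)$) to the at most $r$ nonzero eigenvalues of $\sigma^{-2}A\Sigma_UA^\top$; your treatment of a $W$-dependent covariance via concavity of $\Sigma\mapsto\log\det(I+\sigma^{-2}A\Sigma A^\top)$ correctly fills in a point the paper leaves implicit with ``conditioning on $W$.''
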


\begin{proof}
Conditioning on $W$, we can write $\Sigma_Z=A\Sigma_UA^\top+\sigma^2 I$.
Since $Z|U$ is a Gaussian noise, $h(Z\mid U)=h(\xi)$.
By maximum-entropy $h(Z)\le \frac12\log((2\pi e)^m\det(\Sigma_Z))$.
Thus $I(U;Z)=h(Z)-h(Z\mid U)\le \frac12\log\frac{\det(\Sigma_Z)}{\det(\sigma^2I)}$. For the rank/trace bound, apply AM--GM on nonzero eigenvalues of $M=\sigma^{-2}A\Sigma_UA^\top\succeq 0$:
$\log\det(I+M)\le r\log(1+\mathrm{tr}(M)/r)$.
\end{proof}

\textbf{A Gaussian surrogate for LoRA capacity.} Consider a Gaussian/noisy-channel surrogate at inference time,
\begin{equation}
Z_s \;=\; A\,\phi(X) + \eta,
\qquad
\eta\sim\mathcal{N}(0,\sigma^2 I_r),
\label{eq:lora_noise}
\end{equation}
where $\eta$ is a noise used to make mutual information finite for deterministic encoders. Let $\Sigma_\phi := \mathrm{Cov}(\phi(X))$. Then, by the Gaussian channel formula,
\begin{equation}
I(Z_s;\phi(X))
=
\frac{1}{2}\log\det\!\left(I_r + \frac{1}{\sigma^2}A\,\Sigma_\phi\,A^\top\right).
\label{eq:lora_capacity_exact}
\end{equation}
where $\phi(X)$ is a deterministic function of $X$ given frozen weights. Now using DPI $I(Z_s;X) \;\ge\; I(Z_s;\phi(X))$ and $  I(Z_s;Y^{(1:T)}) \le I(Z_s;X),$, which implies \eqref{eq:lora_capacity_exact} is a conservative, architecture-linked proxy for the shared budget.

\textbf{Rank- and power-controlled upper bound.}
Assume a power constraint $\mathrm{tr}(AA^\top)\le \kappa$ and denote by $\lambda_1(\Sigma_\phi)\ge\dots\ge\lambda_d(\Sigma_\phi)$
the eigenvalues of $\Sigma_\phi$. By eigenvalue majorization and log-det bounds, $I(Z_s;\phi(X))
\;\le\;
\frac{r}{2}\log\!\Big(1+\frac{\kappa}{r\,\sigma^2}\,\lambda_1(\Sigma_\phi)\Big)
\;\le\;
\frac{1}{2}\sum_{i=1}^r \log\!\Big(1+\frac{\kappa}{\sigma^2}\,\lambda_i(\Sigma_\phi)\Big).$

This shows that effective shared capacity grows at most linearly with rank $r$, and it saturates if $\Sigma_\phi$ is low-rank or if $\kappa$ is small.

\section{Proof of Theorem~\ref{thm:clustering_gap}}
\label{app:clustering_gap_proof}
\begin{proof}[Proof of Theorem~\ref{thm:clustering_gap}]
Let $\mathcal{P}=\{S_1,\dots,S_K\}$ of $\tasks$ be a fixed partition. For any globally shared representation $Z_s$ and any collection of per-cluster representations $\{Z_{S_k}\}_{k=1}^K$,
we apply CR identity once with $(Z,Y^{(1:T)})=(Z_s,Y^{(1:T)})$ and once for each cluster with
$(Z,Y^{(1:|S_k|)})=(Z_{S_k},Y^{(S_k)})$,
\begin{align}
\sum_{t=1}^T I(Z_s;Y^{(t)})
&=
I(Z_s;Y^{(1:T)}) + \TC(Y^{(1:T)}) - \TC(Y^{(1:T)}\mid Z_s),
\label{eq:cr_global}\\
\sum_{t\in S_k} I(Z_{S_k};Y^{(t)})
&=
I(Z_{S_k};Y^{(S_k)}) + \TC(Y^{(S_k)}) - \TC(Y^{(S_k)}\mid Z_{S_k}).
\label{eq:cr_cluster_k}
\end{align}
Summing \eqref{eq:cr_cluster_k} over $k=1,\dots,K$ gives,
\begin{equation}
\label{eq:cr_cluster_sum}
\sum_{t=1}^T I(Z_{S_{k(t)}};Y^{(t)})
=
\sum_{k=1}^K I(Z_{S_k};Y^{(S_k)})
+
\sum_{k=1}^K \TC(Y^{(S_k)})
-
\sum_{k=1}^K \TC(Y^{(S_k)}\mid Z_{S_k}).
\end{equation}
Subtracting \eqref{eq:cr_global} from \eqref{eq:cr_cluster_sum} yields the \emph{general} gain identity
\begin{align}
\label{eq:gain_general}
\sum_{t=1}^T I(Z_{S_{k(t)}};Y^{(t)}) - \sum_{t=1}^T I(Z_s;Y^{(t)})
&=
\underbrace{\Big(\TC(Y^{(1:T)}\mid Z_s) - \sum_{k=1}^K \TC(Y^{(S_k)}\mid Z_{S_k})\Big)}_{\text{difference in residual couplings}}
\nonumber\\
&\quad
+\underbrace{\Big(\sum_{k=1}^K \TC(Y^{(S_k)}) - \TC(Y^{(1:T)})\Big)}_{\text{redundancy term}}
\nonumber\\
&\quad
+\underbrace{\Big(\sum_{k=1}^K I(Z_{S_k};Y^{(S_k)}) - I(Z_s;Y^{(1:T)})\Big)}_{\text{joint-predictive term}}.
\end{align}

Now note that by Definition~\ref{def:tc_between},
\[
\sum_{k=1}^K \TC(Y^{(S_k)}) - \TC(Y^{(1:T)}) = -\,\TCbetween(\mathcal{P}).
\]
Also define $\Dsh:=\TC(Y^{(1:T)}\mid Z_s)$ and $\Dcl{k}:=\TC(Y^{(S_k)}\mid Z_{S_k})$.
Plugging these into \eqref{eq:gain_general} gives
\begin{equation}
\label{eq:gain_with_joint_term}
\sum_{t=1}^T I(Z_{S_{k(t)}};Y^{(t)}) - \sum_{t=1}^T I(Z_s;Y^{(t)})
=
\Big(\Dsh - \sum_{k=1}^K \Dcl{k}\Big) \;-\; \TCbetween(\mathcal{P})
\;+\;
\Big(\sum_{k=1}^K I(Z_{S_k};Y^{(S_k)}) - I(Z_s;Y^{(1:T)})\Big).
\end{equation}

\cref{thm:clustering_gap} is stated for $Z_s^\star$ and $\{Z_{S_k}^\star\}$ that are capacity optimal under budgets $\sum_k C_k=C$. In the capacity constrained case the tightness of bound is when the joint-predictive term saturates its information budget,
\begin{equation}
\label{eq:saturation_assumption}
I(Z_s^\star;Y^{(1:T)}) = C
\quad\text{and}\quad
I(Z_{S_k}^\star;Y^{(S_k)}) = C_k\;\;\forall k,
\end{equation}
which holds in the Gaussian linear channel model when the encoder family can represent the optimal channel for each budget, and more generally whenever the optimum lies on the active constraint boundary. Under \eqref{eq:saturation_assumption},
\[
\sum_{k=1}^K I(Z_{S_k}^\star;Y^{(S_k)}) - I(Z_s^\star;Y^{(1:T)})
=
\sum_{k=1}^K C_k - C
=0.
\]
Therefore, \eqref{eq:gain_with_joint_term} reduces exactly to
\[
\sum_{t=1}^T I(Z_{S_{k(t)}}^\star;Y^{(t)}) - \sum_{t=1}^T I(Z_s^\star;Y^{(t)})
=
\Big(\Dsh - \sum_{k=1}^K \Dcl{k}\Big) \;-\; \TCbetween(\mathcal{P}),
\].
\end{proof}
The clustered setting is strictly better than global sharing iff the right-hand side is strictly positive $\Dsh - \sum_{k=1}^K \Dcl{k} \;>\; \TCbetween(\mathcal{P}).$  Without the saturation condition the exact gain identity consists of an additional joint-predictive difference term. In our empirical setup fixed-rank LoRA budget is typically budget active so that \cref{eq:saturation_assumption} is a good approximation, and the measured effect is dominated by the residual coupling and redundancy terms.
\section{Proof of Theorem~\ref{thm:grad_tc_bridge}}
\label{app:grad_tc_proof}

\begin{proof}[Proof of Theorem~\ref{thm:grad_tc_bridge}]
{
Using the squared-loss for a shared linear probe $\theta\in\mathbb{R}^d$, $\mathcal{L}_t(\theta)
=\frac12\,\mathbb{E}\big[(\theta^\top X - Y^{(t)})^2\big].$ Taking the derivative w.r.t $\theta$,
\[
\nabla_\theta \mathcal{L}_t(\theta)
=
\mathbb{E}\big[(\theta^\top X - Y^{(t)})X\big]
=
\Sigma\theta - \mathbb{E}[Y^{(t)}X].
\]
Under the model $Y^{(t)} = w_t^\top X + \varepsilon_t$ with $\varepsilon_t \perp X$ and $\mathbb{E}[\varepsilon_t]=0$,
\[
\mathbb{E}[Y^{(t)}X]
=
\mathbb{E}[(w_t^\top X)X]
=
\Sigma w_t.
\]
Hence, at initialization $\theta_0=0$,
\[
g_t \;=\; \nabla_\theta \mathcal{L}_t(\theta_0) = -\Sigma w_t.
\]
Therefore the raw population gradient cosine similarity is
\[
G^{\mathrm{raw}}_{ts}
=
\cos(g_t,g_s)
=
\frac{g_t^\top g_s}{\|g_t\|\,\|g_s\|}
=
\frac{w_t^\top \Sigma^2 w_s}{\|\Sigma w_t\|\,\|\Sigma w_s\|},
\]
which establishes the raw-gradient expression.

Let $\tilde X = \Sigma^{-1/2}X$, so $\tilde X\sim \mathcal{N}(0,I)$, and let us define $\tilde w_t = \Sigma^{1/2}w_t$. Then $Y^{(t)}=\tilde w_t^\top \tilde X + \varepsilon_t$ and
\[
\mathrm{Cov}(Y^{(t)},Y^{(s)})=\tilde w_t^\top \tilde w_s.
\]
Moreover, the natural gradient, equivalently the gradient at $\theta_0=0$ in the whitened coordinates, equals $-\tilde w_t$, so the whitened-gradient cosine is
\[
G^{\mathrm{nat}}_{ts}=\frac{\tilde w_t^\top \tilde w_s}{\|\tilde w_t\|\,\|\tilde w_s\|}.
\]
Since the denominators in the label correlation
\[
\rho_{ts}
=
\frac{\mathrm{Cov}(Y^{(t)},Y^{(s)})}{\sqrt{\mathrm{Var}(Y^{(t)})\,\mathrm{Var}(Y^{(s)})}}
\]
are strictly positive, $\mathrm{sign}(\rho_{ts})=\mathrm{sign}(\mathrm{Cov}(Y^{(t)},Y^{(s)}))=\mathrm{sign}(\tilde w_t^\top \tilde w_s)=\mathrm{sign}(G^{\mathrm{nat}}_{ts})$.

\textbf{Ordering under matched noise and normalized signal.} Assume matched noise $\sigma_t^2=\sigma^2$ for all $t$ and equal signal power
$\tilde w_t^\top \tilde w_t=\alpha^2$ for all $t$ (equivalently $w_t^\top \Sigma w_t$ constant across tasks). Then
\[
\mathrm{Var}(Y^{(t)})=\alpha^2+\sigma^2 \quad\text{for all } t,
\]
so the correlation simplifies to
\[
\rho_{ts}=\frac{\tilde w_t^\top \tilde w_s}{\alpha^2+\sigma^2}.
\]
Thus, for any two pairs $(t,s)$ and $(t',s')$,
\[
|\rho_{ts}| > |\rho_{t's'}|
\quad\Longleftrightarrow\quad
|\tilde w_t^\top \tilde w_s| > |\tilde w_{t'}^\top \tilde w_{s'}|.
\]
Under the same equal-norm condition, the cosine similarity satisfies
\[
|G^{\mathrm{nat}}_{ts}| = \frac{|\tilde w_t^\top \tilde w_s|}{\|\tilde w_t\|\,\|\tilde w_s\|}
= \frac{|\tilde w_t^\top \tilde w_s|}{\alpha^2}.
\]
Hence $|\rho_{ts}|$ and $|G^{\mathrm{nat}}_{ts}|$ induce the same ordering over pairs, proving (ii).
(If task signal powers differ, $G^{\mathrm{nat}}_{ts}$ still tracks the numerator but correlations are additionally rescaled by the marginal variances; the equal-power assumption isolates the redundancy structure.)

\textbf{Total correlation of Gaussian labels and clustering equivalence.} Let $Y=(Y^{(1)},\dots,Y^{(T)})$ be jointly Gaussian with covariance $\Sigma_Y$ and diagonal marginal variances $D=\mathrm{diag}(\sqrt{\Sigma_{Y,11}},\dots,\sqrt{\Sigma_{Y,TT}})$. Then $\Sigma_Y = D R D$, where $R$ is the correlation matrix. For a multivariate Gaussian,
\[
\TC(Y)=\sum_{t=1}^T H(Y^{(t)}) - H(Y)
=
\frac12\log\frac{\prod_{t=1}^T \Sigma_{Y,tt}}{\det(\Sigma_Y)}.
\]
Using $\det(\Sigma_Y)=\det(D)^2\det(R)=\big(\prod_t \Sigma_{Y,tt}\big)\det(R)$ yields
\[
\TC(Y) = -\frac12\log\det(R).
\]
Thus $R$ fully determines $\TC(Y)$. Under conditions in (ii) $G^{\mathrm{nat}}_{ts}$ is a strictly monotone rescaling of $\rho_{ts}$ (and hence of $R_{ts}$) on $[0,1]$. Similarly for absolute similarities when using $|\cdot|$. Therefore any clustering rule whose decisions depend only on the ordering of pairwise similarities produces the same partition when applied to $G^{\mathrm{nat}}$ as when applied to $R$. The raw-gradient cosine $G^{\mathrm{raw}}$ coincides with this quantity after whitening or when $\Sigma$ is isotropic; otherwise the difference is governed by the anisotropy of $\Sigma$.
This proves (iii).
}
\end{proof}

\section{Experimental setup and implementation details}
\label{sec:supp-setup}
\subsection{Datasets and evaluation metrics}
\textbf{GoEmotions.} We select the twelve most frequent labels in the training split and treat each as a binary task. For evaluation we compute AUROC per label on the validation split and report the macro-average across all tasks. We also report per-label AUC in~\cref{sec:supp-peremotion}.

\textbf{GLUE8.} We evaluate on SST-2, MRPC, RTE, QNLI, QQP, MNLI (matched), CoLA, and STS-B. We report standard metrics, accuracy for SST-2/RTE/QNLI/MNLI, $(\mathrm{Acc}+\mathrm{F1})/2$ for MRPC/QQP and Matthews correlation coefficient (MCC) for CoLA, and $(\rho_{\mathrm{Pearson}}+\rho_{\mathrm{Spearman}})/2$ for STS-B. We report the average of task scores as the GLUE8 score.

\subsection{Models and parameter-efficient fine-tuning}
All experiments use \texttt{bert-base-uncased} as the frozen encoder.
We attach LoRA adapters to the attention query/value projections and only LoRA parameters and task heads are trainable. For LoRA we use $\alpha=16$ with a dropout probability of $0.05$. 

We control parameterization through a total rank budget $r_{\mathrm{tot}}$.
In the shared setting, the single adapter has rank $r_{\mathrm{tot}}$. In clustered routing with $K$ adapters, we allocate ranks across adapters such that $\sum_{k=1}^K r_k = r_{\mathrm{tot}}$ with $r_k \ge 1$. For GLUE8 we use proportional rank allocation (larger clusters receive more rank), while GoEmotions uses fixed-sum allocation consistent with the split runner.

\textbf{Model training.} We train with AdamW, FP16 enabled, linear warmup ($6\%$ of total steps), and gradient clipping (norm $1.0$). We use batch size of $16$ and steps-per-epoch $1400$ for $3$ epochs. We use weight decay on task heads but set it to 0 for LoRA weights. This is done to avoid decoupled weight decay shrinking inactive adapters in multi-adapter setting.

\subsection{Similarity estimation and clustering}
We first partition tasks into $K$ clusters and use one adapter per cluster with ranks $\{r_k\}$ such that $\sum_k r_k=R$ and $r_k\ge 1$. We estimate task similarity without training full adapters by attaching a probe LoRA adapter of rank $1$, denoted $a_{\mathrm{probe}}$, and computing per-task gradient vectors,
\[
g_t := \frac{1}{B}\sum_{b=1}^B \nabla_{\theta_{\mathrm{probe}}}\,\ell_t(\theta_{\mathrm{probe}}; \text{batch }b),
\qquad
\tilde g_t := \frac{g_t}{\|g_t\|_2}.
\]
where for similarity we use cosine $s_{ij}=\langle \tilde g_i,\tilde g_j\rangle$. We use agglomerative clustering on the distance matrix $D_{ij}=1-s_{ij}$ with average linkage and fixed $K$.

For GoEmotions we use probe rank 1, 5 similarity seeds, and 20 gradient-accumulation steps per task, and for GLUE8 we use 30 steps per task. We warm-start task heads for 200 steps before computing gradients to stabilize the probe. Given the mean similarity matrix, we compute an average-linkage agglomerative clustering and select $K$ using a gap criterion constrained to $K \in [2,6]$. 

{
\subsection{Clustering-method sensitivity}
\label{sec:supp-clustering-sensitivity}
To test sensitivity to the clustering design, we sweep the number of clusters $K\in\{2,3,4,5,6\}$ and compare average-linkage, complete-linkage, and spectral clustering at ranks $8$ and $12$. Complete linkage gives the best AUC at the strongest setting ($K=6$, $r=8$), while average linkage tracks it closely and obtains the lowest residual-coupling value at $K=6$, $r=12$. Across algorithms, $\widehat{\Delta}$ generally decreases as $K$ grows, consistent with the prediction that more specialized routes reduce residual conflict.

\begin{table}[t]
\centering
\caption{Clustering sensitivity on GoEmotions: mean $\pm$ std over five random seeds.}
\label{tab:supp_clustering_sensitivity}
\begin{tabular}{lrrrr}
\toprule
Algorithm & K & Rank & avg\_auc & $\widehat{\Delta}$ \\
\midrule
avg & 2 & 8 & 0.6950 $\pm$ 0.0602 & 0.2331 $\pm$ 0.0286 \\
avg & 2 & 12 & 0.7011 $\pm$ 0.0729 & 0.2305 $\pm$ 0.0298 \\
avg & 3 & 8 & 0.7149 $\pm$ 0.0368 & 0.2229 $\pm$ 0.0145 \\
avg & 3 & 12 & 0.7021 $\pm$ 0.0559 & 0.2088 $\pm$ 0.0124 \\
avg & 4 & 8 & 0.8004 $\pm$ 0.0087 & 0.2341 $\pm$ 0.0089 \\
avg & 4 & 12 & 0.7817 $\pm$ 0.0174 & 0.2155 $\pm$ 0.0102 \\
avg & 5 & 8 & 0.8065 $\pm$ 0.0182 & 0.2290 $\pm$ 0.0093 \\
avg & 5 & 12 & 0.8081 $\pm$ 0.0228 & 0.2065 $\pm$ 0.0086 \\
avg & 6 & 8 & 0.8133 $\pm$ 0.0177 & 0.2074 $\pm$ 0.0091 \\
avg & 6 & 12 & 0.8101 $\pm$ 0.0208 & 0.2013 $\pm$ 0.0095 \\
complete & 2 & 8 & 0.7005 $\pm$ 0.0542 & 0.2340 $\pm$ 0.0271 \\
complete & 2 & 12 & 0.7092 $\pm$ 0.0575 & 0.2298 $\pm$ 0.0260 \\
complete & 3 & 8 & 0.8057 $\pm$ 0.0172 & 0.2437 $\pm$ 0.0102 \\
complete & 3 & 12 & 0.7945 $\pm$ 0.0180 & 0.2295 $\pm$ 0.0099 \\
complete & 4 & 8 & 0.7900 $\pm$ 0.0187 & 0.2322 $\pm$ 0.0103 \\
complete & 4 & 12 & 0.7882 $\pm$ 0.0141 & 0.2095 $\pm$ 0.0094 \\
complete & 5 & 8 & 0.8144 $\pm$ 0.0144 & 0.2272 $\pm$ 0.0095 \\
complete & 5 & 12 & 0.8067 $\pm$ 0.0208 & 0.2087 $\pm$ 0.0090 \\
complete & 6 & 8 & 0.8392 $\pm$ 0.0091 & 0.2159 $\pm$ 0.0083 \\
complete & 6 & 12 & 0.8355 $\pm$ 0.0117 & 0.2028 $\pm$ 0.0089 \\
spectral & 2 & 8 & 0.7598 $\pm$ 0.0296 & 0.2688 $\pm$ 0.0181 \\
spectral & 2 & 12 & 0.7632 $\pm$ 0.0271 & 0.2632 $\pm$ 0.0167 \\
spectral & 3 & 8 & 0.7799 $\pm$ 0.0186 & 0.2528 $\pm$ 0.0124 \\
spectral & 3 & 12 & 0.7546 $\pm$ 0.0474 & 0.2400 $\pm$ 0.0146 \\
spectral & 4 & 8 & 0.7824 $\pm$ 0.0139 & 0.2173 $\pm$ 0.0098 \\
spectral & 4 & 12 & 0.7899 $\pm$ 0.0076 & 0.2296 $\pm$ 0.0093 \\
spectral & 5 & 8 & 0.8124 $\pm$ 0.0119 & 0.2040 $\pm$ 0.0086 \\
spectral & 5 & 12 & 0.8057 $\pm$ 0.0094 & 0.2144 $\pm$ 0.0089 \\
spectral & 6 & 8 & 0.8033 $\pm$ 0.0083 & 0.2154 $\pm$ 0.0084 \\
spectral & 6 & 12 & 0.8008 $\pm$ 0.0127 & 0.2062 $\pm$ 0.0088 \\
\bottomrule
\end{tabular}
\end{table}
}

\subsection{Additional results}
\label{app:exp:results}
\section{Linear--Gaussian verification of the CR inequality}
\label{sec:supp-linear-gaussian}
We verify the CR inequality in a controlled linear--Gaussian setting where all quantities are computable in closed form. We sample $X\sim\mathcal{N}(0,I_d)$ ($d=20$) and define $T$ task labels
\begin{equation}
Y^{(t)} = a_t^\top X + \varepsilon_t,\qquad \varepsilon_t\sim\mathcal{N}(0,\sigma_\varepsilon^2),
\end{equation}
with independent $\{\varepsilon_t\}_{t=1}^T$ and $\sigma_\varepsilon=0.1$.
Let $A\in\mathbb{R}^{T\times d}$ stack $a_t^\top$ as rows. By varying the alignment of $\{a_t\}$ we sweep label redundancy. If $a_i \approx a_j$ then labels share signal and $\TC(Y^{1:T})$ increases, while near-orthogonality yields $\TC(Y^{1:T})\approx 0$.

Representations are produced by a rank-$r$ linear encoder
\begin{equation}
Z = W^\top X + \eta,\qquad \eta\sim\mathcal{N}(0,\sigma_\eta^2 I_r),
\end{equation}
with $\sigma_\eta=0.5$.
If $\mathrm{row}(W)\supseteq \mathrm{row}(A)$, then $A X$ is a linear function of $Z$, i.e.\ there exists $M$ such that
$AX = MZ$, and hence
\begin{equation}
Y = AX + \varepsilon = MZ + \varepsilon.
\end{equation}
Because $\varepsilon$ has independent coordinates, the tasks are conditionally independent given $Z$, implying $\TC(Y^{1:T}\mid Z)=0$, so the CR inequality becomes tight.

When the representation is rank-limited, the residual coupling term is exactly the conditional total correlation $\TC(Y^{1:T}\mid Z)$, which quantifies the remaining dependence among tasks after conditioning on the representation. We compute $\TC(Y^{1:T})$ and $\TC(Y^{1:T}\mid Z)$ from Gaussian covariance identities (log-determinant forms), and compute $I(Z;Y^{(t)})$ via conditional-variance formulas. Figure~\ref{fig:cr-experiments} reports sweeps across redundancy level, comparing optimal vs random vs misaligned encoders, number of tasks at low redundancy to illustrate negative transfer in average per-task information, and rank $r$ to show saturation once the encoder spans the task subspace.

\begin{figure}[t]
  \centering
  \begin{minipage}{0.32\textwidth}
    \centering
    \includegraphics[width=\linewidth]{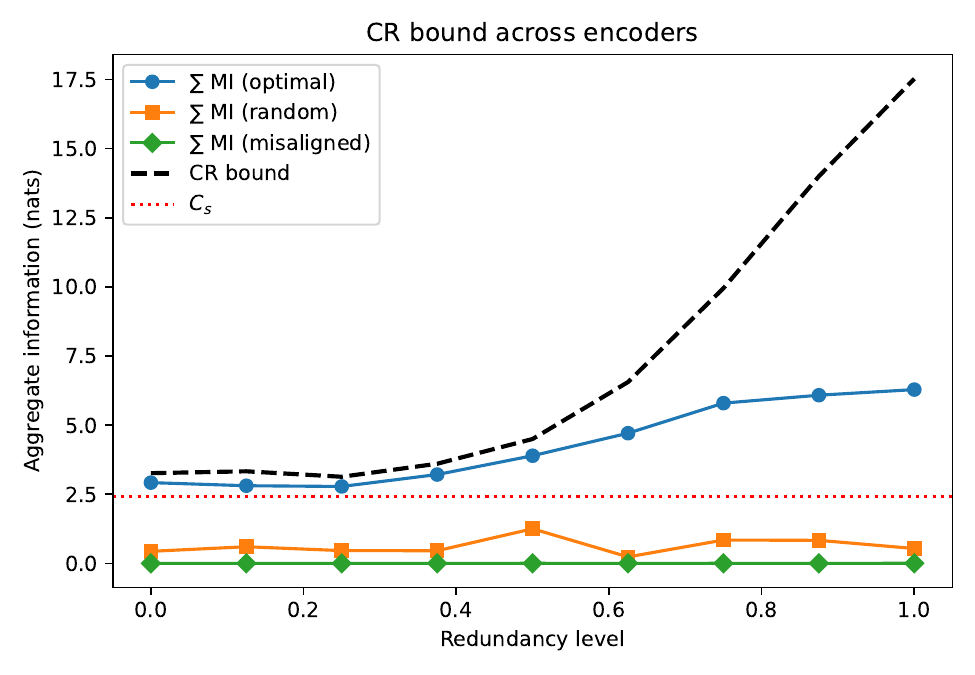}
  \end{minipage}\hfill
  \begin{minipage}{0.32\textwidth}
    \centering
    \includegraphics[width=\linewidth]{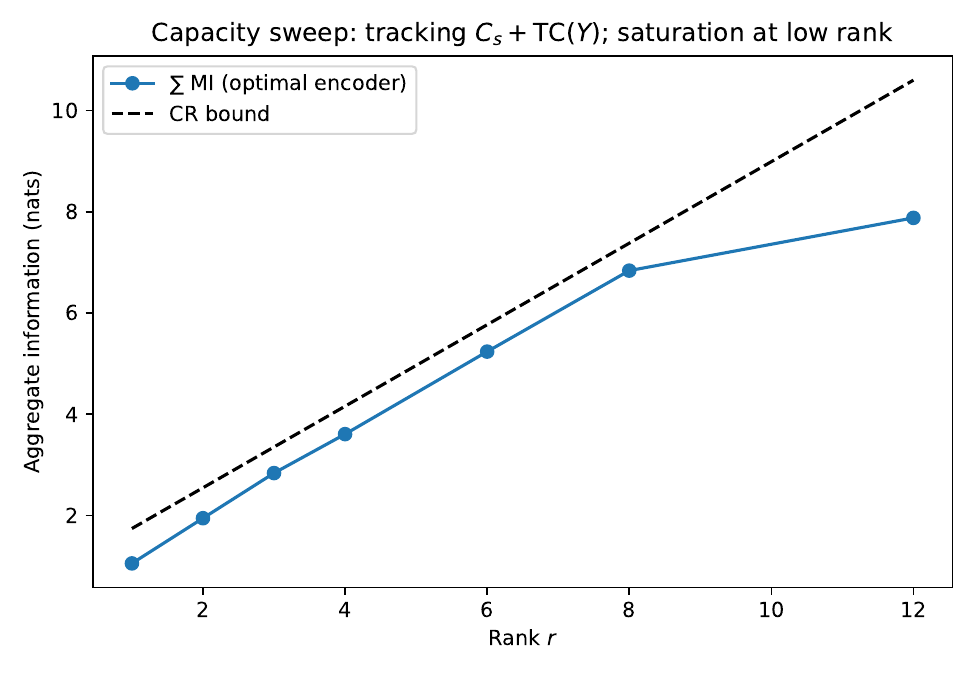}
  \end{minipage}\hfill
  \begin{minipage}{0.32\textwidth}
    \centering
    \includegraphics[width=\linewidth]{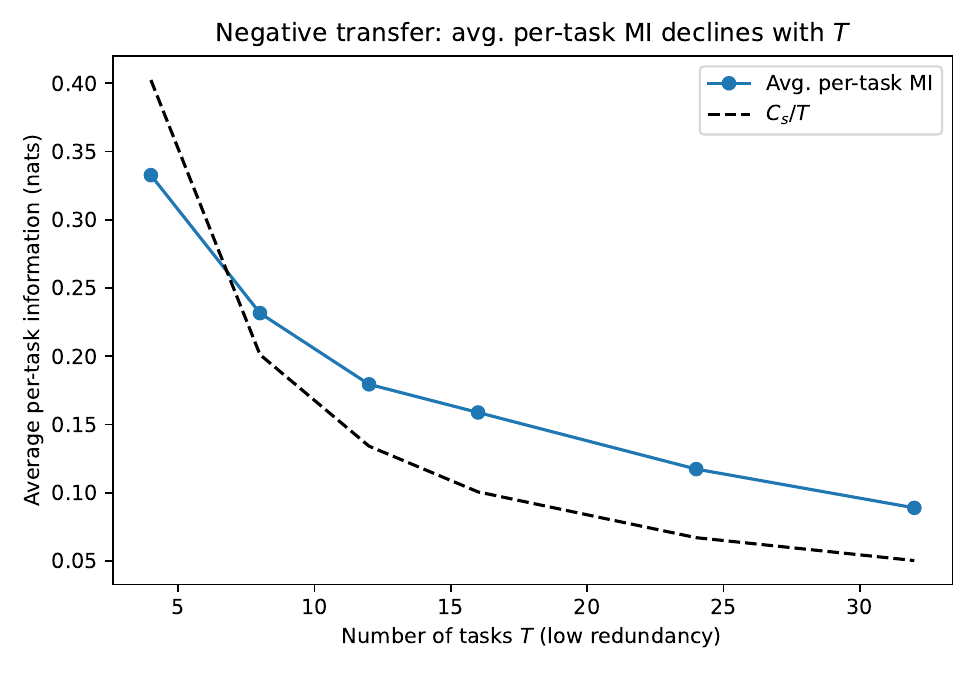}
  \end{minipage}
  \caption{Left: CR holds across encoder types, with tightness for the optimal encoder and slack for random or misaligned ones. Middle: Capacity sweep shows aggregate information tracks $C_s + \mathrm{TC}(Y)$, saturating at low rank. Right: Negative transfer demonstrated, as average per-task information declines when the number of tasks increases under fixed capacity and low redundancy.}
  \label{fig:cr-experiments}
\end{figure}

\subsection{GoEmotions: clustered vs random partitions (available baseline)}
\label{sec:supp-random}
\begin{figure}
    \centering
    \includegraphics[width=0.5\linewidth]{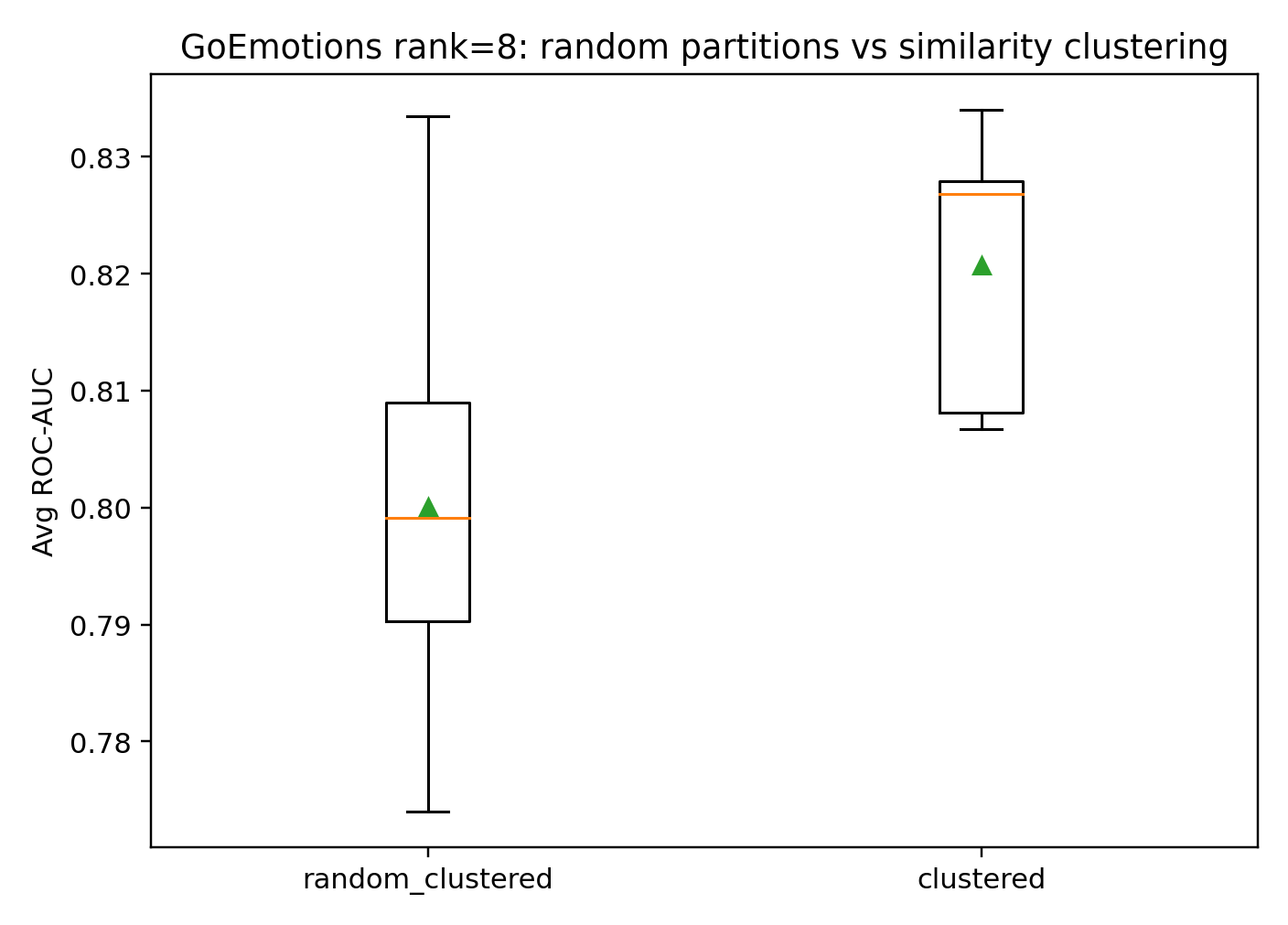}
  \caption{\textbf{GoEmotions (rank=8): clustered vs random partitions (available baseline).}
  Distribution of macro-average ROC-AUC over 45 random partitions (seed 41) compared to similarity-based clustering.
  We report this as a sanity check rather than a universal significance claim because random baselines are only available at rank 8 in the split-full runs.}
  \label{fig:supp_go_random}
\end{figure}

\begin{table}[t]
\centering
\caption{Clustered vs random-clustered (empirical p-value and effect size).}
\label{tab:goemotions_rank8_clustered_vs_random}
\begin{tabular}{lrrrrrr}
\toprule
Rank & Clustered\_mean & Random\_mean & p\_emp & Cohen\_d & N\_clustered & N\_random \\
\midrule
8.0000 & 0.821 & 0.800 & 0.0889 & 1.499 & 5 & 45 \\
\bottomrule
\end{tabular}
\end{table}

\subsection{Per-emotion gains at rank 12}
\label{sec:supp-peremotion}
\begin{figure}[t]
  \centering
  \includegraphics[width=\linewidth]{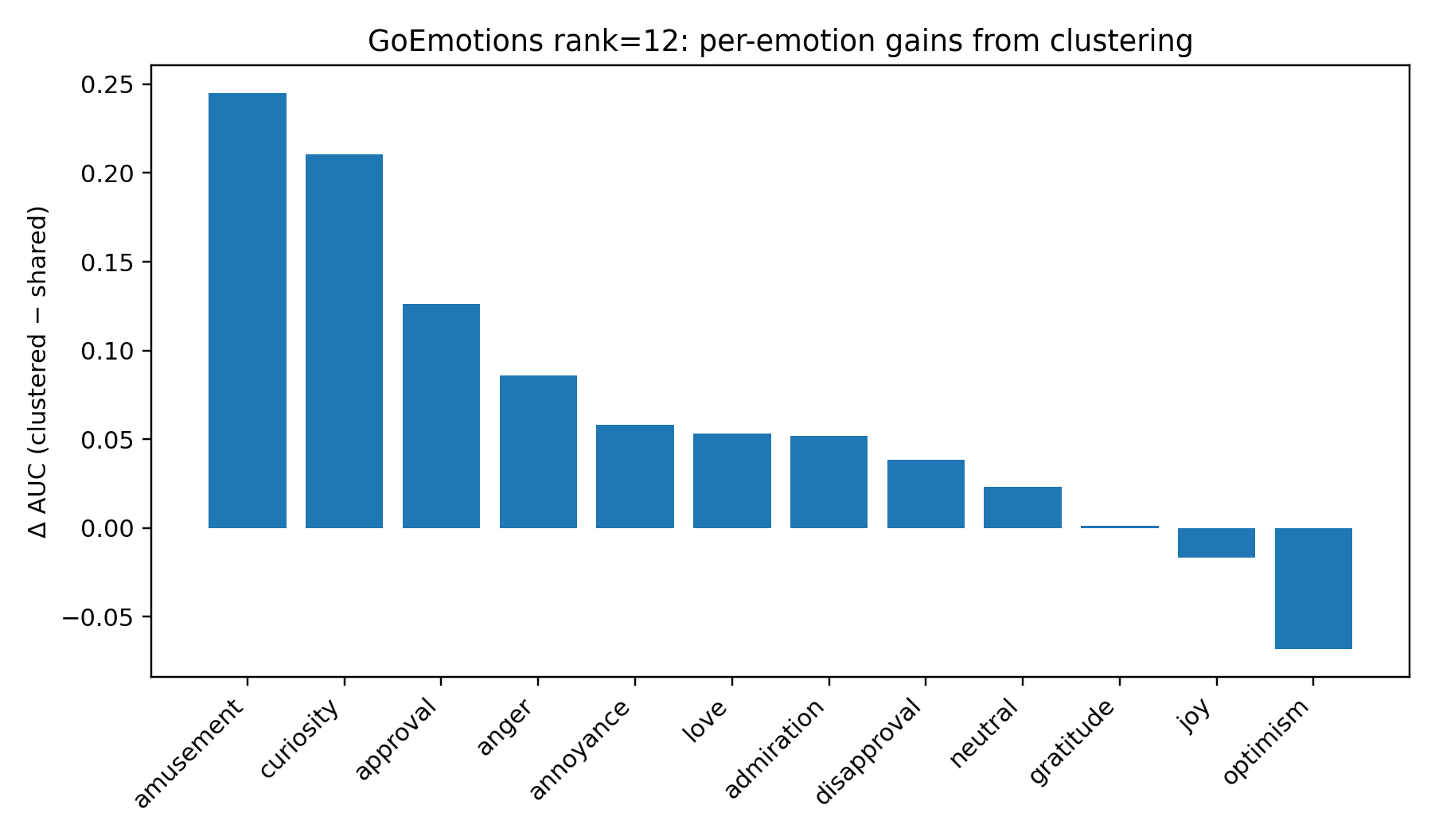}
  \caption{\textbf{GoEmotions per-label effect of clustering.} Difference in AUROC per emotion (clustered minus shared). Improvement in performance is on several harder labels (e.g., amusement, curiosity), while some labels change minimally. Error bars are reported in Table~\ref{tab:goemotions_rank12_per_emotion}.}
  \label{fig:supp_go_peremotion}
\end{figure}

\begin{table}[t]
\centering
\caption{Per-emotion AUROC for shared vs clustered (mean $\pm$ CI95) and their difference.}
\label{tab:goemotions_rank12_per_emotion}
\begin{tabular}{lrrrrr}
\toprule
Task & Shared\_mean & Shared\_ci95 & Clustered\_mean & Clustered\_ci95 & Delta\_mean \\
\midrule
amusement & 0.564 & 0.093 & 0.809 & 0.021 & 0.245 \\
curiosity & 0.675 & 0.094 & 0.886 & 0.017 & 0.211 \\
approval & 0.494 & 0.047 & 0.620 & 0.045 & 0.126 \\
anger & 0.753 & 0.074 & 0.839 & 0.033 & 0.086 \\
annoyance & 0.727 & 0.064 & 0.785 & 0.043 & 0.058 \\
love & 0.890 & 0.026 & 0.944 & 0.011 & 0.053 \\
admiration & 0.848 & 0.011 & 0.900 & 0.024 & 0.052 \\
disapproval & 0.704 & 0.047 & 0.742 & 0.040 & 0.038 \\
neutral & 0.761 & 0.012 & 0.784 & 0.009 & 0.023 \\
gratitude & 0.982 & 0.006 & 0.984 & 0.004 & 0.002 \\
joy & 0.800 & 0.008 & 0.784 & 0.055 & -0.016 \\
optimism & 0.653 & 0.041 & 0.585 & 0.136 & -0.068 \\
\bottomrule
\end{tabular}
\end{table}

\subsection{GoEmotions baselines}
\label{sec:supp-legacy}
We include additional baselines from the earlier harness using PCGrad, GradNorm, single-task oracle. Here, the single-task baseline is an oracle upper bound and is not compute-matched.

\begin{table}[t]
\centering
\caption{Additional baselines across available ranks (mean $\pm$ CI95).}
\label{tab:goemotions_legacy_baselines}
\begin{tabular}{lrrrr}
\toprule
Rank & Regime & N & Avg\_auc\_mean & Avg\_auc\_ci95 \\
\midrule
4 & shared & 5 & 0.725 & 0.032 \\
4 & clustered & 0 & -- & -- \\
4 & private & 0 & -- & -- \\
4 & single\_task & 0 & -- & -- \\
4 & pcgrad\_shared & 5 & 0.903 & 0.003 \\
4 & gradnorm\_shared & 5 & 0.894 & 0.001 \\
4 & random\_clustered & 0 & -- & -- \\
8 & shared & 5 & 0.702 & 0.081 \\
8 & clustered & 0 & -- & -- \\
8 & private & 0 & -- & -- \\
8 & single\_task & 0 & -- & -- \\
8 & pcgrad\_shared & 5 & 0.904 & 0.003 \\
8 & gradnorm\_shared & 5 & 0.901 & 0.002 \\
8 & random\_clustered & 0 & -- & -- \\
12 & shared & 5 & 0.738 & 0.012 \\
12 & clustered & 5 & 0.827 & 0.005 \\
12 & private & 5 & 0.830 & 0.003 \\
12 & single\_task & 5 & 0.929 & 0.000 \\
12 & pcgrad\_shared & 0 & -- & -- \\
12 & gradnorm\_shared & 0 & -- & -- \\
12 & random\_clustered & 55 & 0.824 & 0.003 \\
\bottomrule
\end{tabular}
\end{table}

\subsection{Similarity heatmaps}
\label{sec:supp-heatmaps}
\begin{figure}[t]
  \centering
  \includegraphics[width=\linewidth]{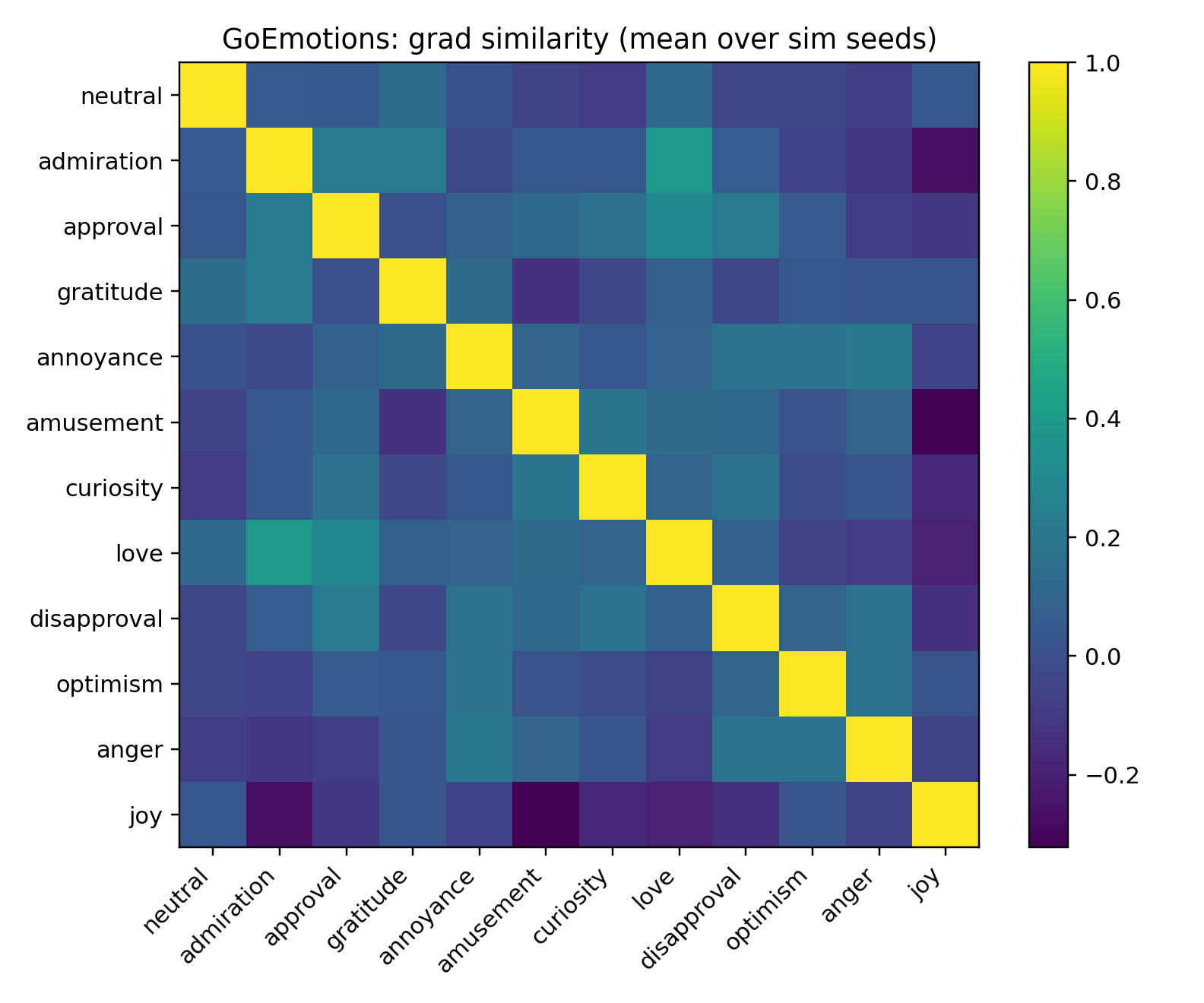}
  \caption{\textbf{Gradient similarity heatmap for GoEmotions.}
  Mean cosine similarity of probe gradients across five similarity seeds.
  The resulting clusters group semantically related labels.}
  \label{fig:supp_go_heatmap}
\end{figure}

\begin{figure}[t]
  \centering
  \includegraphics[width=\linewidth]{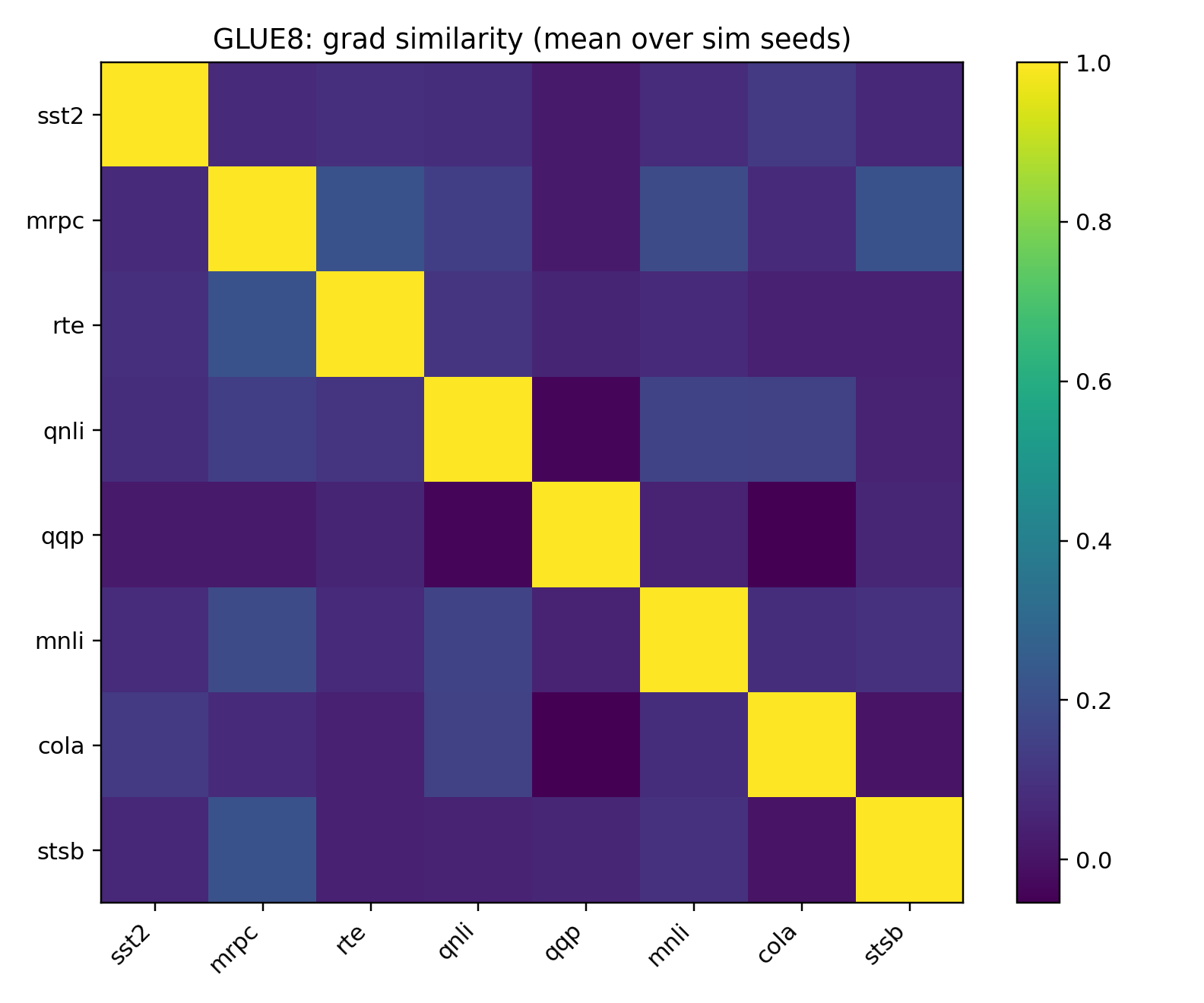}
  \caption{\textbf{Gradient similarity heatmap for GLUE}
  Mean cosine similarity of probe gradients across 5 similarity seeds.
  The clustering isolates sentiment (SST-2) and acceptability (CoLA) while grouping entailment/paraphrase-style tasks (e.g., MRPC--RTE and QNLI--MNLI).}
  \label{fig:supp_glue_heatmap}
\end{figure}

We make a note that clustered/private settings require $r_{\mathrm{tot}} \ge K$ (and private additionally requires $r_{\mathrm{tot}} \ge T$).For GLUE8 and GoEmotions, $K=6$ in our clustering, hence routing is only feasible for rank budgets $\ge 8$ in the plotted ranges.

\end{document}